\documentclass[11pt,onecolumn]{metastyle}

\usepackage{amsmath}
\usepackage{amssymb}
\usepackage{mathtools}
\usepackage{amsthm}
\usepackage{tikz-cd}
\usepackage{enumitem}
\usepackage{graphicx}

\usepackage{multirow}

\usepackage{hyperref}

\theoremstyle{plain}
\newtheorem{theorem}{Theorem}[section]

\newtheorem{lemma}[theorem]{Lemma}

\theoremstyle{definition}

\newtheorem{example}[theorem]{Case Study}
\theoremstyle{remark}

\graphicspath{{plots/}}

\newcommand{\real}{\mathbb{R}}
\newcommand{\bfA}{\mathbf{A}}

\newcommand{\bfB}{\mathbf{B}}
\newcommand{\bfE}{\mathbf{E}}
\newcommand{\bfI}{\mathbf{I}}
\newcommand{\bfM}{\mathbf{M}}

\newcommand{\bfv}{\mathbf{v}}
\newcommand{\bfW}{\mathbf{W}}
\newcommand{\bfx}{\mathbf{x}}
\newcommand{\bfX}{\mathbf{X}}
\newcommand{\bfY}{\mathbf{Y}}
\newcommand{\loss}{\mathcal{L}}

\newcommand{\rank}{\mathrm{rank}}

\newcommand{\tr}{\mathrm{Tr}}
\newcommand{\fro}{\mathrm{F}}
\newcommand{\diag}{\mathrm{diag}}
\newcommand{\dd}{\mathsf{d}}
\newcommand{\dt}[1]{\frac{{\mathsf d} {#1}}{ {\mathsf d} t}}

\title{ANCRe: Adaptive Neural Connection Reassignment for Efficient Depth Scaling}

\author[1,*]{Yilang Zhang}
\author[2,*]{Bingcong Li}
\author[2]{Niao He}
\author[1]{Georgios B. Giannakis}
\affiliation[1]{University of Minnesota}
\affiliation[2]{ETH Zurich}
\contribution[*]{Equal contribution.}

\metadata[Emails]{\{zhan7453,georgios\}@umn.edu, \{bingcong.li,niao.he\}@inf.ethz.ch}

\begin{document}

\abstract{
Scaling network depth has been a central driver behind the success of modern foundation models, yet recent investigations suggest that deep layers are often underutilized. This paper revisits the default mechanism for deepening neural networks, namely residual connections, from an optimization perspective. Rigorous analysis proves that the layout of residual connections can fundamentally shape convergence behavior, and even induces an \emph{exponential} gap in convergence rates. Prompted by this insight, we introduce adaptive neural connection reassignment (ANCRe), a principled and lightweight framework that parameterizes and learns residual connectivities from the data. ANCRe adaptively reassigns residual connections with negligible computational and memory overhead ($<1\%$), while enabling more effective utilization of network depth. Extensive numerical tests across pre-training of large language models, diffusion models, and deep ResNets demonstrate consistently accelerated convergence, boosted performance, and enhanced depth efficiency over conventional residual connections.
}

\maketitle

\section{Introduction}
Foundation models have demonstrated remarkable success across a broad spectrum of domains and impactful applications. For instance, large language models (LLMs) exhibit nearly human-level proficiency in various tasks, including conversational interaction~\citep{GPT4}, automated code generation~\citep{Codex}, and complex mathematical reasoning~\citep{Minerva}. Diffusion models have revolutionized vision tasks by enabling high-fidelity and controllable image synthesis~\citep{DDPM,DDIM,score-matching}. Recent work has further extended foundation models to multimodal settings, where joint representations are learned from heterogeneous data modalities~\citep{CLIP,PaLM-E}. 

One of the key factors underlying these advances is the increasing capacity of modern backbone architectures, with a particularly noticeable trend toward greater depth. For instance, the Llama 3.1 family employs 32, 80, and 126 Transformer layers for its 8B, 70B, and 405B variants, respectively~\citep{llama3}. Likewise, Diffusion Transformers (DiTs) scale in depth from 12 (DiT-S) up to 28 layers (DiT-XL)~\citep{DiT}. Moreover,~\citep{depth-efficient} shows a clear positive correlation between depth and performance across 132 open-source LLMs (see their Figure 1). This trend is also supported by theories. It is proved in~\citep{telgarsky2015representation} that polynomially deep networks can express functions that would require exponential width in shallow ones. 

Despite the documented success of deep networks, scaling model depth can be \textit{less efficient} than it first appears. For example, ~\citep{depth-efficient} shows that skipping an early layer in the Llama 3.1 70B has a remarkably greater impact on the outputs of subsequent layers than omitting a deep layer, and that deeper layers often behave as near-identity mappings. Since the identity function is essentially available ``for free'', this suggests that late layers are highly underutilized. Complementary evidence also occurs in multimodal foundation models, where the most informative vision embeddings are frequently found in intermediate layers of the Perception Encoder rather than the final layer~\citep{perception-encoder}. Collectively, these observations reveal that the representational potential of depth is not fully exploited yet. 

Given that residual (skip) connections are the default strategy and dominant mechanism for scaling model depth, this work revisits their design to enable more efficient depth scaling. Residual connections were proposed in~\citep{srivastava2015highway,ResNet} to avoid vanishing and exploding gradients, and they have become almost universal across architectures. For example, the backbone architecture of LLMs, i.e., the Transformer~\citep{Transformer}, introduces residual connections around each self-attention and feedforward network modules. 
From an optimization perspective, residual connections are credited with smoothing the loss landscape, which facilitates training by improving the (local) condition number~\citep{li2018visualizing}. 

This work continues on the optimization perspective of residual connections, and demonstrates an intuitive yet often overlooked factor: \emph{where} residual connections are placed within a deep architecture, i.e., the residual \textit{topology}, can play a crucial role in optimization. We provide quantitative theory showing that different topologies can induce an \textit{exponential} gap in convergence for deep linear neural networks. This large gap motivates a principled redesign of residual layout. To this end, we term our approach adaptive neural connection reassignment (ANCRe\footnote{It coincides with “anchor” in French.}), which learns an optimal residual topology from data. Our method not only achieves a linear convergence rate for deep linear networks, but also integrates seamlessly into modern architectures including LLMs, DiTs, and ResNets with consistent empirical gains. In a nutshell, our contributions are as follows:
\begin{itemize}[leftmargin=0.4cm, topsep=0cm, itemsep=0.5\parskip, parsep=0.5\parskip]
    \item A theoretical characterization on the role of residual connection topology is established using deep linear neural networks, showing that different shortcut layouts can induce exponential gaps in convergence rates. 
    \item ANCRe is proposed to parameterize residual connections and learn a data-driven topology on the fly via appropriately normalized shortcut coefficients. It incurs negligible computational and memory overhead. 
    \item Extensive numerical evaluations are conducted under varying data modalities and network depths to systematically examine the efficiency of ANCRe. As an illustrative example, ANCRe achieves a 1.85$\times$ training speedup on LLaMA-1B over conventional residual topology. 
\end{itemize}

\section{Related work}
\label{sec:related-work}
\textbf{Residual connections.} 
Residual (skip) connections are a primary mechanism for scaling neural networks to greater depth~\citep{srivastava2015highway,ResNet,he2016identity}. 
They were rapidly popularized in computer vision, with variants such as~\citep{ReZero} and DenseNet~\citep{huang2017densely}, and have since become a standard component of CNN-based architectures; see e.g.,~\citep{zagoruyko2016wide,xie2017aggregated}.
In contrast, residual connections in large language models (LLMs) have remained relatively stable. Both Transformers and their recent variants adopt the same residual topology~\citep{Transformer,team2025gemma,llama,qwen3,DiT}, while certain shortcuts can be introduced to mitigate over-smoothing~\citep{NeuTRENO} or reduce KV cache size~\citep{ResFormer}. 
More recently, hyper-connections (HC)~\citep{HC} and manifold-constrained hyper-connections (mHC)~\citep{mHC} were proposed for foundation models as architectural alternatives.
Our work is orthogonal to these lines of research: while they focus on intra-layer designs, we emphasize the \textit{inter-layer topology}. 

\textbf{Understanding of residual connections.} 
Residual connections are known to stabilize training by mitigating vanishing and exploding gradients~\citep{haber2017stable}, and make gradients in deep networks less ``shattered'' (i.e., less like white noise)~\citep{balduzzi2017shattered}. Empirical visualizations of ResNets also suggest that residual connections lead to a smoother loss landscape~\citep{li2018visualizing}. 
Theoretical understandings are often acquired by contrasting deep linear neural networks~\citep{converge-analysis-LNN} with their residual counterparts. For example, the convergence of deep linear networks can degrade exponentially with depth~\citep{exp-converge-LNN}, whereas residual connections alleviate this slowdown~\citep{wu2019global}. Moreover, residual connections can relax network width requirements for global converging in certain regimes, as by comparing results in~\citep{du2019width,zou2020global}. The loss landscape of deep linear residual networks has also been studied in~\citep{hardt2016identity}. Our work enriches this line of results by showing that the \emph{topology} of residual connections can affect convergence \textit{exponentially}. More importantly, we translate this insight into a practical design that yields consistent improvements in modern architectures. Due to space limitation, other related work is deferred to Appendix~\ref{apdx:related-work}. 

\textbf{Notation.} Bold lowercase (capital) letters denote vectors (matrices); $\| \cdot \|$ and $\| \cdot \|_\fro$ stand for $\ell_2$- and Frobenius-norm.

\begin{figure}[t]
   \begin{center}
  \begin{subfigure}[t]{0.325\textwidth}
  	\centerline{\includegraphics[width=\textwidth]{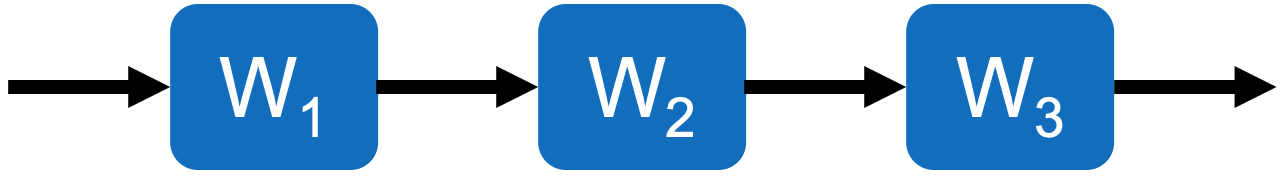}}
  	\caption{No residual connection}
  	\label{subfig:LNN-noskip}
  \end{subfigure}
  \begin{subfigure}[t]{0.325\textwidth}
  	\centerline{\includegraphics[width=\textwidth]{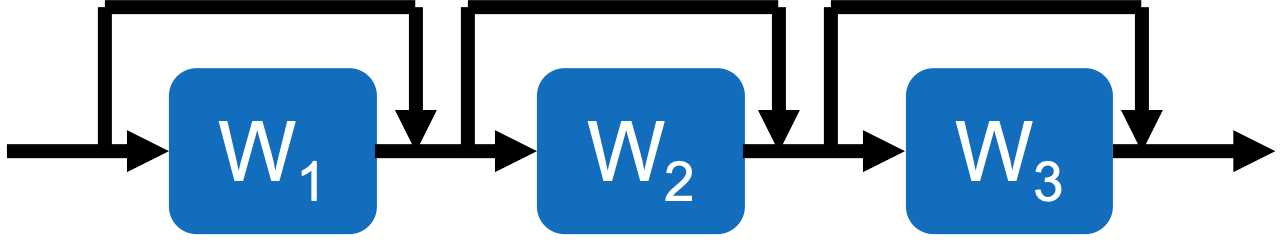}}
  	\caption{Cascaded residual connections}
  	\label{subfig:LNN-cascaded}
  \end{subfigure}
  \begin{subfigure}[t]{0.325\textwidth}
  	\centerline{\includegraphics[width=\textwidth]{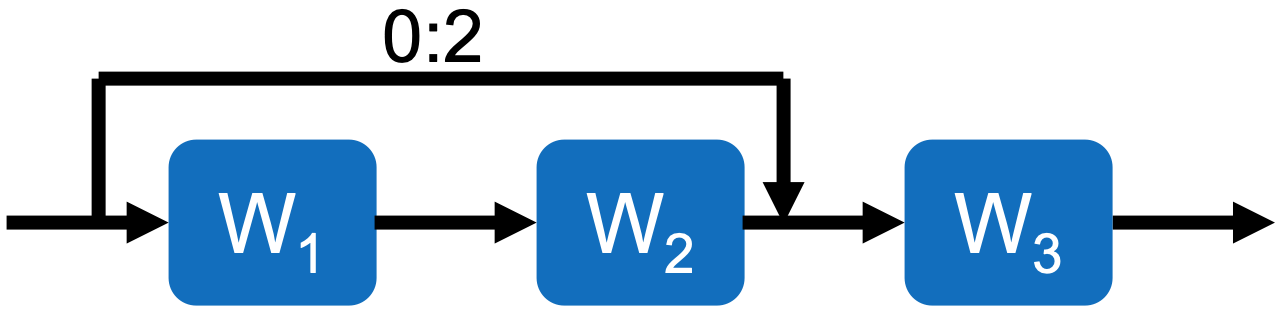}}
  	\caption{Residual connection 0:2}
  	\label{subfig:LNN-1skip}
  \end{subfigure}
  \caption{Visualization of linear neural network (LNN) of $K = 3$ layers.}
  \label{fig:LNN-visual}
  \end{center}
  \vspace{-.1cm}
\end{figure}

\section{Residual topology matters: a case study}
\label{sec:problem-state}
Popularized by ResNet~\citep{ResNet}, residual connections allow each layer $k$ to learn a residual mapping $r_k$ relative to its input $\bfx_k$; i.e., $g_k(\bfx_k) := r_k(\bfx_k) + \bfx_k$. By providing an identity shortcut, 
the loss landscape is more well-behaved~\citep{li2018visualizing}, 
thereby enabling training deep networks with hundreds of layers. 
This architecture has been extensively leveraged in foundation models~\citep{Transformer,ViT,CLIP} to train the network at scale.  
Residual connections are typically arranged in a cascaded structure as in Figure~\ref{subfig:LNN-cascaded}, with each shortcut bypassing a single layer or block. 

Despite the popularity of residual connections, their topology, i.e., the shortcut layout, is often fixed by default. This section revisits this simple design, and reveals a somewhat surprising message: in a theoretically convenient case study, \textit{topology alone can provably induce exponentially different optimization behaviors}.

\subsection{Linear neural networks with residual connections}
Consider a commonly adopted prototype model, deep linear neural network (LNN), $f(\bfx) := \prod_{k=1}^K \bfW_k \bfx = \bfW_K\ldots \bfW_2 \bfW_1 \bfx$ as sketched by Figure~\ref{subfig:LNN-noskip}, where $\bfW_k \in \real^{d \times d}$ denotes the learnable weights per layer $k$. This idealized model is often adopted for understanding the optimization and analyzing convergence of deep networks; see \citep{converge-analysis-LNN,exp-converge-LNN} and more in Section~\ref{sec:related-work}. In addition, consider also LNN augmented with residual connections. For brevity, we henceforth use $i$:$j$ to represent the shortcut bridging the outputs of layers $i$ and $j$, where $0 \le i < j \le K$, and $0$ represents the input of the first layer; see Figure~\ref{subfig:LNN-1skip} for an example. 

\begin{example}
\label{ex:LNN}
Given a $K$-layer LNN and input-output pair $\bfX, \bfY \in \real^{d \times n}$ ($n \ge d$), define regression objective
\begin{equation*}
	\min_{\{\bfW_k \}_k} \loss (\{ \bfW_k \}_{k=1}^K) := \frac{1}{2} \bigg\| \prod_{k=1}^K \bfW_k \bfX - \bfY \bigg\|_\fro^2.
\end{equation*}
Using standard gradient descent (GD) for training, Figure~\ref{subfig:vary-depth} outlines the loss evolution with depth $K = 2, 3, 4$, with vertical axis in log-scale. Without the aid of residual connection, the convergence slows down when the network grows deeper. Figure~\ref{subfig:3layer-vary-place} depicts the convergence behavior of a 3-layer LNN under different residual connection topologies. While the conventional cascaded layout accelerates the convergence relative to no connection, it is not an optimal choice in this setup. In contrast, incorporating a single residual connection 0:1 or 0:2 yields remarkably faster convergence, where the latter further exhibits an exponential improvement over the former. Figure~\ref{subfig:4layer-vary-place} extends the case study to a 4-layer LNN with 2 residual connections, and compares it with the optimal choice of a single connection 0:3. Notably, the model does not necessarily benefit from the additional residual connections, unless they are appropriately located. 
\end{example}

Case Study~\ref{ex:LNN} suggests that the configuration of residual connections can induce exponential discrepancies in convergence behaviors, and thus must be chosen cautiously. Nevertheless, there are currently neither rigorous theories supporting these observations, nor a principled approach for selecting an optimal topology. These two challenges call for an analytical investigation and numerical evaluations, which will be provided in the ensuing sections.

\begin{figure}[t]
   \begin{center}
  \begin{subfigure}[t]{0.33\textwidth}
  	\centerline{\includegraphics[width=\textwidth]{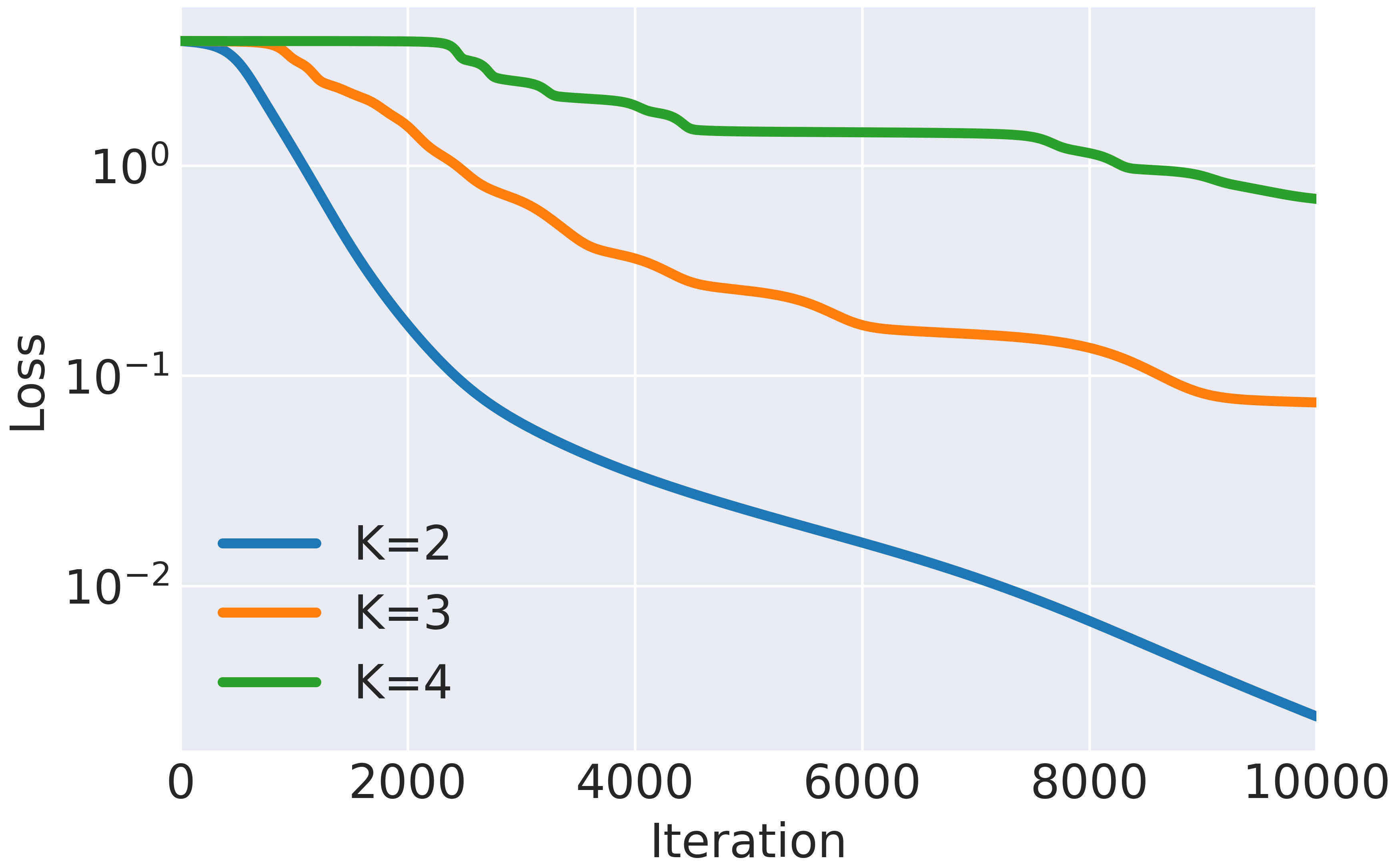}}
  	\caption{Varying depth $K$}
  	\label{subfig:vary-depth}
  \end{subfigure}
  \begin{subfigure}[t]{0.32\textwidth}
  	\centerline{\includegraphics[width=\textwidth]{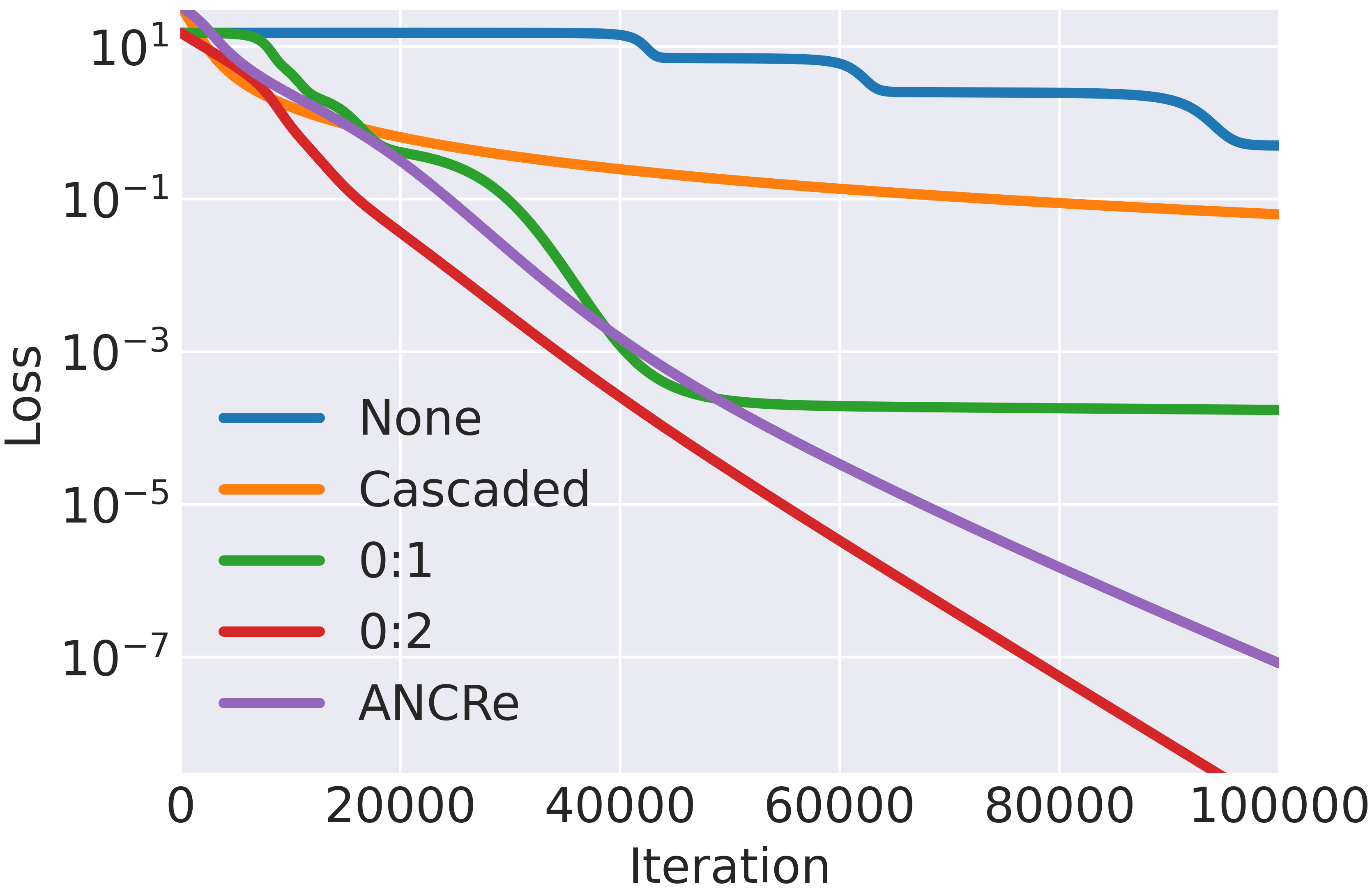}}
  	\caption{$K = 3$ with varying topologies}
  	\label{subfig:3layer-vary-place}
  \end{subfigure}
  \begin{subfigure}[t]{0.32\textwidth}
  	\centerline{\includegraphics[width=\textwidth]{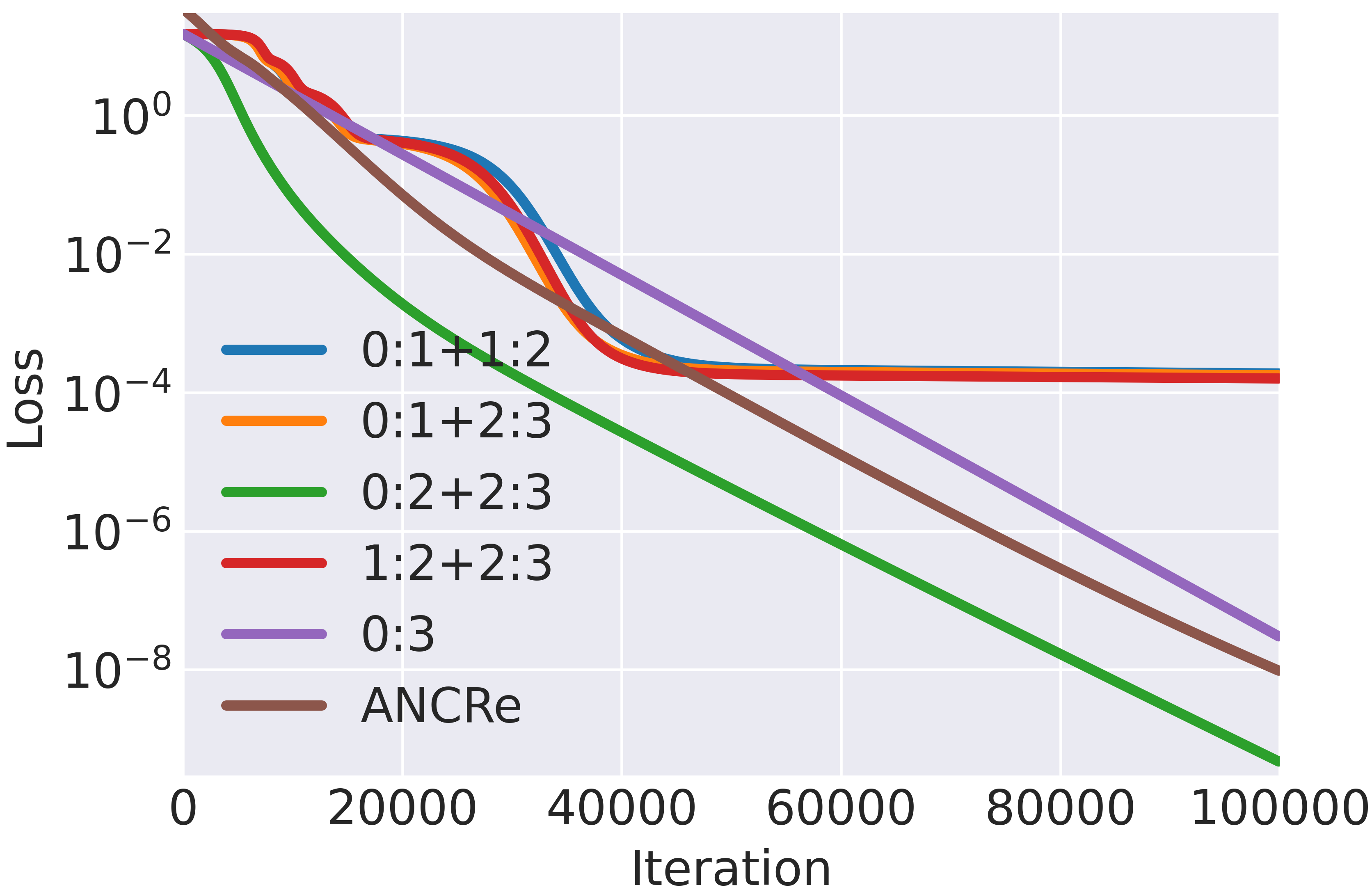}}
  	\caption{$K = 4$ with $2$ shortcuts}
  	\label{subfig:4layer-vary-place}
  \end{subfigure}
  \caption{Convergence comparison of LNN under varying setups.}
  \label{fig:LNN-converge}
  \end{center}
\end{figure}

\subsection{Convergence analysis of exponential discrepancies}
This subsection establishes theories characterizing the exponentially different convergence behaviors induced by residual connections. To simplify the setup, our analysis considers the 3-layer LNN with a single residual connection (0:1 or 0:2), as illustrated in Figure~\ref{subfig:3layer-vary-place}, while extension to nonlinear neural networks is left for future work. 
To eliminate the influence of step size, the analysis is performed under gradient flow (GF), the continuous-time counterpart of GD.
Specifically, GF indexes the optimization time with a continuous variable $t \ge 0$, and updates the weight matrix through ordinary differential equation
\begin{equation}
\label{eq:GF}
	\frac{\dd \mathbf{W}_k(t)}{\dd t} = - \nabla_{\bfW_k} \loss (t), ~k = 1,\ldots,K
\end{equation}
where the dependency of $\loss$ on $\{ \bfW_k(t) \}_k$ is omitted for compactness. Clearly, GF can be viewed as an approximation of GD in the limit of infinitely small step sizes. 
Moreover, we assume the input data matrix $\bfX$ is whitened, so that it's orthogonal; i.e., $\bfX \bfX^\top = \bfI_d$. 
This is a standard modeling choice that avoids unnecessarily over-complicating the analysis while still capturing the essential dynamics \citep{converge-analysis-LNN,wu2019global,kai2025}.
When $\bfX$ is a random matrix, this assumption corresponds to the data vectors being normalized and uncorrelated, which can be readily satisfied in practice without loss of generality. 

Under these simplifications, the following two theorems establish respectively a lower bound for the LNN with the 0:1 shortcut, and an upper bound for the 0:2 one. 
\begin{theorem}[Lower bound]
\label{thm:LB}
Consider a 3-layer LNN with residual connection 0:1, and regression loss
\begin{equation}
\label{eq:obj-LB}
	\loss_1(t) := \frac{1}{2} \Bigg\| \bfW_3(t) \bfW_2(t) \big( \bfW_1(t) + \bfI_d \big) \bfX - \bfY \bigg\|_\fro^2.
\end{equation}
For \underline{some} sufficiently small initialization, GF in~\eqref{eq:GF} cannot converge faster than a \underline{sublinear} rate
\begin{equation}
\label{eq:lower-bound}
	\loss_1 (t) \ge \Omega (1/t^2).
\end{equation}
\end{theorem}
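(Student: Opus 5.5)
Our plan is to exhibit one explicit small initialization for which the gradient flow collapses to a scalar ODE that behaves like that of a plain two-layer product near a saddle. The mechanism is that the 0:1 shortcut makes the first factor $\bfW_1+\bfI_d$ close to identity. The remaining product $\bfW_3\bfW_2$, however, is a depth-two linear network started near zero, and such networks escape the origin only polynomially fast.

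\textbf{Reduction to a scalar system.} Using $\bfX\bfX^\top=\bfI_d$, we write $\loss_1=\frac12\|\bfW_3\bfW_2(\bfW_1+\bfI_d)-\bfY\bfX^\top\|_\fro^2+\mathrm{const}$, where the constant $\frac12(\|\bfY\|_\fro^2-\|\bfY\bfX^\top\|_\fro^2)\ge 0$ only helps a lower bound. Set $\bfM:=\bfY\bfX^\top$ and take its SVD $\bfM=\mathbf{U}\mathbf{S}\mathbf{V}^\top$. We choose the initialization $\bfW_1(0)=\mathbf{V}\diag(a_0)\mathbf{V}^\top$, $\bfW_2(0)=\mathbf{V}\diag(b_0)\mathbf{V}^\top$ and $\bfW_3(0)=\mathbf{U}\diag(c_0)\mathbf{V}^\top$, with small scalars. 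A direct computation of $\nabla_{\bfW_k}\loss_1$ shows this aligned diagonal structure is preserved by GF. Uniqueness of ODE solutions then reduces the dynamics to $d$ decoupled scalar systems. For each coordinate with target $s$ we get
\[ \dot a=-bc\,e,\quad \dot b=-c(1+a)\,e,\quad \dot c=-b(1+a)\,e, \]
where $e:=bc(1+a)-s$.

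\textbf{Exploiting a conserved balance.} Next we choose $b_0=c_0=\epsilon$, so the symmetry $b=c$ is preserved. We also use the conserved quantity $\frac{\dd}{\dd t}\big((1+a)^2-b^2\big)=0$ to get $(1+a)^2=1+b^2$ when $a_0=0$. The key part of the loss is then $\frac12(b^2\sqrt{1+b^2}-s)^2$, and $\dot b=-b\sqrt{1+b^2}\,e$. Starting from $b$ small, this is a saddle escape: while $b^2\ll s$ we have $\dot b\approx s b$.

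We will focus on the loss component whose target is exactly zero. If $\bfM$ has a zero singular value, or if we pick the instance so that it does (the theorem is only asserted for \emph{some} initialization, and we may likewise use the freedom in the setup), then that coordinate has $s=0$. There $\dot b=-b^3(1+b^2)\le -b^3$ and $\dot b\ge -2b^3$ for small $b$. Comparison gives $b(t)^2\ge \epsilon^2/(1+4\epsilon^2 t)$, hence that coordinate contributes at least $\frac12 b^4\ge \Omega(1/t^2)$ to $\loss_1$. This is exactly the rate in~\eqref{eq:lower-bound}, and it reflects the degree-two homogeneity of $\bfW_3\bfW_2$ decaying toward the zero target.

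\textbf{Main obstacle.} The hardest step is the case where $\bfM$ is full rank with all targets positive, because then every coordinate actually converges linearly once the saddle is escaped. If the statement must hold for general $\bfY$, we would instead use an initialization that is not aligned. One option is to set the $\bfW_3,\bfW_2$ components orthogonal to the target in one direction, for example an off-diagonal block initialized with a nonzero balanced pair. We would then show that block evolves like the $s=0$ scalar system above, decoupled by invariance, and apply the same comparison argument. Verifying that this block decouples exactly under GF, via a symmetric structure preserved by the flow together with ODE uniqueness, is what we expect to require the most care.
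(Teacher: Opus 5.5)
Your proposal is correct and is essentially the paper's proof of Theorem~\ref{thm:LB-formal}: the paper also picks an instance whose target $\bfY\bfX^\top$ is diagonal with a zero singular value, initializes diagonally with $\bfW_2[d,d](0)=\bfW_3[d,d](0)$, uses the same balance $\dt{w_d^2}=\dt{u_d^2}$ (your conserved $(1+a)^2-b^2$), and finishes with a scalar comparison on the zero-target coordinate; the only difference is that it bounds the product via $\dt{a_d}\ge -3a_d^2$, whereas you bound $b$ directly through $\dot b=-b^3(1+a)^2$, which is slightly cleaner. Two small corrections: first, with $a_0=0$, $b_0=\epsilon$ the conservation law gives $(1+a)^2=1-\epsilon^2+b^2$, not $1+b^2$, which only changes constants (you get $\dot b\ge -b^3$ and a factor $1-\epsilon^2$ in the loss bound); second, your full-rank ``main obstacle'' is not needed, since the paper's formal version is existential over problem instances, and the sketched route could not succeed anyway, because for invertible $\bfY\bfX^\top$ all three factors are invertible at every zero-loss point, so $\|\nabla_{\bfW_3}\loss_1\|_\fro^2\ge 2\sigma_{\min}^2\big(\bfW_2(\bfW_1+\bfI_d)\big)\loss_1$ is a local PL inequality there and any trajectory converging to such a point does so linearly.
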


\begin{theorem}[Upper bound]
\label{thm:UB}
Consider a 3-layer LNN with residual connection 0:2, and regression loss
\begin{equation}
\label{eq:obj-UB}
	\loss_2 (t) := \frac{1}{2} \Bigg\| \bfW_3(t) \big( \bfW_2(t) \bfW_1(t) + \bfI_d \big) \bfX - \bfY \bigg\|_\fro^2.
\end{equation}
Under \underline{any} sufficiently small initialization, GF in~\eqref{eq:GF} ensures \underline{linear} convergence
\begin{equation}
\label{eq:upper-bound}
	\loss_2 (t) \le \loss_2(0) e^{-2(1-\lambda)^2 t}
\end{equation}
where $\lambda \in (0,1)$ is a constant associated with the initialization $\{ \bfW_k (0) \}_{k=1}^3$. 
\end{theorem}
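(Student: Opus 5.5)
We need to prove Theorem~\ref{thm:UB}, the linear convergence of $\loss_2$.

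Since $\bfX\bfX^\top=\bfI_d$, we can rewrite the loss as $\loss_2=\tfrac12\|\bfW_3(\bfW_2\bfW_1+\bfI_d)-\bfA\|_\fro^2+\text{const}$, where $\bfA:=\bfY\bfX^\top$. The constant is the projection residual of $\bfY$ onto the row space of $\bfX$. Interpreting the statement as convergence of the reducible part, or assuming $\bfY$ lies in that row space, we work with the residual $\bfE:=\bfW_3\bfM-\bfA$, where $\bfM:=\bfW_2\bfW_1+\bfI_d$.

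The gradients are
\[
\nabla_{\bfW_3}\loss_2=\bfE\bfM^\top,\qquad
\nabla_{\bfW_2}\loss_2=\bfW_3^\top\bfE\bfW_1^\top,\qquad
\nabla_{\bfW_1}\loss_2=\bfW_2^\top\bfW_3^\top\bfE .
\]
Differentiating the loss along the flow gives
\[
\dt{\loss_2}=-\|\bfE\bfM^\top\|_\fro^2-\|\bfW_3^\top\bfE\bfW_1^\top\|_\fro^2-\|\bfW_2^\top\bfW_3^\top\bfE\|_\fro^2\le-\|\bfE\bfM^\top\|_\fro^2 .
\]
The key observation is that the skip connection makes $\bfM$ a perturbation of the identity. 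If $\|\bfW_2\bfW_1\|\le\lambda<1$ along the whole trajectory, then $\sigma_{\min}(\bfM)\ge1-\lambda$. Hence $\|\bfE\bfM^\top\|_\fro^2\ge(1-\lambda)^2\|\bfE\|_\fro^2=2(1-\lambda)^2\loss_2$, and Gr\"onwall's inequality yields exactly $\loss_2(t)\le\loss_2(0)e^{-2(1-\lambda)^2t}$. This matches the form of \eqref{eq:upper-bound}, so the whole argument reduces to this invariance claim.

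The main obstacle is proving that $\|\bfW_2(t)\bfW_1(t)\|$ stays below $\lambda<1$ for all $t\ge0$ under any sufficiently small initialization. I would argue by continuity (a bootstrap). Let $T$ be the first time $\|\bfW_2\|\|\bfW_1\|$ reaches $\lambda$, where $\lambda$ is set slightly above its initial value, e.g.\ $\lambda=\tfrac12$. On $[0,T)$ the exponential decay holds, so $\|\bfE(t)\|_\fro\le\|\bfE(0)\|_\fro e^{-(1-\lambda)^2t}$.

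Next, bound the growth of $\bfW_3$. We have $\|\dot\bfW_3\|\le\|\bfE\|_\fro(1+\lambda)$, which integrates to a bounded quantity: $\|\bfW_3(t)\|\le\|\bfW_3(0)\|+C\|\bfE(0)\|_\fro$ with $C=(1+\lambda)/(1-\lambda)^2$.

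Then $\dot\bfW_1$ and $\dot\bfW_2$ are bounded by $\|\bfW_3\|\,\|\bfE\|_\fro$ times $\|\bfW_2\|$ and $\|\bfW_1\|$ respectively. Setting $u=\|\bfW_1\|+\|\bfW_2\|$, we get $\dot u\le\|\bfW_3\|\,\|\bfE\|_\fro\,u$. Gr\"onwall then gives $u(t)\le u(0)\exp\!\big(B\|\bfE(0)\|_\fro/(1-\lambda)^2\big)$, where $B$ bounds $\|\bfW_3\|$. Since $\|\bfE(0)\|_\fro$ is close to $\|\bfA-\bfW_3(0)\|_\fro$, which is bounded independently of the initialization scale, this exponential factor is a fixed constant. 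Taking $u(0)$ small enough then makes $\|\bfW_2\|\|\bfW_1\|\le u^2/4<\lambda$ on $[0,T)$, contradicting finiteness of $T$. Hence $T=\infty$.

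Here "sufficiently small initialization" means that $\bfW_1(0)$ and $\bfW_2(0)$ are small, while $\bfW_3(0)$ is merely bounded. The delicate part is making these constants consistent and non-circular, so that $\lambda$ is fixed before the smallness threshold is chosen.
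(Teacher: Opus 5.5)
Your proposal is correct and follows the paper's argument. Both use a first-exit-time bootstrap on $\|\bfW_2\bfW_1\|$, get linear decay before $T$ from the $\nabla_{\bfW_3}$ term via $\sigma_{\min}(\bfI_d+\bfW_2\bfW_1)\ge 1-\lambda$, and close with a Gr\"onwall bound on $\bfW_1,\bfW_2$ whose rate is integrable, so a small enough initialization forces $T=\infty$. The differences are technical: you bound $\|\bfW_3\|$ uniformly via $\|\bfW_2\bfW_1+\bfI_d\|\le 1+\lambda$ and the decay of $\|\bfE\|_\fro$, whereas the paper uses $\|\bfW_3(t)\|_\fro\le\|\bfW_3(0)\|_\fro+\sqrt{t\loss(0)}$ plus a Gamma-function integral; and your caveat about the constant $\frac{1}{2}\|\bfY(\bfI_n-\bfX^\top\bfX)\|_\fro^2$ is a genuine point that the paper's rewriting of the loss silently drops.
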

The formal statements of Theorems~\ref{thm:LB} and~\ref{thm:UB} involving all details can be found in Appendix~\ref{apdx:proof}. We remark that small random initialization is standard in deep networks for numerical stability. While the 0:1 residual connection cannot be faster than a sublinear convergence rate, the 0:2 shortcut yields an exponential improvement. This corroborates our observation in Figure~\ref{subfig:3layer-vary-place}, and validates that proper placements of residual connections can \emph{exponentially} accelerate the training convergence. It is worth stressing that these results can be readily extended to $K > 3$; see Appendix~\ref{apdx:extension-morelayers}. To be specific, the 0:1 shortcut always corresponds to the slow case of sublinear convergence, whereas the 0:$K\!-\!1$ one guarantees fast linear convergence. An example of the latter can be found in the 0:3 (violet) curve of Figure~\ref{subfig:4layer-vary-place}. 

Our case study and convergence analysis highlight \emph{where} residual connections are placed could affect the convergence rate exponentially. 
This pronounced gap suggests that residual topology should be treated as a design lever for faster optimization. In the next section, we turn to the practical question of \textit{how} to find a performant topology.

\section{Learning residual connections from data}
As observed in the previous section, even with the same network architecture, the residual topology pattern for fast convergence can change with depth (e.g., 0:2 for 3-layer and 1:2+2:3 for 4-layer LNNs in Figures~\ref{subfig:3layer-vary-place} and~\ref{subfig:4layer-vary-place}). More broadly, an optimal layout can depend on other architectural choices. It is thus difficult to prescribe a ``panacea'' of residual topology, especially for complex foundation models. This motivates learning the desirable topology on the fly.

\begin{figure}[t]
   \begin{center}
  \begin{subfigure}[t]{0.22\textwidth}
  	\centerline{\includegraphics[width=\textwidth]{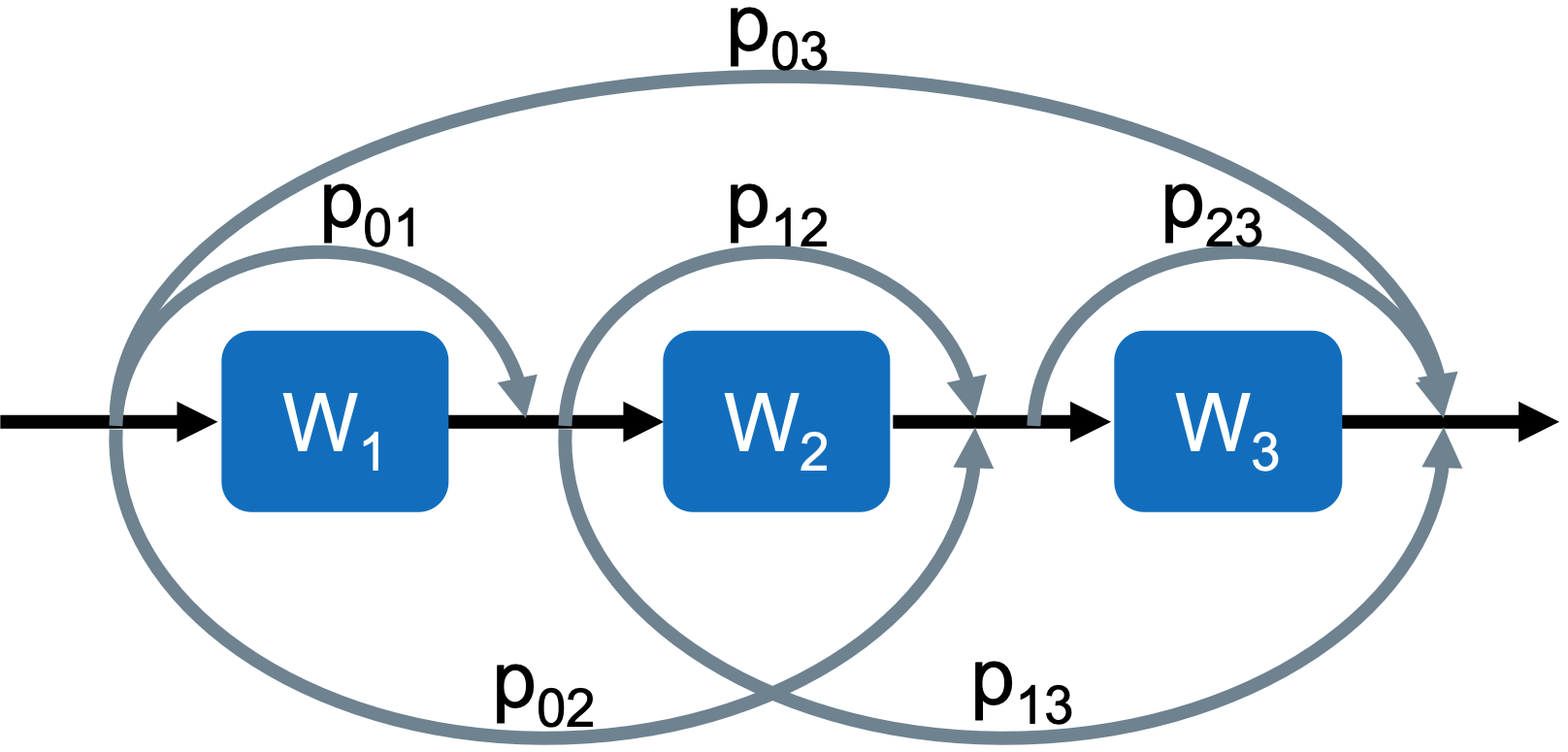}}
  	\caption{ANCRe}
  	\label{subfig:ANCRe-connect}
  \end{subfigure}
  \begin{subfigure}[t]{0.255\textwidth}
  	\centerline{\includegraphics[width=\textwidth]{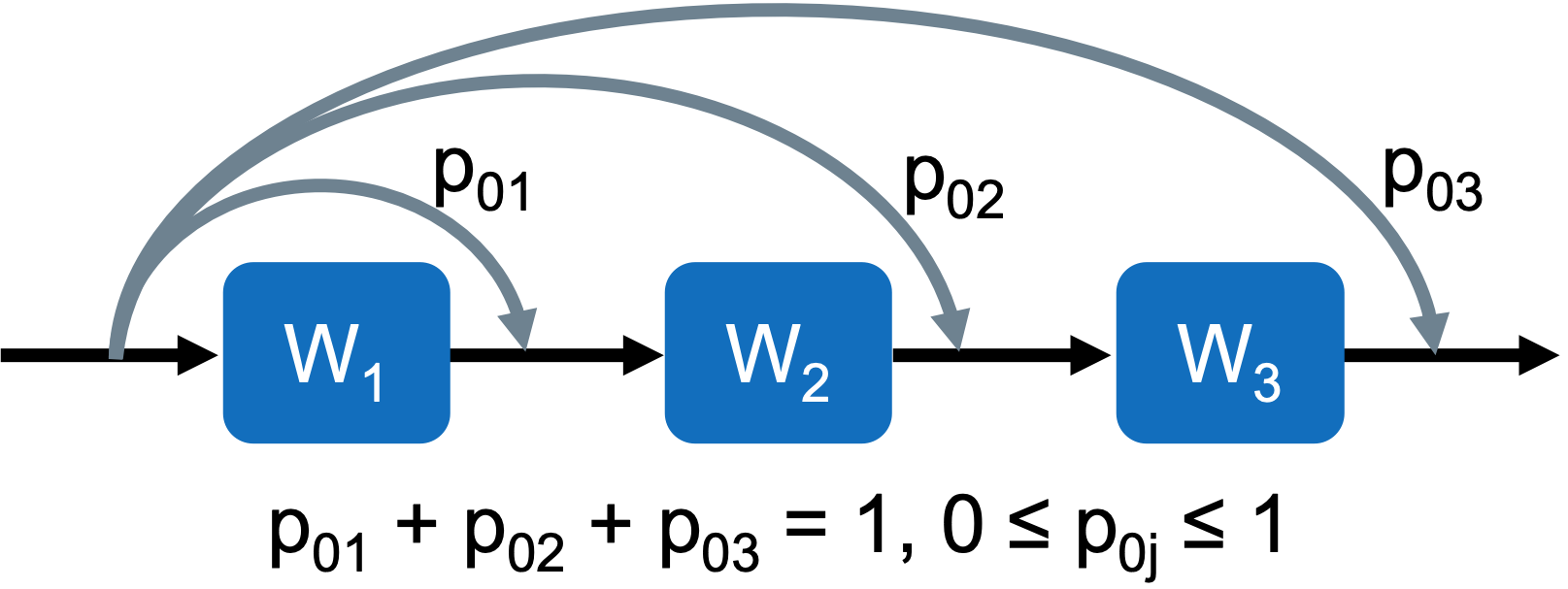}}
  	\caption{Outgoing normalization}
  	\label{subfig:out-normalize}
  \end{subfigure}
  \begin{subfigure}[t]{0.255\textwidth}
  	\centerline{\includegraphics[width=\textwidth]{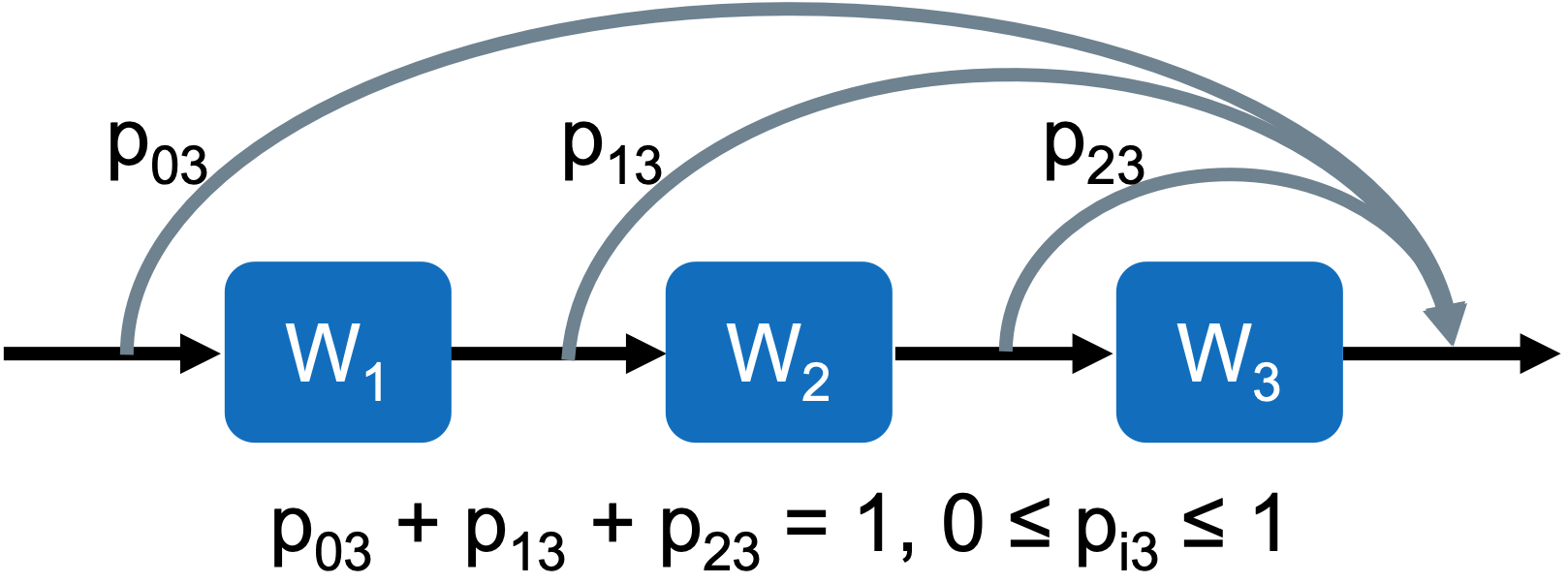}}
  	\caption{Ingoing normalization}
  	\label{subfig:in-normalize}
  \end{subfigure}
    \begin{subfigure}[t]{0.22\textwidth}
  	\centerline{\includegraphics[width=\textwidth]{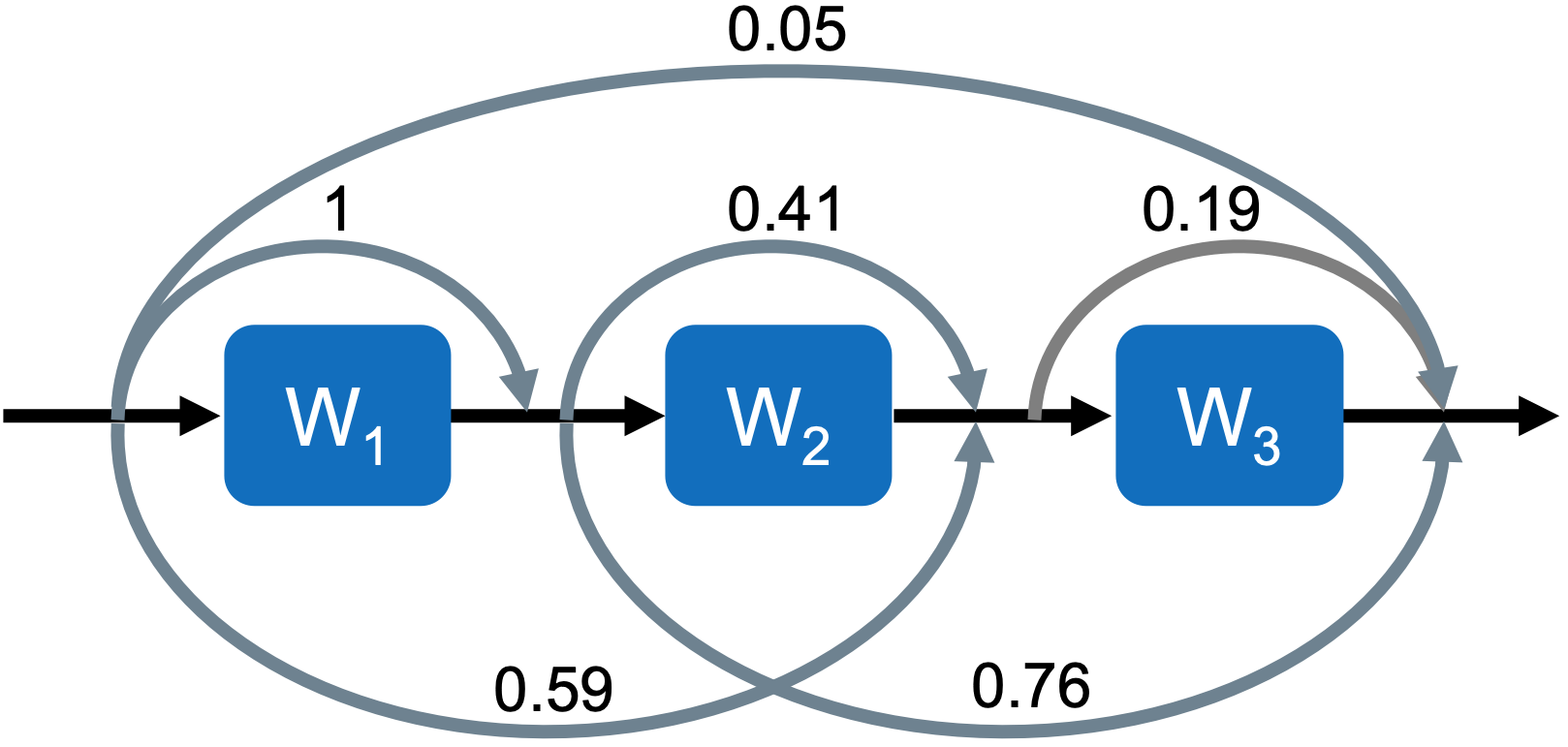}}
  	\caption{Learned topology}
  	\label{subfig:ANCRe-coeff}
  \end{subfigure}
  \caption{Visualization of ANCRe and two normalization schemes on a 3-layer LNN.}
  \label{fig:ANCRe-n-normalization}
  \end{center}
\end{figure}

\subsection{ANCRe: Adaptive neural connection reassignment}
\label{subsec:ANCRe}
Instead of exhaustively searching over all the $2^{(L+1)L/2}$ possible topologies, our key idea is to parameterize the layout, and adaptively reassign the connectivity based on the network architecture and data distribution. Thus, we term our approach adaptive neural connection reassignment (ANCRe). In particular, ANCRe considers all possible shortcut connections $i$:$j$ ($i < j$), each associated with a coefficient $p_{ij}$ that is optimized alongside the model parameters; cf. Figure~\ref{subfig:ANCRe-connect}. We further probe two schemes for normalizing these weights, which we refer to as outgoing and ingoing normalizations, as exemplified in Figures~\ref{subfig:out-normalize} and~\ref{subfig:in-normalize}. 
This normalization ensures that the coefficients over candidate positions sum to 1. In particular, if some $p_{ij}=1$, the construction reduces to adding a standard residual connection between layers $i$ and $j$. As shown later in Section~\ref{sec:ablation}, this normalization is essential for stabilizing  training at scale.

Outgoing normalization restricts the coefficients originating from layer $i$ through a convex combination
\begin{equation}
\label{eq:out-normalize}
	\sum_{j=i+1}^K p_{ij} = 1, ~ 0 \le p_{ij} \le 1, ~\forall 0 \le i < j \le K.
\end{equation}
That says, the feature embedding from layer $i$ is decomposed as $\bfx_i = \sum_{j=i+1}^K p_{ij} \bfx_i$, and each component is distributed to the subsequent layers. This ensures that the outgoing information per layer is bounded. 
Conversely, ingoing normalization imposes the following constraint
\begin{equation}
\label{eq:in-normalize}
	\sum_{i=0}^{j-1} p_{ij} = 1, ~ 0 \le p_{ij} \le 1, ~\forall 0 \le i < j \le K
\end{equation}
which preserves the input signal magnitude to layer $j$ and enhances numerical stability. An alternative interpretation of this scheme is that it forms an \emph{ensemble over network depths}. Letting $f_0(\bfx) := \bfx$ denote the initial input and $f_i(\bfx)$ the output of layer $i$, the aggregated shortcuts to layer $j$ incur $\sum_{i=0}^{j-1} p_{ij} f_i(\bfx)$, which ensembles subnetworks of depths $0,\ldots,j-1$, with $p_{ij}$ serving as the routing coefficient. 

In practice, both normalization constraints can be conveniently enforced via a softmax reparametrization. For example, the ingoing normalization can be formulated as
\begin{equation}
\label{eq:softmax}
	p_{ij} = \frac{\exp(c_{ij}/\tau)}{\sum_{k=1}^{j-1} \exp(c_{kj}/\tau)},~\forall 0 \le i < j \le K
\end{equation}
where $\tau > 0$ is a temperature hyperparameter, with a default value of $0.1$ in practice. This softmax allows the model to emphasize beneficial shortcuts by driving $c_{ij} \to +\infty$, while suppressing detrimental ones with $c_{ij} \to -\infty$. 
Furthermore, if an effective configuration is known a priori, it can be readily incorporated through a regularizer. For instance, adding Tsallis entropy regularization~\citep{tsallis-entropy} to the loss encourages sparse connectivity for softmax reparameterization. It is worth stressing that ANCRe introduces merely $K (K - 1) / 2$ additional parameters $c_{ij}$, which is often fewer than a single feature dimension, and is negligible in modern deep networks. As a result, ANCRe preserves the highly desired computational efficiency; cf. Table~\ref{tab:time}. 

While both normalizations work well in practice, we often find that ingoing normalization slightly outperforms the other at larger scale; see the ablation study in Section~\ref{sec:ablation}. We conjecture that this is due to the more stable input magnitude to each layer. For this reason, we stick to \textit{ingoing normalization}~\eqref{eq:in-normalize} throughout the paper.

Regarding Case Study~\ref{ex:LNN}, ANCRe learns a data-driven connectivity, and hence achieves a linear convergence rate on par with the optimal case; cf. Figure~\ref{subfig:3layer-vary-place} and~\ref{subfig:4layer-vary-place}. The learned coefficients for the 3-layer LNN under ingoing normalization are visualized in Figure~\ref{subfig:ANCRe-coeff}, where certain shortcuts such as 0:3 and 2:3 are suppressed by the ingoing normalization. Aside from LNNs, the next subsection investigates how to effectively incorporate ANCRe into Transformers.

\begin{figure}[t]
  \centering
  \begin{minipage}[t]{0.54\linewidth}
  \vspace{0cm}
      \begin{center}
      	\centerline{\includegraphics[width=.95\columnwidth]{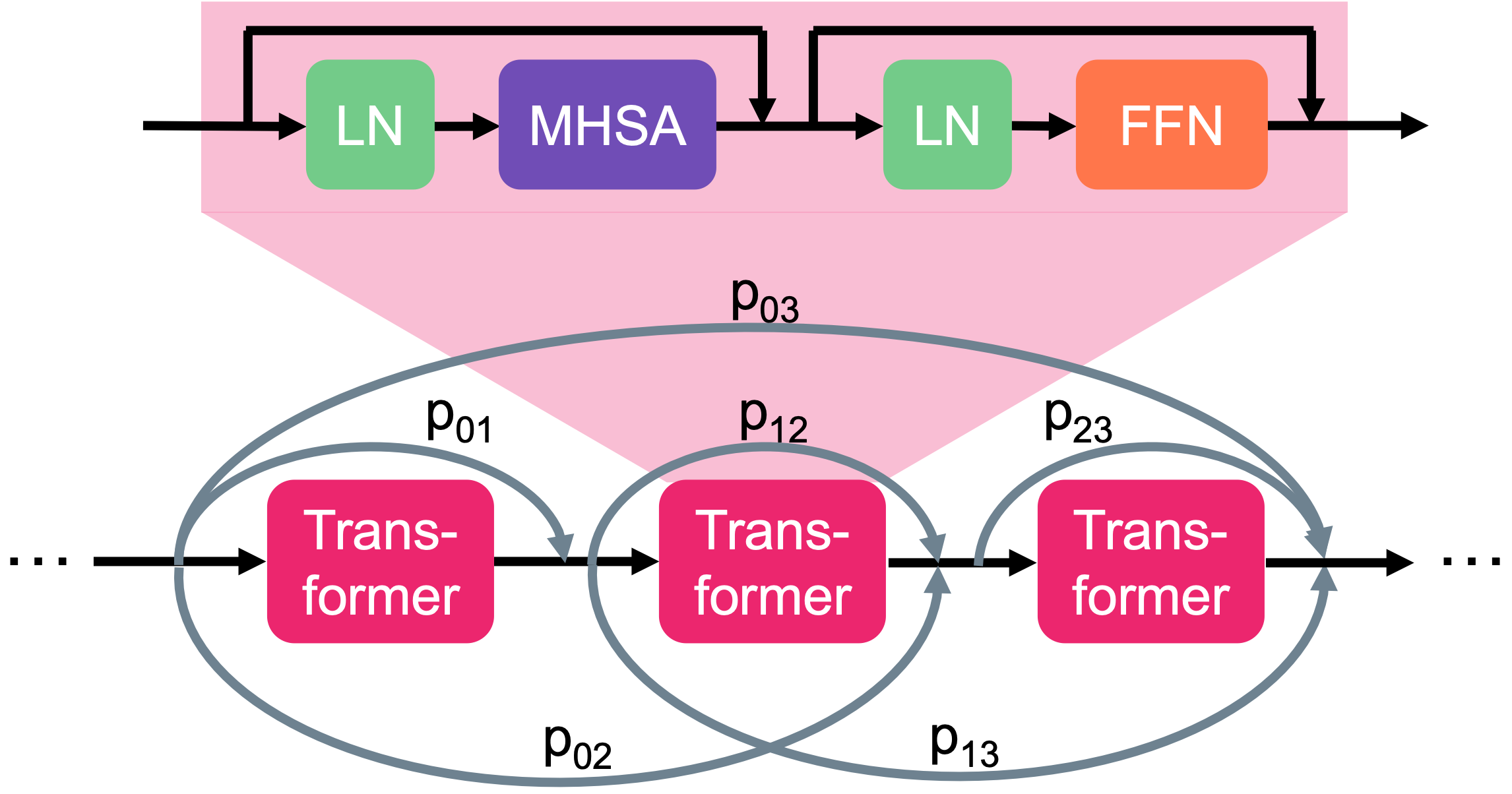}}
        \caption{ANCRe applied to the standard Transformers comprising layer normalization (LN), multi-head self-attention (MHSA), and feedforward network (FFN) modules.}
      \label{fig:Transformers}
      \end{center}
  \end{minipage}
  \hfill
  \begin{minipage}[t]{0.45\linewidth}
  \vspace{.4cm}
   \captionof{table}{Validation perplexity ($\downarrow$) by adding ANCRe to inputs of different modules.}
   \label{tab:granularity}
   \begin{center}
     \begin{tabular}{cccc}
       \toprule
       MHSA  & FFN  &  60M  &  130M \\
       \midrule
       $\times$ & $\times$ & 30.39 & 25.07 \\
       $\surd$  & $\times$ & \textbf{29.62} & 24.48  \\
       $\times$ & $\surd$ & 30.43 & 24.98 \\
       $\surd$  &  $\surd$ & 29.73 & \textbf{24.41} \\
       \bottomrule
     \end{tabular}
   \end{center}
  \end{minipage}
  \vspace{-.2cm}
\end{figure}

\subsection{Applying ANCRe to Transformers}
In modern Transformers, residual connections are applied separately to the multi-head self-attention (MHSA) and feedforward network (FFN) modules in a cascaded way, as illustrated in the upper part of Figure~\ref{fig:Transformers}. We consider two granularities for placing ANCRe: module-level and block-level. 
The former introduces ANCRe connections from the input of the $i$-th FFN (resp. MHSA) to the input of the $j$-th FFN (resp. MHSA), while the latter bridges the inputs of entire Transformer blocks. Note that the block-level granularity coincides with the MHSA module-level one.

To determine the optimal granularity, a lightweight ablation study is performed by pre-training LLaMA-60M and LLaMA-130M~\citep{llama} on the C4 dataset~\citep{C4}. As summarized in Table~\ref{tab:granularity}, establishing connectivity between MHSA inputs improves the validation perplexity (lower the better), whereas shortcuts originating from FFN inputs are less informative. 
Moreover, combining both connections yields performance comparable to using MHSA-only connections. 
For this reason, we adopt the MHSA-only (i.e. block-level) ANCRe for all subsequent tests involving Transformers; cf. Figure~\ref{fig:Transformers}. 

Next, tests are conducted on foundation models across multiple domains to showcase ANCRe's effectiveness.

\begin{figure}[t]
   \begin{center}
   \centerline{\includegraphics[width=.5\textwidth]{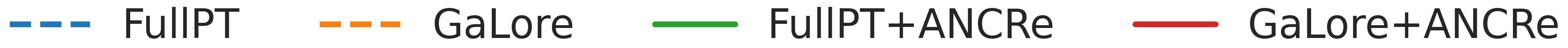}}
  \begin{subfigure}[t]{0.24\textwidth}
  	\centerline{\includegraphics[width=\textwidth]{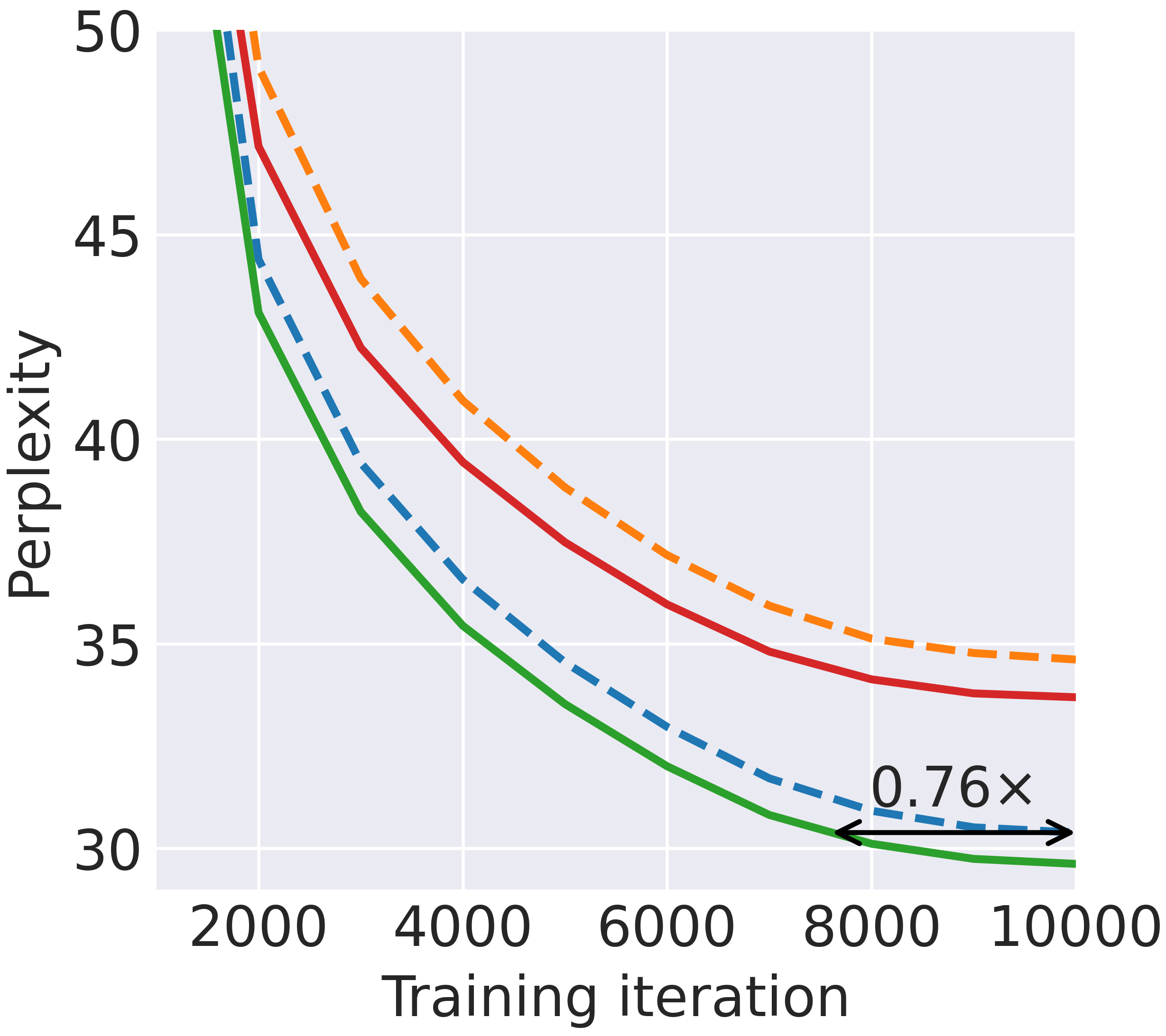}}
  	\caption{60M}
  	\label{subfig:PPL-60m}
  \end{subfigure}
  \begin{subfigure}[t]{0.24\textwidth}
  	\centerline{\includegraphics[width=\textwidth]{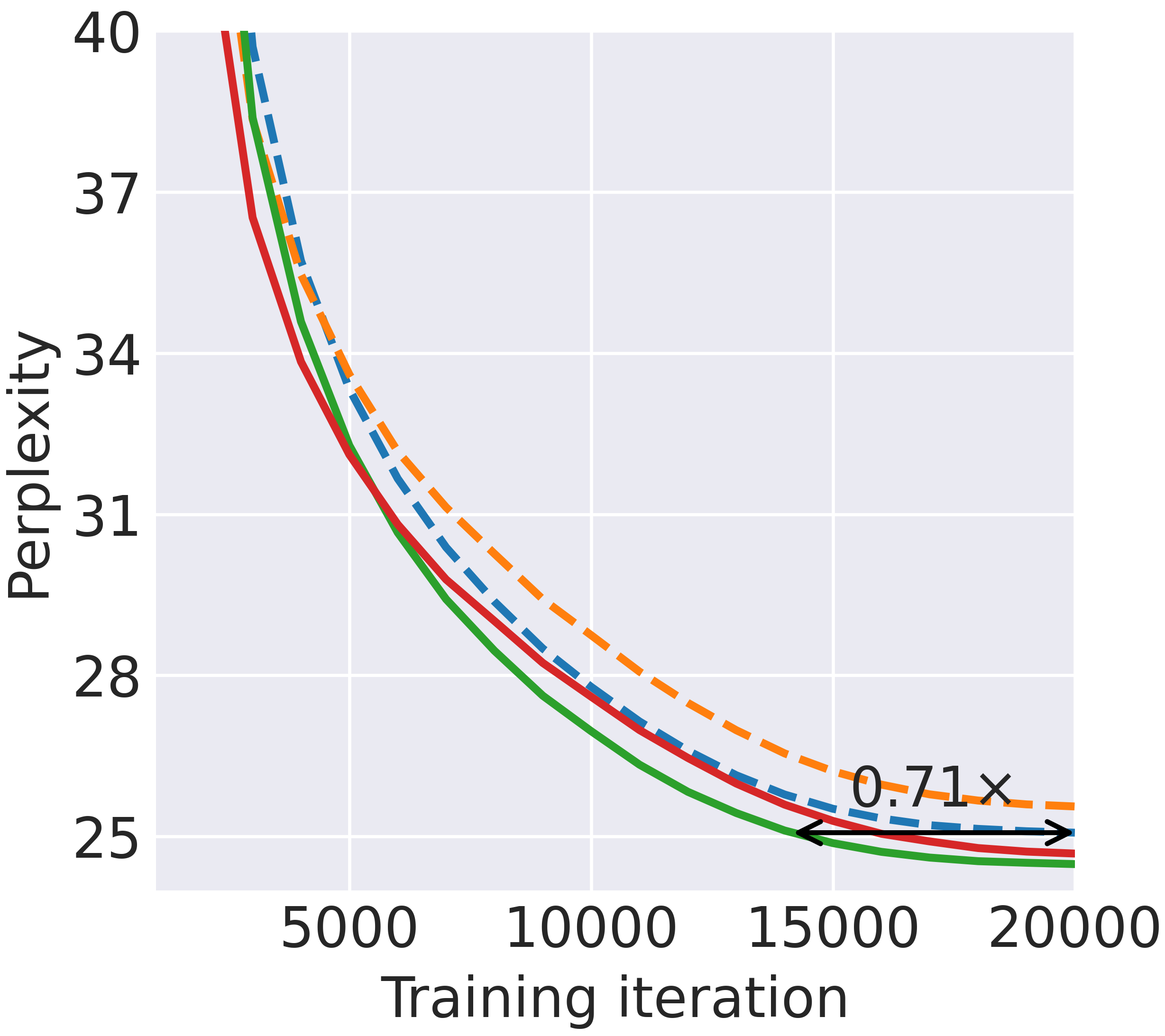}}
  	\caption{130M}
  	\label{subfig:PPL-130m}
  \end{subfigure}
  \begin{subfigure}[t]{0.24\textwidth}
  	\centerline{\includegraphics[width=\textwidth]{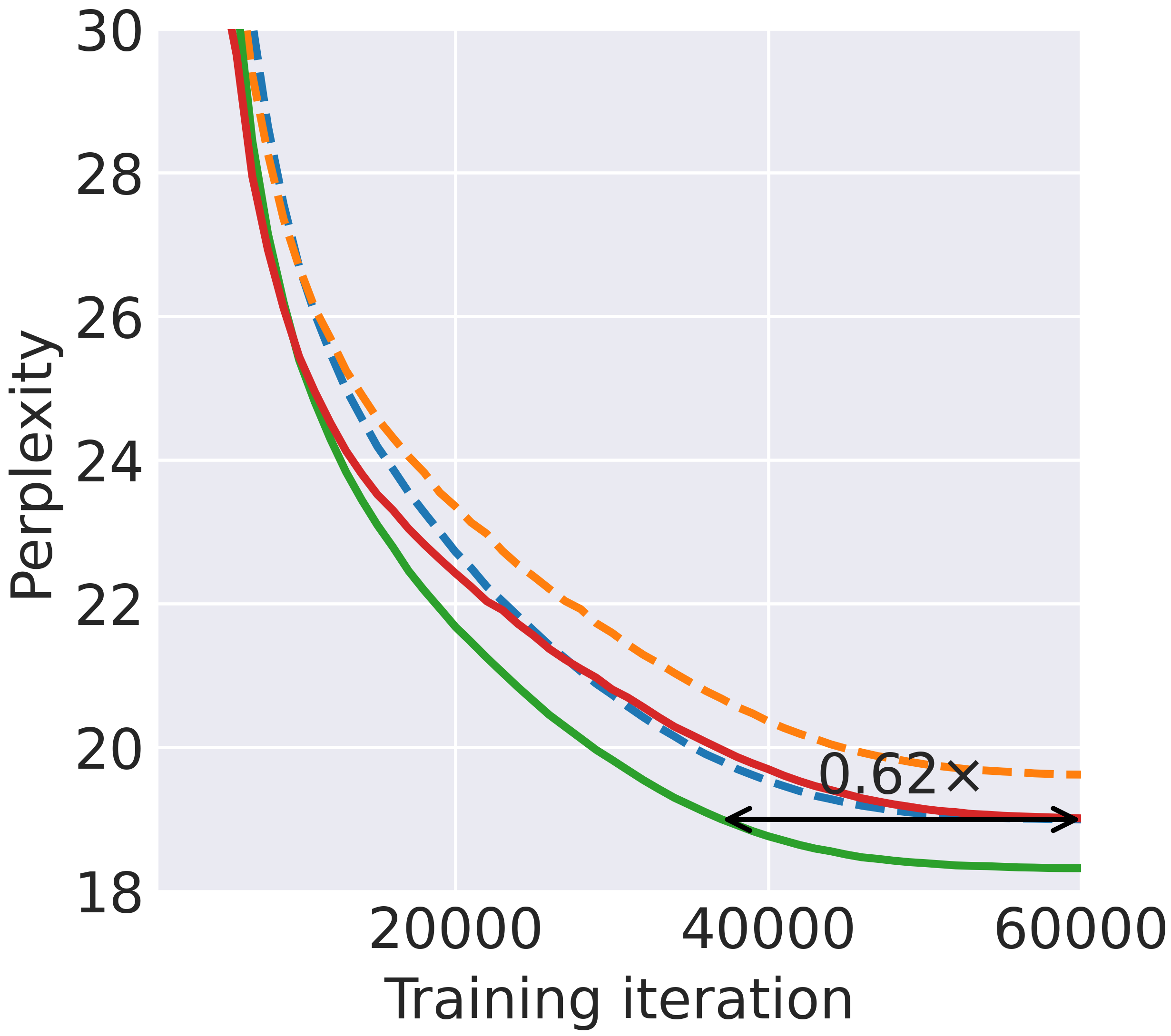}}
  	\caption{350M}
  	\label{subfig:PPL-350m}
  \end{subfigure}
    \begin{subfigure}[t]{0.24\textwidth}
  	\centerline{\includegraphics[width=\textwidth]{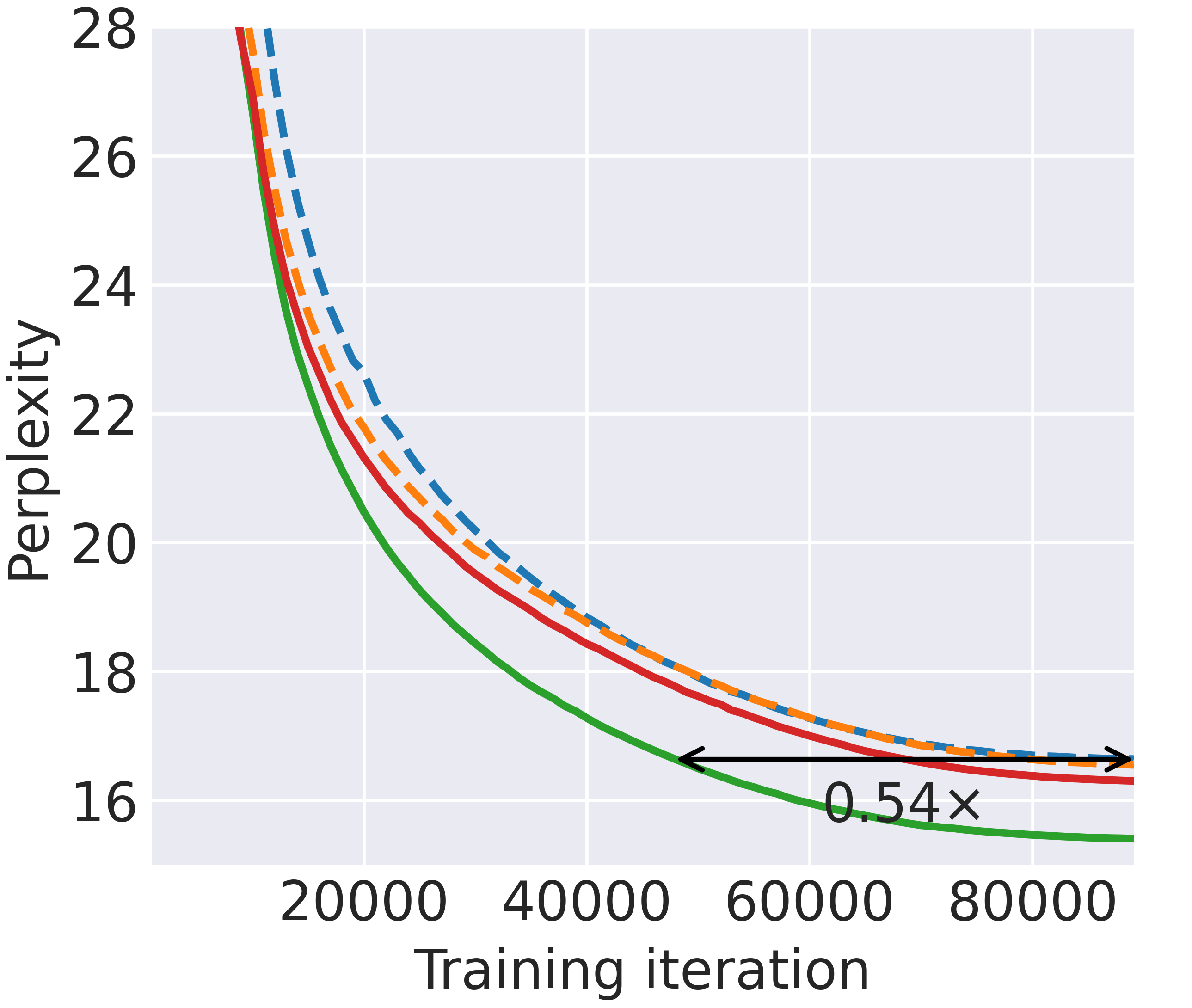}}
  	\caption{1B}
  	\label{subfig:PPL-1b}
  \end{subfigure}
  \caption{Validation perplexity evolution during pre-training of LLaMA models of different sizes.}
  \label{fig:LLM-PPL}
  \end{center}
\end{figure}

\begin{figure}[t]
   \begin{center}
  \begin{subfigure}[t]{0.24\textwidth}
  	\centerline{\includegraphics[width=\textwidth]{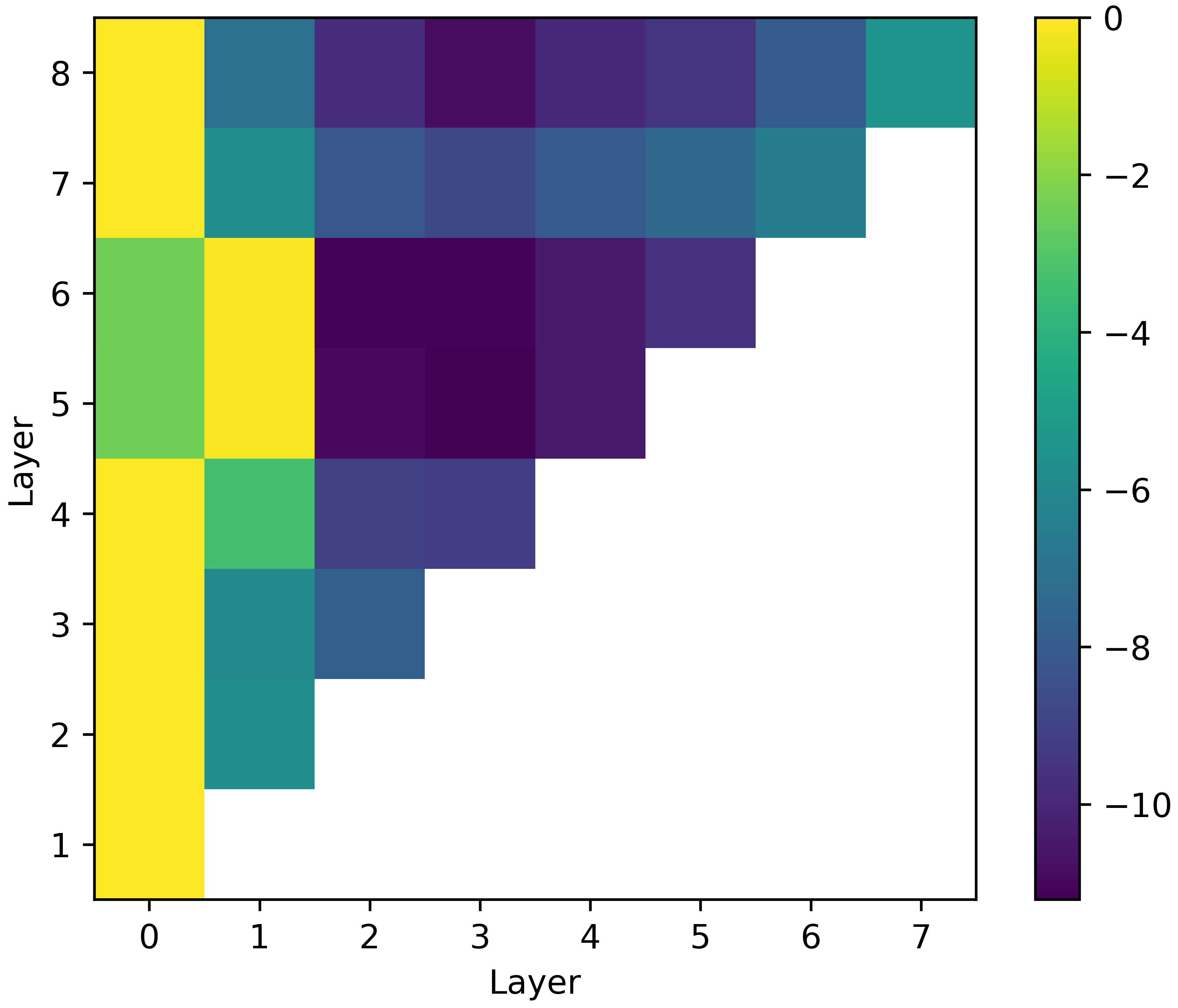}}
  	\caption{60M}
  	\label{subfig:heatmap-60m}
  \end{subfigure}
  \begin{subfigure}[t]{0.24\textwidth}
  	\centerline{\includegraphics[width=\textwidth]{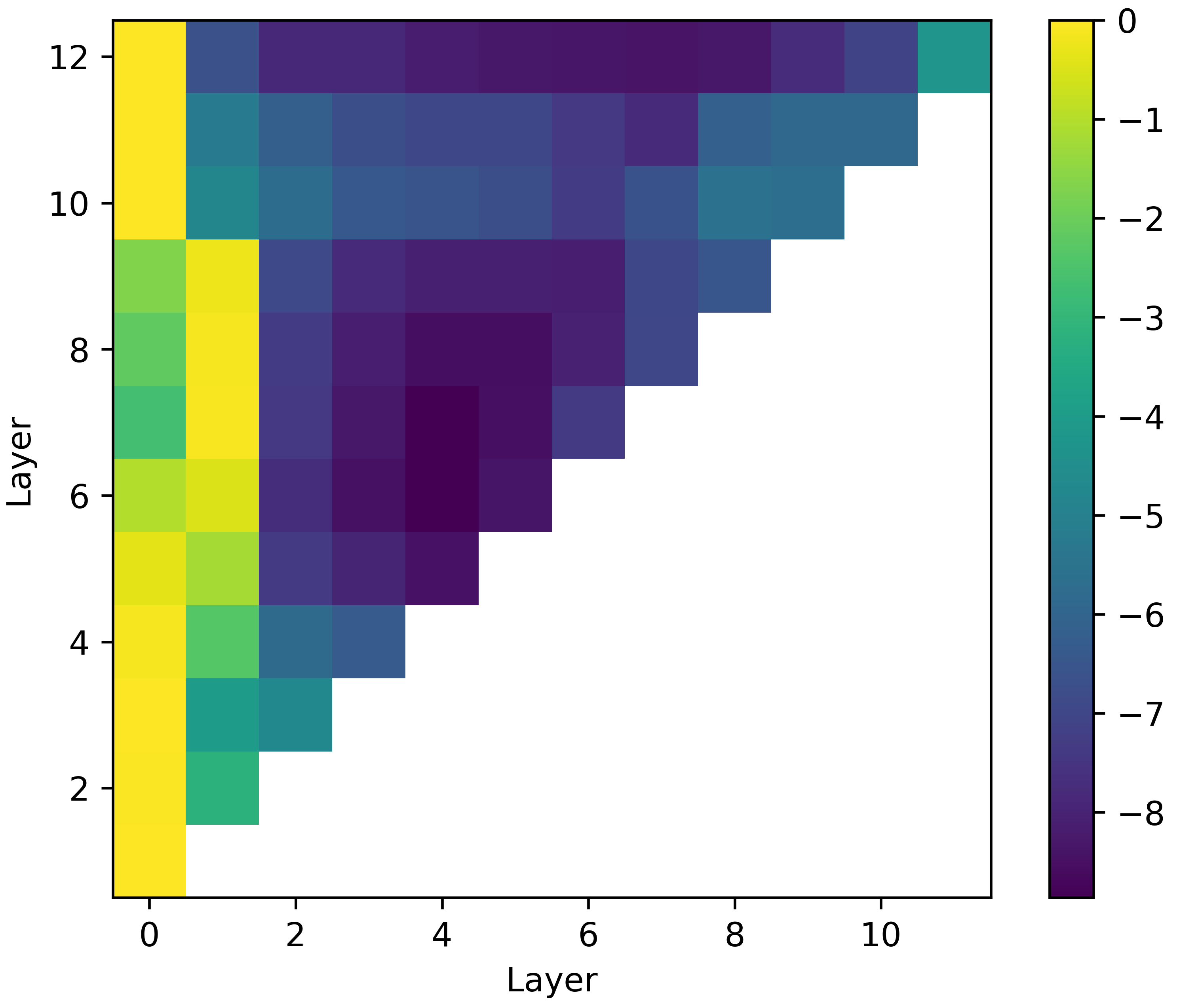}}
  	\caption{130M}
  	\label{subfig:heatmap-130m}
  \end{subfigure}
  \begin{subfigure}[t]{0.24\textwidth}
  	\centerline{\includegraphics[width=\textwidth]{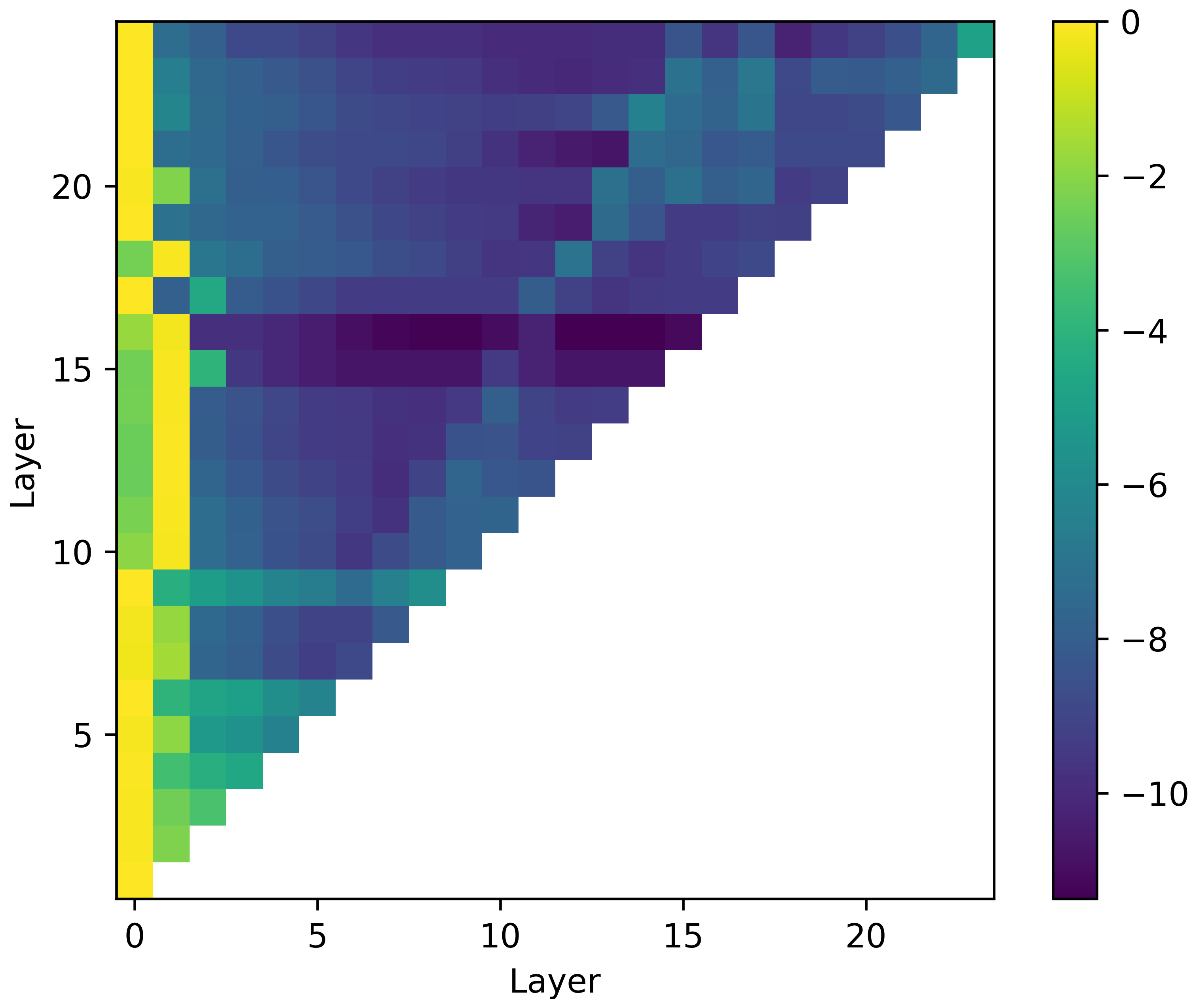}}
  	\caption{350M}
  	\label{subfig:heatmap-350m}
  \end{subfigure}
    \begin{subfigure}[t]{0.24\textwidth}
  	\centerline{\includegraphics[width=\textwidth]{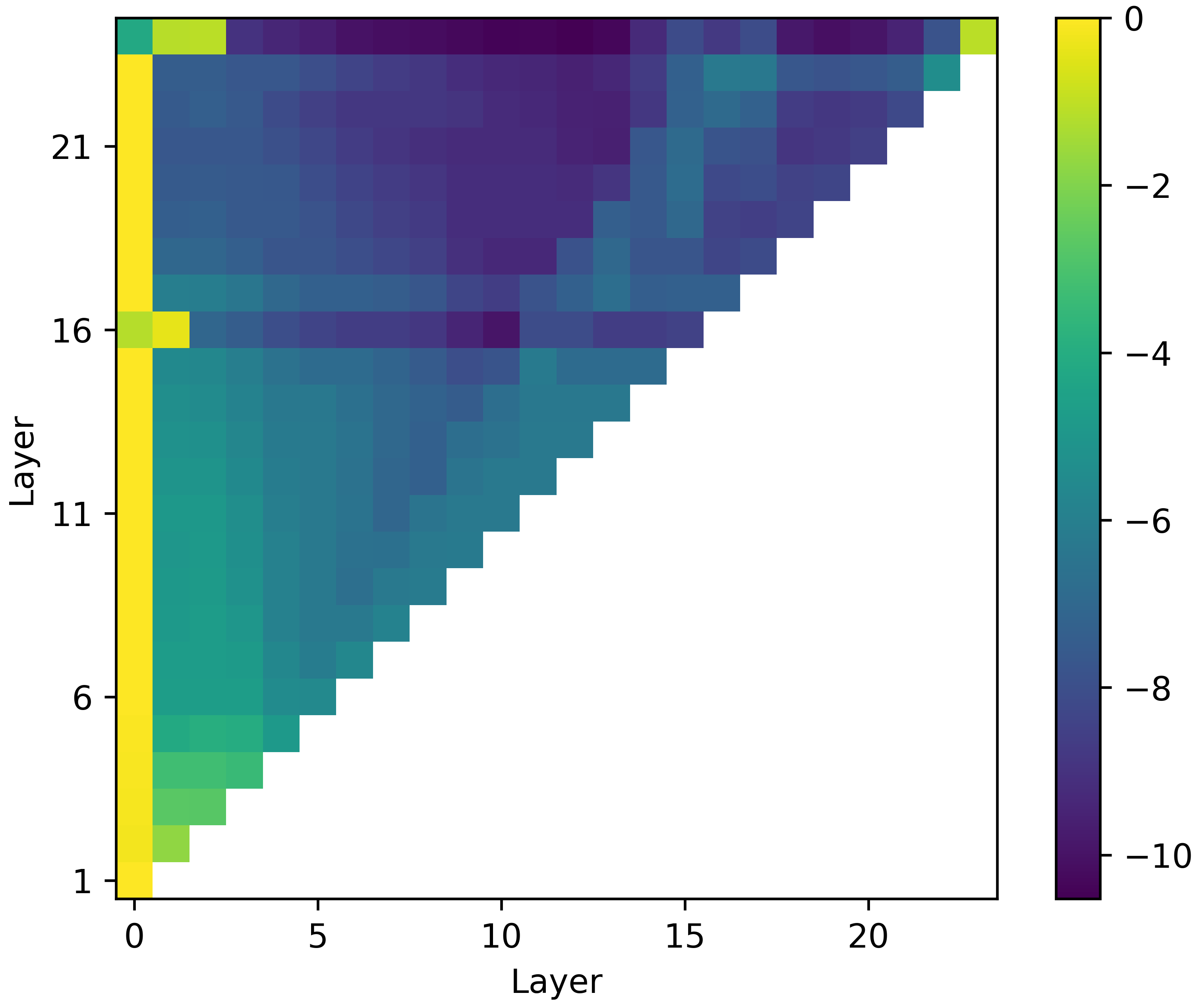}}
  	\caption{1B}
  	\label{subfig:heatmap-1b}
  \end{subfigure}
  \caption{Visualization of residual coefficients $\log (c_{ij} / c_{\max,j})$ learned by ANCRe, where horizontal and vertical axes stand for $i$ and $j$.}
  \label{fig:LLM-skip-heatmap}
  \end{center}
\end{figure}

\section{Numerical experiments}
This section implements numerical tests to assess the performance of ANCRe, and compare it with vanilla cascaded residual connections (i.e., the standard architecture). All tests are conducted on servers with NVIDIA A100 and H100 GPUs. Additional setups including models, hyperparameters, and datasets are detailed in Appendix~\ref{apdx:experiment-setups}. 

\subsection{Pre-training of LLMs}
\label{subsec:exp-LLM}
The evaluation of ANCRe begins with pre-training LLMs. Following the setups in~\citep{ReLoRA,GaLore}, a series of LLaMA models~\citep{llama} are trained on the widely adopted C4 dataset~\citep{C4}, which consists of cleaned English text collected across the Internet. 
The models span parameter counts of \{60M, 130M, 350M, 1B\}, with corresponding depths of \{8, 12, 24, 24\}. All experiments are run in BF16 precision. Two optimization schemes are considered: standard full pre-training (FullPT) and GaLore~\citep{GaLore}, which applies low-rank gradient projection for memory efficiency. For both schemes, learning rates are selected from $\{5\times10^{-4},10^{-3}, 5\times10^{-3}, 10^{-2}, 5\times10^{-2}\}$. ANCRe directly adopts the learning rate optimized for cascaded residual connections without additional tuning. 

Figure~\ref{fig:LLM-PPL} plots the validation perplexity over pre-training iterations, while Table~\ref{tab:LLM-PPL} reports quantitative results evaluated with the pre-trained models. Across all eight combinations of model sizes and optimization schemes, ANCRe consistently converges faster, and reduces perplexities by an average ($\pm$ standard deviation) of $0.73 \pm 0.33$. Notably, ANCRe matches the perplexity of cascaded residual connections using on average $34.3\%$ fewer training iterations. The advantage grows from $24\%$ to $46\%$ as the network scales deeper. This underscores not only the significance of residual connection topology, but also the effectiveness of ANCRe in leveraging the increased depth. 

In addition, Figure~\ref{fig:LLM-skip-heatmap} sketches the residual coefficients learned by ANCRe. For clearer visualization, the heatmap is normalized by the largest ingoing coefficient $c_{\max,j} := \max_i c_{ij}$ per row $j$, and displayed on a logarithmic scale. Rather than relying solely on the shortcut from immediately preceding layer, deeper layers under ANCRe ensemble subnetworks of different depths. 
The learned connectivity patterns are also consistent across model sizes, with a predominant emphasis on shortcuts originating from the first two layers. Furthermore, larger models (350M and 1B) exhibit denser connectivity than the smaller ones, indicating more effective exploitation of network depth.

\begin{figure}[t]
  \centering
  \begin{minipage}[t]{0.59\linewidth}
  \vspace{0cm}
   \captionof{table}{Perplexity ($\downarrow$) comparison of ANCRe and cascaded residual connections by pre-training LLaMA models of varying sizes. The better of the two are marked in solid lines.}
   \label{tab:LLM-PPL}
   \begin{center}
         \begin{tabular}{lcccc}
           \toprule
           Method  & 60M         & 130M      & 350M	& 1B  \\
           \midrule
           FullPT    & 30.39 & 25.07 & 19.00 & 16.64 \\
           FullPT+ANCRe & \textbf{29.62} & \textbf{24.48} & \textbf{18.32} & \textbf{15.41} \\
           \midrule
           GaLore    & 34.61 & 25.56 & 19.62 & 16.55 \\
           GaLore+ANCRe    & \textbf{33.69} & \textbf{24.68} & \textbf{19.01} & \textbf{16.45} \\
           \midrule
           Training tokens & 1.1B & 2.2B & 6.4B & 13.1B \\
           \bottomrule
         \end{tabular}
   \end{center}
  \end{minipage}
  \hfill
  \begin{minipage}[t]{0.4\linewidth}
  \vspace{0cm}
   \begin{center}
  	\centerline{\includegraphics[width=0.99\columnwidth]{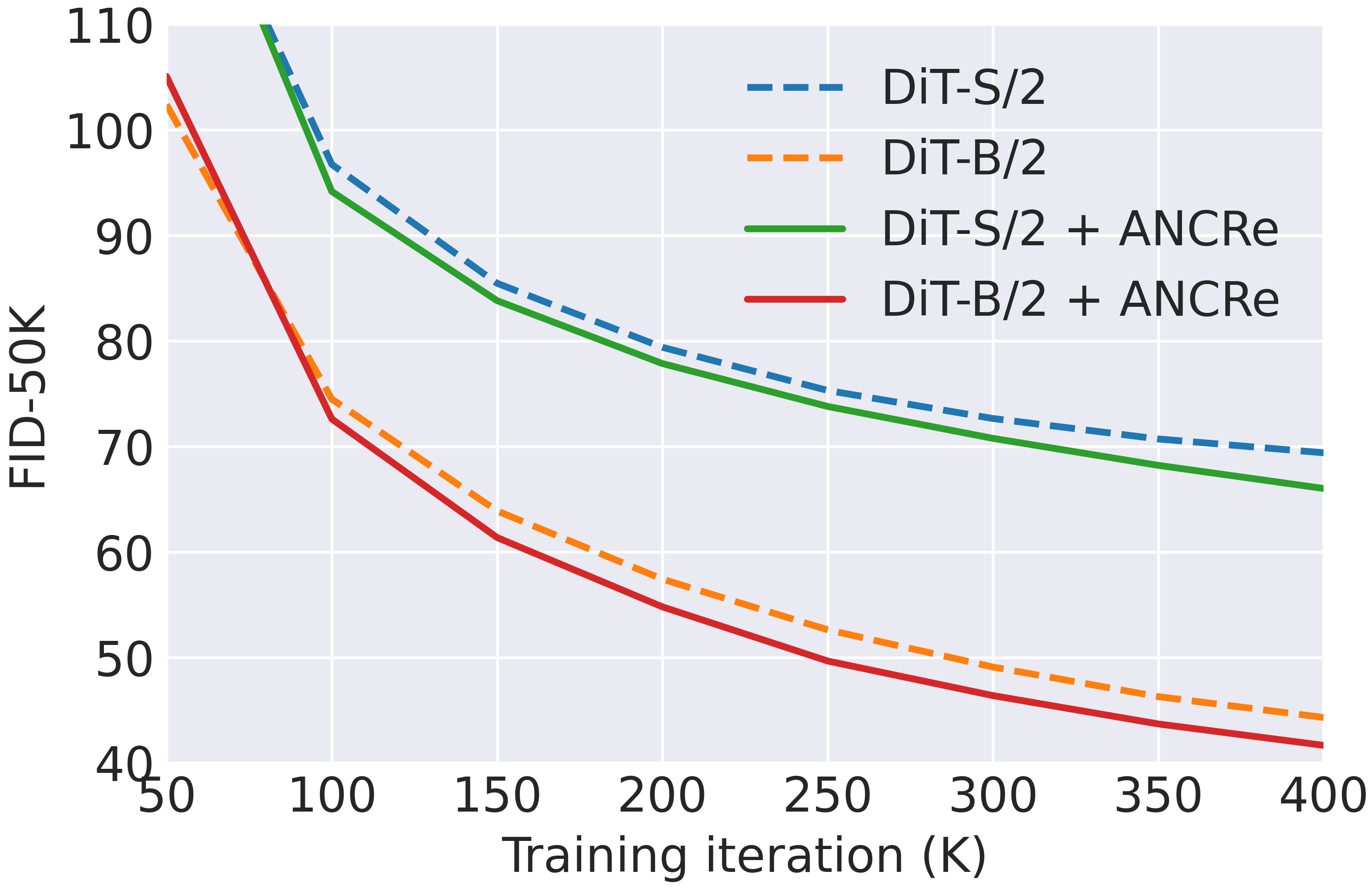}}
    \vspace{-.1cm}
  \caption{FID-50K evolution when pre-training DiT models.}
  \label{fig:DiT-FID}
  \end{center}
  \end{minipage}
\end{figure}

\begin{table}[t]
   \caption{Comparison of ANCRe and cascaded residual connections by pre-training DiT models. The better of the two are marked in bold.}
   \label{tab:DiT}
   \begin{center}
         \begin{tabular}{lccccc}
           \toprule
           Model  & FID$\downarrow$         & sFID$\downarrow$      & IS$\uparrow$ 	& Precision$\uparrow$	& Recall$\uparrow$  \\
           \midrule
           DiT-S/2    & 69.40 &	12.45 &	19.66 &	35.75\% & 55.82\% \\
           DiT-S/2+ANCRe & \textbf{66.01} &	\textbf{11.68} & \textbf{20.70} & \textbf{37.90\%} & \textbf{57.80\%} \\
           DiT-S/2 (cfg=1.5)    & 45.78 & 	9.08 &	33.48 &	46.08\% &	54.25\% \\
           DiT-S/2+ANCRe (cfg=1.5) & \textbf{42.99} & \textbf{8.61} &	\textbf{35.04} &	\textbf{48.53\%} &	\textbf{55.29\%} \\
           \midrule
           DiT-B/2    & 44.31 &	8.42 &	32.89 &	47.93\% &	61.55\% \\
           DiT-B/2+ANCRe    & \textbf{41.66} & \textbf{7.89} & \textbf{34.40} & \textbf{50.42\%} &	\textbf{64.20\%} \\
           DiT-B/2 (cfg=1.5)    & 22.41 &	6.35 &	65.27 &	60.75\%	& 52.41\% \\
           DiT-B/2+ANCRe (cfg=1.5)    & \textbf{20.53} & \textbf{5.81} & \textbf{70.45} & \textbf{65.91\%} & \textbf{56.51\%} \\
           \bottomrule
         \end{tabular}
   \end{center}
 \end{table}

\begin{figure}[t]
   \begin{center}
   \centerline{\includegraphics[width=.6\textwidth]{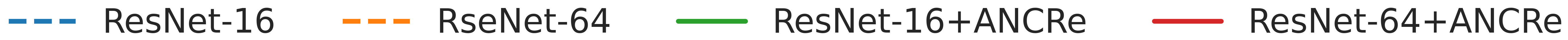}}
  \begin{subfigure}[t]{0.24\textwidth}
  	\centerline{\includegraphics[width=\textwidth]{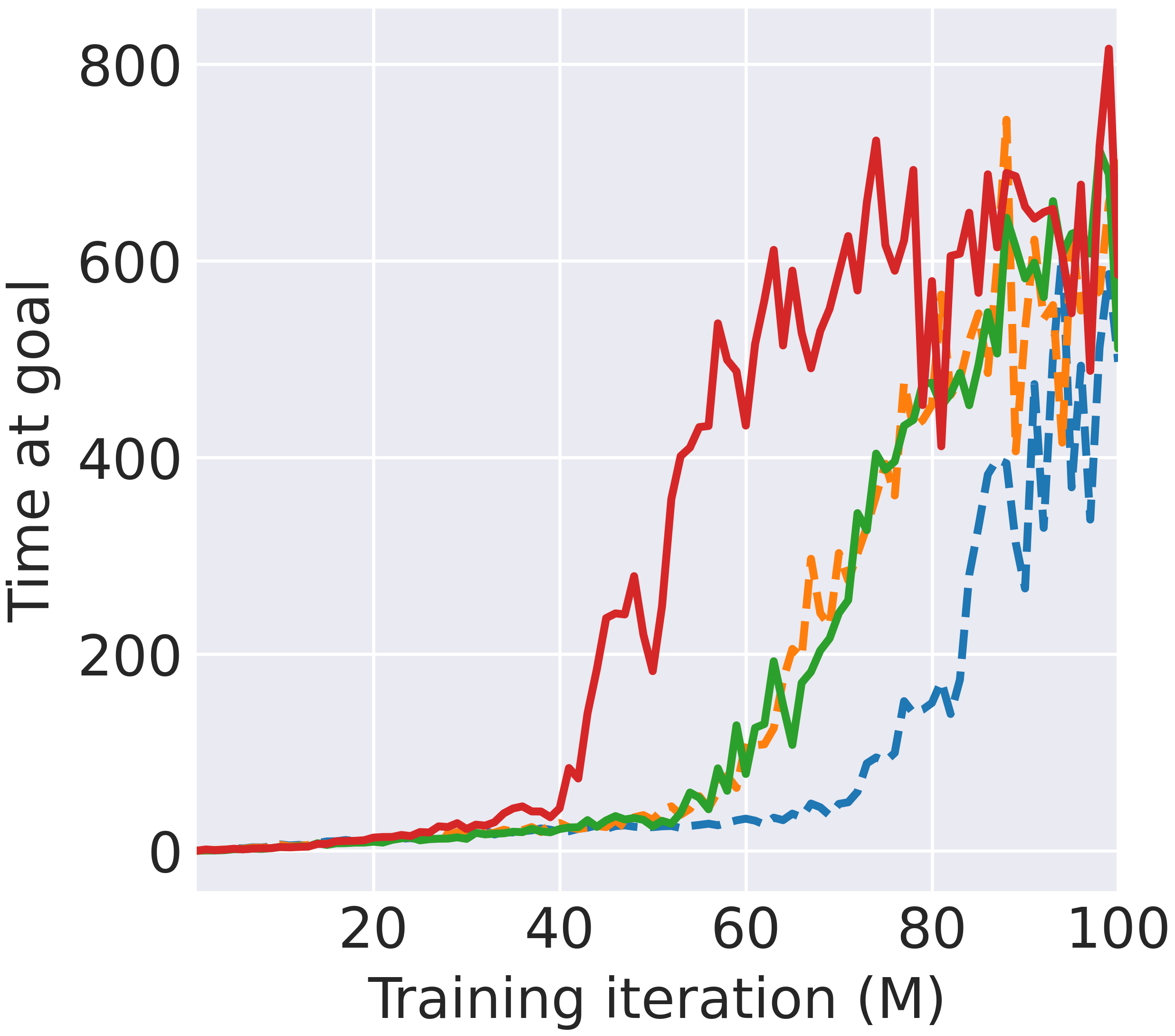}}
  	\caption{Humanoid}
  	\label{subfig:RL-humanoid}
  \end{subfigure}
  \begin{subfigure}[t]{0.24\textwidth}
  	\centerline{\includegraphics[width=\textwidth]{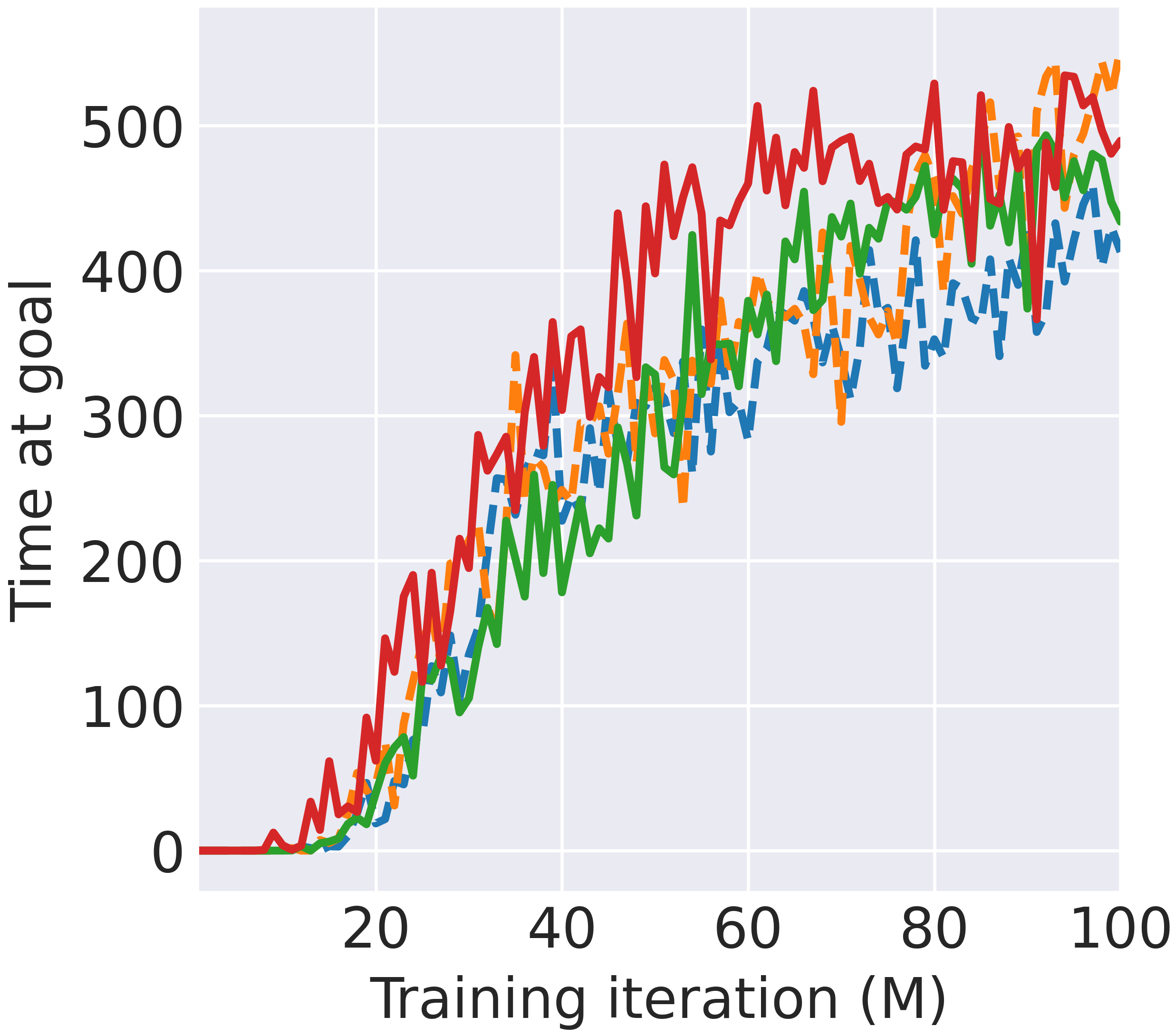}}
  	\caption{Ant Big Maze}
  	\label{subfig:RL-ant_big_maze}
  \end{subfigure}
  \begin{subfigure}[t]{0.24\textwidth}
  	\centerline{\includegraphics[width=\textwidth]{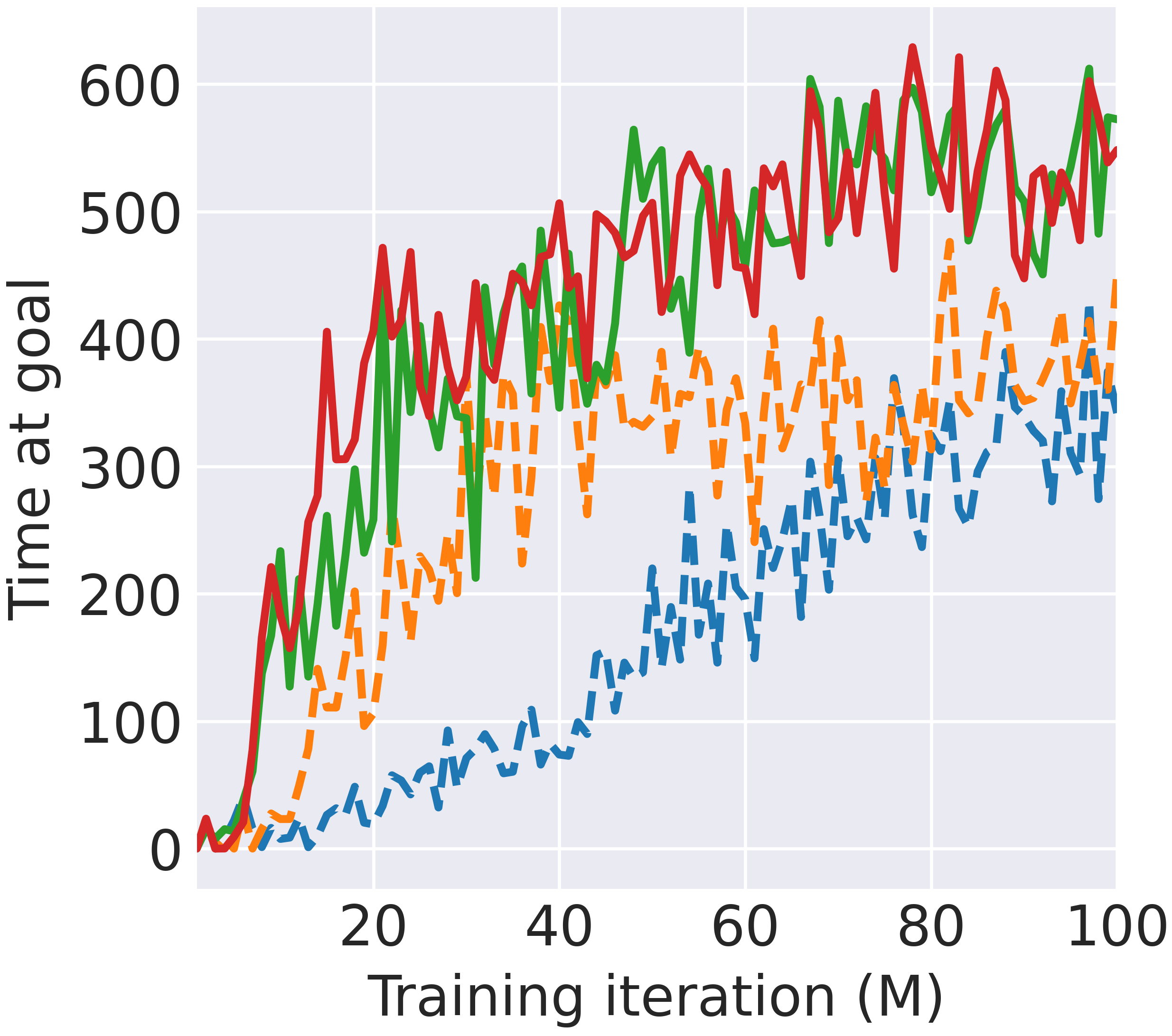}}
  	\caption{Arm Push Hard}
  	\label{subfig:RL-arm_push_hard}
  \end{subfigure}
    \begin{subfigure}[t]{0.24\textwidth}
  	\centerline{\includegraphics[width=\textwidth]{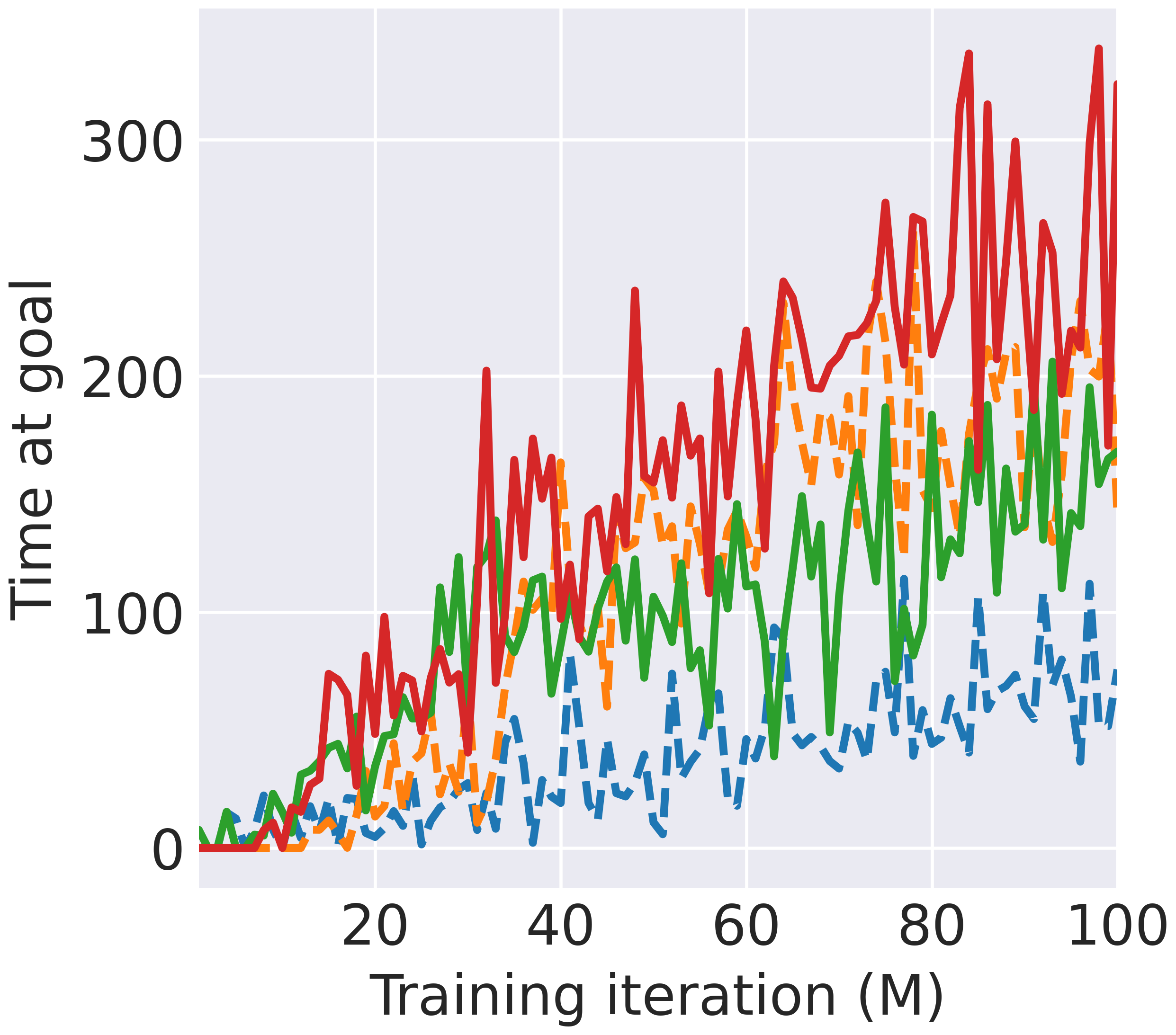}}
  	\caption{Arm Binpick Hard}
  	\label{subfig:RL-arm_binpick_hard}
  \end{subfigure}
  \caption{Reinforcement learning using ResNets of 16 and 64 layers.}
  \label{fig:RL}
  \end{center}
\end{figure}

\subsection{Pre-training of diffusion models}
Beyond LLMs, pre-training of diffusion models is also inspected following the experimental protocols in Diffusion Transformers (DiT)~\citep{DiT}. Specifically, class-conditional latent DiT-S/2 and DiT-B/2 are trained on the ImageNet-1K dataset~\citep{ImageNet}, which contains over 1.4 million images across 1,000 classes, at 256$\times$256 resolution for 400,000 iterations. All hyperparameters including the learning rate are set to the defaults in~\citep{DiT} without tuning. For evaluation, the standard Fr\'echet inception distance (FID)~\citep{FID} is calculated using 50,000 randomly sampled images from 250 DDPM steps~\citep{DDPM}, without classifier-free guidance (cfg) unless otherwise stated. Secondary metrics include spatial FID (sFID)~\citep{sFID}, inception score (IS)~\citep{IS}, and improved precision/recall~\citep{prec-rec}. 

The evolution of FID-50K over the pre-training iterations is outlined in Figure~\ref{fig:DiT-FID}. It is observed that the incorporation of ANCRe remarkably accelerates the convergence, and leads to improved final performance, thanks to the effective utilization of network depth. 
It is worth noting that the performance gains are relatively smaller than those observed for LLMs, as DiT are generally shallower. 
In addition to the FID-50K, Table~\ref{tab:DiT} reports auxiliary metrics computed using the pre-trained models for a more comprehensive comparison. Again, ANCRe consistently outperforms cascaded residual connections by an average of approximately $6\%$ among metrics, demonstrating its effectiveness for different data modalities.

\subsection{Reinforcement learning with ResNets}
The third test examines reinforcement learning (RL) with ``actions'' as the modality. The test settings follow from~\citep{GCRL}, which demonstrates that scaling model depth in RL can unlock new goal-reaching capabilities. The test involves an unsupervised goal-conditioned regime, in which rewards are only available upon reaching the commanded goals. 
Broadly speaking, this setting follows the same spirit as RLVR \citep{lambert2024tulu}.
Given that the sparse rewards provide rather limited feedback, the problem is considerably more challenging compared to standard RL scenarios supervised by dense rewards or expert demonstrations. Consequently, it necessitates deeper models of sufficient representational capacity. The models are ResNets~\citep{ResNet} of varying depths, with cascaded residual connections spanning consecutive blocks of four layers. ANCRe is applied at the same granularity of blocks, after removing the original cascaded residual connections. The RL benchmark encompasses four locomotion and manipulation tasks \{\texttt{Humanoid}, \texttt{Ant\_Big\_Maze}, \texttt{Arm\_Push\_Hard}, \texttt{Arm\_Binpick\_Hard}\} from the Gymnasium environments~\citep{gym}. Training is performed with the simple contrastive RL (CRL) algorithm~\citep{CRL} for 100 million environment steps, and all hyperparameters are consistent with~\citep{GCRL}. 

Figure~\ref{fig:RL} sketches the time-at-goal metric ($\uparrow$) averaged across episodes. For all four tasks, ANCRe consistently leads to considerably accelerated convergence than the vanilla ResNet baselines. Notably, when equipped with ANCRe, a 16-layer ResNet can match or even surpass the performance of a $4 \times$ deeper ResNet. These results further corroborate that standard ResNet architectures tend to underutilize increased network depth, and demonstrate that ANCRe provides an effective plug-and-play remedy for improving depth efficiency.

\begin{table}[t]
\begin{minipage}{0.45\textwidth}
    \caption{Perplexity with different normalization schemes.}
   \label{tab:normalization}
   \begin{center}
         \begin{tabular}{lcc}
           \toprule
           Normalization  & 130M         & 350M  \\
           \midrule
           None & diverge & diverge \\
           Outgoing~\eqref{eq:out-normalize} & 25.03 &	18.63 \\
           Ingoing~\eqref{eq:in-normalize}    & \textbf{24.48} &	 \textbf{18.32} \\
           \bottomrule
         \end{tabular}
   \end{center}
\end{minipage}
\qquad
\begin{minipage}{0.45\textwidth}
    \caption{Ablation study using all or learnable shortcuts.}
   \vspace{-.1cm}
   \label{tab:ablation}
   \begin{center}
         \begin{tabular}{cccc}
           \toprule
           All $i$:$j$  & Learnable  &  130M  &  350M \\
           \midrule
           $\surd$  & $\times$ & diverge & diverge  \\
           $\times$ & $\surd$ & 27.18 & 21.72 \\
           $\surd$  &  $\surd$ & \textbf{24.48} & \textbf{18.32} \\
           \bottomrule
         \end{tabular}
   \end{center}
\end{minipage}
\end{table}

\subsection{Ablation study}
\label{sec:ablation}
The last group of numerical tests perform ablation studies to justify ANCRe's design choices. The tests employ the setups as in Section~\ref{subsec:exp-LLM} using the LLaMA-130M and LLaMA-350M models with FullPT. 

The first test compares the two normalization schemes discussed in Section~\ref{subsec:ANCRe}. From Table~\ref{tab:normalization}, omitting normalization leads to divergence, since the coefficients can grow unbounded. It is further observed that ingoing normalization yields smaller fluctuations in the training loss and consequently achieves lower perplexity than outgoing normalization. This is because the former normalizes the inputs to each layer, so that computations are numerically stabler.

The second ablation study showcases that the effectiveness of ANCRe does not stem from simply adding more connections or making coefficients learnable. To see this, we consider two variants of ANCRe: one containing all possible shortcuts $i$:$j$ ($i < j$) but with constant coefficients $c_{ij} = 1/j$, and another with learnable cascaded residual connection on each Transformer block. As shown in Table~\ref{tab:ablation}, these variants either diverges or achieves less competitive performance. This highlights that ANCRe’s gains arise from the synergistic interaction between its connection topology and adaptive coefficient learning rather than either component alone.

\begin{table}[t]
   \caption{Training time and GPU memory comparison.}
   \vspace{-.1cm}
   \label{tab:time}
   \begin{center}
         \begin{tabular}{lccc}
           \toprule
           Model, GPU, BS   & ANCRe &  Runtime & Memory  \\
           \midrule
           LLaMA-60M,    & $\times$ & 6h55min & 29.59GB \\
           1$\times$A100, 256 & $\surd$ & 6h57min & 29.71GB \\
           \midrule
           LLaMA-130M,   &  $\times$ & 14h45min & 23.26GB \\
           1$\times$A100, 128  & $\surd$ & 14h46min &  23.36GB \\
           \midrule
           LLaMA-350M  &  $\times$ & 17h25min & 27.21GB \\
           4$\times$A100, 64  & $\surd$ & 17h32min & 27.28GB \\
           \bottomrule
         \end{tabular}
   \end{center}
   \vspace{-.1cm}
\end{table}

Finally, we measure training runtime and GPU memory usage to assess the efficiency of ANCRe. As shown in Table~\ref{tab:time}, incorporating ANCRe introduces negligible computational and memory overhead across all model scales. In particular, the additional runtime remains within 1\%, while the increase in peak GPU memory is consistently below 0.12GB. This efficiency arises because the forward and backward passes of the residual connections involve only a scalar–matrix multiplication and a matrix addition, which are ignorable compared to the dominant computations in Transformers. 
Together, these results demonstrate that ANCRe delivers performance gains with minimal overheads, ensuring its practical merits of extracting more performance from deep foundation models.

\section{Conclusion and outlook}
This paper investigated how the topology of residual connections fundamentally influences the convergence and depth efficiency of deep networks, and introduced adaptive neural connection reassignment (ANCRe) as a principled approach to learn the optimal residual connectivity. 
One limitation of this work is that our analysis is limited to LNNs. 
Looking ahead, several directions have been added to the research agenda. First, to capture the optimization dynamics of nonlinear neural networks, the convergence analysis will be broadened to more general models beyond LNNs. Second, more effective schemes for parameterizing and normalizing residual connections tailored to foundation models such as Transformers will be explored, with an emphasis on improving stability and depth efficiency. Finally, pre-training tests will be scaled to larger models, for which efficient depth utilization becomes increasingly critical.

\bibliography{ref}

@STRING{IEEEPAMI = {IEEE Trans. Pattern Anal. Mach. Intel.}}

@STRING{JMLR = {J. Mach. Learn. Res.} }

@STRING{ICML = {Proc. Int. Conf. on Machine Learning (ICML)} }

@STRING{UAI = {Proc. Conf. on Uncertainty in Artificial Intelligence (UAI)}}

@STRING{ICLR = {Proc. Int. Conf. on Learning Representations (ICLR)} }

@STRING{NIPS = {Proc. Neural Information Processing Systems (NeurIPS)}}

@STRING{COLT = {Proc. Annual Conf. on Learning Theory (COLT)}}

@STRING{CVPR = {Proc. Conf. Computer Vision and Pattern Recognition (CVPR)}}

@STRING{ICCV = {Proc. Int. Conf. on Computer Vision (ICCV)}}

@string{ECCV = {Proc. European Conf. on Computer Vision (ECCV)}}

@string{ACL = {Proc. Conf. Assoc. Comput. Linguist. Meet. (ACL)}}

@article{GPT4,
  title={Gpt-4 technical report},
  author={Achiam, Josh and Adler, Steven and Agarwal, Sandhini and Ahmad, Lama and Akkaya, Ilge and Aleman, Florencia Leoni and Almeida, Diogo and Altenschmidt, Janko and Altman, Sam and Anadkat, Shyamal and others},
  journal={arXiv preprint arXiv:2303.08774},
  year={2023}
}

@article{Codex,
  title={Evaluating large language models trained on code},
  author={Chen, Mark and Tworek, Jerry and Jun, Heewoo and Yuan, Qiming and Pinto, Henrique Ponde De Oliveira and Kaplan, Jared and Edwards, Harri and Burda, Yuri and Joseph, Nicholas and Brockman, Greg and others},
  journal={arXiv preprint arXiv:2107.03374},
  year={2021}
}

@inproceedings{Minerva,
 author = {Lewkowycz, Aitor and Andreassen, Anders and Dohan, David and Dyer, Ethan and Michalewski, Henryk and Ramasesh, Vinay and Slone, Ambrose and Anil, Cem and Schlag, Imanol and Gutman-Solo, Theo and Wu, Yuhuai and Neyshabur, Behnam and Gur-Ari, Guy and Misra, Vedant},
 booktitle = NIPS,
 pages = {3843--3857},
 title = {Solving Quantitative Reasoning Problems with Language Models},
 volume = {35},
 year = {2022}
}

@inproceedings{DDPM,
 author = {Ho, Jonathan and Jain, Ajay and Abbeel, Pieter},
 booktitle = {Advances in Neural Information Processing Systems},
 pages = {6840--6851},
 title = {Denoising Diffusion Probabilistic Models},
 volume = {33},
 year = {2020}
}

@inproceedings{DDIM,
title={Denoising Diffusion Implicit Models},
author={Jiaming Song and Chenlin Meng and Stefano Ermon},
booktitle=ICLR,
year={2021},
}

@inproceedings{score-matching,
title={Score-Based Generative Modeling through Stochastic Differential Equations},
author={Yang Song and Jascha Sohl-Dickstein and Diederik P Kingma and Abhishek Kumar and Stefano Ermon and Ben Poole},
booktitle=ICLR,
year={2021},
}

@InProceedings{CLIP,
  title = 	 {Learning Transferable Visual Models From Natural Language Supervision},
  author =       {Radford, Alec and Kim, Jong Wook and Hallacy, Chris and Ramesh, Aditya and Goh, Gabriel and Agarwal, Sandhini and Sastry, Girish and Askell, Amanda and Mishkin, Pamela and Clark, Jack and Krueger, Gretchen and Sutskever, Ilya},
  booktitle = 	 ICML,
  pages = 	 {8748--8763},
  year = 	 {2021},
  volume = 	 {139},
  month = 	 {18--24 Jul},
}

@InProceedings{PaLM-E,
  title = 	 {{P}a{LM}-E: An Embodied Multimodal Language Model},
  author =       {Driess, Danny and Xia, Fei and Sajjadi, Mehdi S. M. and Lynch, Corey and Chowdhery, Aakanksha and Ichter, Brian and Wahid, Ayzaan and Tompson, Jonathan and Vuong, Quan and Yu, Tianhe and Huang, Wenlong and Chebotar, Yevgen and Sermanet, Pierre and Duckworth, Daniel and Levine, Sergey and Vanhoucke, Vincent and Hausman, Karol and Toussaint, Marc and Greff, Klaus and Zeng, Andy and Mordatch, Igor and Florence, Pete},
  booktitle = 	 ICML,
  pages = 	 {8469--8488},
  year = 	 {2023},
  volume = 	 {202},
  month = 	 {23--29 Jul},
}

@inproceedings{GCRL,
title={1000 Layer Networks for Self-Supervised {RL}: Scaling Depth Can Enable New Goal-Reaching Capabilities},
author={Kevin Wang and Ishaan Javali and Micha{\l} Bortkiewicz and Tomasz Trzcinski and Benjamin Eysenbach},
booktitle=NIPS,
year={2025},
}

@InProceedings{ResNet,
author = {He, Kaiming and Zhang, Xiangyu and Ren, Shaoqing and Sun, Jian},
title = {Deep Residual Learning for Image Recognition},
booktitle = CVPR,
month = {June},
year = {2016}
}

@InProceedings{ReZero,
  title = 	 {Re{Z}ero is all you need: fast convergence at large depth},
  author =       {Bachlechner, Thomas and Majumder, Bodhisattwa Prasad and Mao, Henry and Cottrell, Gary and McAuley, Julian},
  booktitle = 	 UAI,
  pages = 	 {1352--1361},
  year = 	 {2021},
  volume = 	 {161},
  month = 	 {27--30 Jul},
}

@inproceedings{Transformer,
 author = {Vaswani, Ashish and Shazeer, Noam and Parmar, Niki and Uszkoreit, Jakob and Jones, Llion and Gomez, Aidan N and Kaiser, \L ukasz and Polosukhin, Illia},
 booktitle = NIPS,
 pages = {},
 title = {Attention is All you Need},
 volume = {30},
 year = {2017}
}

@article{depth-efficient,
  title={Do Language Models Use Their Depth Efficiently?},
  author={Csord{\'a}s, R{\'o}bert and Manning, Christopher D and Potts, Christopher},
  journal={arXiv preprint arXiv:2505.13898},
  year={2025}
}

@article{perception-encoder,
  title={Perception encoder: The best visual embeddings are not at the output of the network},
  author={Bolya, Daniel and Huang, Po-Yao and Sun, Peize and Cho, Jang Hyun and Madotto, Andrea and Wei, Chen and Ma, Tengyu and Zhi, Jiale and Rajasegaran, Jathushan and Rasheed, Hanoona and others},
  journal={arXiv preprint arXiv:2504.13181},
  year={2025}
}

@inproceedings{HC,
title={Hyper-Connections},
author={Defa Zhu and Hongzhi Huang and Zihao Huang and Yutao Zeng and Yunyao Mao and Banggu Wu and Qiyang Min and Xun Zhou},
booktitle=ICLR,
year={2025},
}

@article{mHC,
  title={mHC: Manifold-Constrained Hyper-Connections},
  author={Xie, Zhenda and Wei, Yixuan and Cao, Huanqi and Zhao, Chenggang and Deng, Chengqi and Li, Jiashi and Dai, Damai and Gao, Huazuo and Chang, Jiang and Zhao, Liang and others},
  journal={arXiv preprint arXiv:2512.24880},
  year={2025}
}

@inproceedings{ViT,
 author = {Wang, Yulin and Huang, Rui and Song, Shiji and Huang, Zeyi and Huang, Gao},
 booktitle = NIPS,
 pages = {11960--11973},
 title = {Not All Images are Worth 16x16 Words: Dynamic Transformers for Efficient Image Recognition},
 volume = {34},
 year = {2021}
}

@inproceedings{converge-analysis-LNN,
title={A Convergence Analysis of Gradient Descent for Deep Linear Neural Networks},
author={Sanjeev Arora and Nadav Cohen and Noah Golowich and Wei Hu},
booktitle=ICLR,
year={2019},
}

@InProceedings{exp-converge-LNN,
  title = 	 {Exponential Convergence Time of Gradient Descent for One-Dimensional Deep Linear Neural Networks},
  author =       {Shamir, Ohad},
  booktitle = 	 COLT,
  pages = 	 {2691--2713},
  year = 	 {2019},
  volume = 	 {99},
  month = 	 {25--28 Jun},
}

@InProceedings{GaLore,
  title = 	 {{G}a{L}ore: Memory-Efficient {LLM} Training by Gradient Low-Rank Projection},
  author =       {Zhao, Jiawei and Zhang, Zhenyu and Chen, Beidi and Wang, Zhangyang and Anandkumar, Anima and Tian, Yuandong},
  booktitle = 	 ICML,
  pages = 	 {61121--61143},
  year = 	 {2024},
  volume = 	 {235},
  series = 	 {Proceedings of Machine Learning Research},
  month = 	 {21--27 Jul},
}

@article{C4,
  author  = {Colin Raffel and Noam Shazeer and Adam Roberts and Katherine Lee and Sharan Narang and Michael Matena and Yanqi Zhou and Wei Li and Peter J. Liu},
  title   = {Exploring the Limits of Transfer Learning with a Unified Text-to-Text Transformer},
  journal = JMLR,
  year    = {2020},
  volume  = {21},
  number  = {140},
  pages   = {1--67},
}

@inproceedings{ReLoRA,
title={Re{L}o{RA}: High-Rank Training Through Low-Rank Updates},
author={Vladislav Lialin and Sherin Muckatira and Namrata Shivagunde and Anna Rumshisky},
booktitle=ICLR,
year={2024},
}

@article{llama,
  title={LLaMA: Open and Efficient Foundation Language Models},
  author={Touvron, Hugo and Lavril, Thibaut and Izacard, Gautier and Martinet, Xavier and Lachaux, Marie-Anne and Lacroix, Timoth{\'e}e and Rozi{\`e}re, Baptiste and Goyal, Naman and Hambro, Eric and Azhar, Faisal and others},
  journal={arXiv preprint arXiv:2302.13971},
  year={2023}
}

@InProceedings{DiT,
    author    = {Peebles, William and Xie, Saining},
    title     = {Scalable Diffusion Models with Transformers},
    booktitle = ICCV,
    month     = {October},
    year      = {2023},
    pages     = {4195-4205}
}

@inproceedings{ImageNet,
 author = {Krizhevsky, Alex and Sutskever, Ilya and Hinton, Geoffrey E},
 booktitle = NIPS,
 pages = {},
 title = {ImageNet Classification with Deep Convolutional Neural Networks},
 volume = {25},
 year = {2012}
}

@inproceedings{FID,
 author = {Heusel, Martin and Ramsauer, Hubert and Unterthiner, Thomas and Nessler, Bernhard and Hochreiter, Sepp},
 booktitle = NIPS,
 pages = {},
 title = {G{AN}s Trained by a Two Time-Scale Update Rule Converge to a Local Nash Equilibrium},
 volume = {30},
 year = {2017}
}

@InProceedings{sFID,
  title = 	 {Generating images with sparse representations},
  author =       {Nash, Charlie and Menick, Jacob and Dieleman, Sander and Battaglia, Peter},
  booktitle = 	 ICML,
  pages = 	 {7958--7968},
  year = 	 {2021},
  volume = 	 {139},
  month = 	 {18--24 Jul},
}

@inproceedings{IS,
 author = {Salimans, Tim and Goodfellow, Ian and Zaremba, Wojciech and Cheung, Vicki and Radford, Alec and Chen, Xi and Chen, Xi},
 booktitle = NIPS,
 pages = {},
 title = {Improved Techniques for Training GANs},
 volume = {29},
 year = {2016}
}

@inproceedings{prec-rec,
 author = {Kynk\"{a}\"{a}nniemi, Tuomas and Karras, Tero and Laine, Samuli and Lehtinen, Jaakko and Aila, Timo},
 booktitle = NIPS,
 pages = {},
 title = {Improved Precision and Recall Metric for Assessing Generative Models},
 volume = {32},
 year = {2019}
}

@inproceedings{CRL,
 author = {Eysenbach, Benjamin and Zhang, Tianjun and Levine, Sergey and Salakhutdinov, Russ R},
 booktitle = NIPS,
 pages = {35603--35620},
 title = {Contrastive Learning as Goal-Conditioned Reinforcement Learning},
 volume = {35},
 year = {2022}
}

@inproceedings{gym,
title={Gymnasium: A Standard Interface for Reinforcement Learning Environments},
author={Mark Towers and Ariel Kwiatkowski and John U. Balis and Gianluca De Cola and Tristan Deleu and Manuel Goul{\~a}o and Kallinteris Andreas and Markus Krimmel and Arjun KG and Rodrigo De Lazcano Perez-Vicente and J K Terry and Andrea Pierr{\'e} and Sander V Schulhoff and Jun Jet Tai and Hannah Tan and Omar G. Younis},
booktitle={The Thirty-ninth Annual Conference on Neural Information Processing Systems Datasets and Benchmarks Track},
year={2025},
}

@inproceedings{AdamW,
  title={Decoupled weight decay regularization},
  author={Loshchilov, Ilya and Hutter, Frank},
  booktitle=ICLR,
  year={2017}
}

@ARTICLE{DeepNet,
  author={Wang, Hongyu and Ma, Shuming and Dong, Li and Huang, Shaohan and Zhang, Dongdong and Wei, Furu},
  journal=IEEEPAMI, 
  title={Deep{N}et: Scaling Transformers to 1,000 Layers}, 
  year={2024},
  volume={46},
  number={10},
  pages={6761-6774},
}

@InProceedings{pmlr-v119-xiong20b,
  title = 	 {On Layer Normalization in the Transformer Architecture},
  author =       {Xiong, Ruibin and Yang, Yunchang and He, Di and Zheng, Kai and Zheng, Shuxin and Xing, Chen and Zhang, Huishuai and Lan, Yanyan and Wang, Liwei and Liu, Tieyan},
  booktitle = 	 ICML,
  pages = 	 {10524--10533},
  year = 	 {2020},
  volume = 	 {119},
  month = 	 {13--18 Jul},
}

@inproceedings{NeuTRENO,
 author = {Nguyen, Tam and Nguyen, Tan and Baraniuk, Richard},
 booktitle = NIPS,
 pages = {80233--80256},
 title = {Mitigating Over-smoothing in Transformers via Regularized Nonlocal Functionals},
 volume = {36},
 year = {2023}
}

@inproceedings{ResFormer,
    title = "Value Residual Learning",
    author = "Zhou, Zhanchao  and
      Wu, Tianyi  and
      Jiang, Zhiyun  and
      Obeid, Fares  and
      Lan, Zhenzhong",
    booktitle = ACL,
    month = jul,
    year = "2025",
    pages = "28341--28356",
}

@article{telgarsky2015representation,
  title={Representation benefits of deep feedforward networks},
  author={Telgarsky, Matus},
  journal={arXiv preprint arXiv:1509.08101},
  year={2015}
}

@article{li2018visualizing,
  title={Visualizing the loss landscape of neural nets},
  author={Li, Hao and Xu, Zheng and Taylor, Gavin and Studer, Christoph and Goldstein, Tom},
  journal=NIPS,
  volume={31},
  year={2018}
}

@article{tsallis-entropy,
  title={Possible generalization of Boltzmann-Gibbs statistics},
  author={Tsallis, Constantino},
  journal={Journal of statistical physics},
  volume={52},
  number={1},
  pages={479--487},
  year={1988},
  publisher={Springer}
}

@article{llama3,
  title={The llama 3 herd of models},
  author={Grattafiori, Aaron and Dubey, Abhimanyu and Jauhri, Abhinav and Pandey, Abhinav and Kadian, Abhishek and Al-Dahle, Ahmad and Letman, Aiesha and Mathur, Akhil and Schelten, Alan and Vaughan, Alex and others},
  journal={arXiv preprint arXiv:2407.21783},
  year={2024}
}

@article{srivastava2015highway,
  title={Highway networks},
  author={Srivastava, Rupesh Kumar and Greff, Klaus and Schmidhuber, J{\"u}rgen},
  journal={arXiv preprint arXiv:1505.00387},
  year={2015}
}

@inproceedings{he2016identity,
  title={Identity mappings in deep residual networks},
  author={He, Kaiming and Zhang, Xiangyu and Ren, Shaoqing and Sun, Jian},
  booktitle=ECCV,
  pages={630--645},
  year={2016},
  organization={Springer}
}

@inproceedings{huang2017densely,
  title={Densely connected convolutional networks},
  author={Huang, Gao and Liu, Zhuang and Van Der Maaten, Laurens and Weinberger, Kilian Q},
  booktitle=CVPR,
  pages={4700--4708},
  year={2017}
}

@article{qwen3,
  title={Qwen3 technical report},
  author={Yang, An and Li, Anfeng and Yang, Baosong and Zhang, Beichen and Hui, Binyuan and Zheng, Bo and Yu, Bowen and Gao, Chang and Huang, Chengen and Lv, Chenxu and others},
  journal={arXiv preprint arXiv:2505.09388},
  year={2025}
}

@article{team2025gemma,
  title={Gemma 3 technical report},
  author={Team, Gemma and Kamath, Aishwarya and Ferret, Johan and Pathak, Shreya and Vieillard, Nino and Merhej, Ramona and Perrin, Sarah and Matejovicova, Tatiana and Ram{\'e}, Alexandre and Rivi{\`e}re, Morgane and others},
  journal={arXiv preprint arXiv:2503.19786},
  year={2025}
}

@article{zagoruyko2016wide,
  title={Wide residual networks},
  author={Zagoruyko, Sergey and Komodakis, Nikos},
  journal={arXiv preprint arXiv:1605.07146},
  year={2016}
}

@inproceedings{xie2017aggregated,
  title={Aggregated residual transformations for deep neural networks},
  author={Xie, Saining and Girshick, Ross and Doll{\'a}r, Piotr and Tu, Zhuowen and He, Kaiming},
  booktitle=CVPR,
  pages={1492--1500},
  year={2017}
}

@inproceedings{balduzzi2017shattered,
  title={The shattered gradients problem: If resnets are the answer, then what is the question?},
  author={Balduzzi, David and Frean, Marcus and Leary, Lennox and Lewis, JP and Ma, Kurt Wan-Duo and McWilliams, Brian},
  booktitle=ICML,
  pages={342--350},
  year={2017},
}

@inproceedings{touvron2021going,
  title={Going deeper with image transformers},
  author={Touvron, Hugo and Cord, Matthieu and Sablayrolles, Alexandre and Synnaeve, Gabriel and J{\'e}gou, Herv{\'e}},
  booktitle=ICCV,
  pages={32--42},
  year={2021}
}

@article{hardt2016identity,
  title={Identity matters in deep learning},
  author={Hardt, Moritz and Ma, Tengyu},
  journal={arXiv preprint arXiv:1611.04231},
  year={2016}
}

@article{haber2017stable,
  title={Stable architectures for deep neural networks},
  author={Haber, Eldad and Ruthotto, Lars},
  journal={Inverse problems},
  volume={34},
  number={1},
  pages={014004},
  year={2017},
  publisher={IOP Publishing}
}

@article{wu2019global,
  title={Global convergence of gradient descent for deep linear residual networks},
  author={Wu, Lei and Wang, Qingcan and Ma, Chao},
  journal=NIPS,
  volume={32},
  year={2019}
}

@inproceedings{du2019width,
  title={Width provably matters in optimization for deep linear neural networks},
  author={Du, Simon and Hu, Wei},
  booktitle=ICML,
  pages={1655--1664},
  year={2019},
}

@article{zou2020global,
  title={On the global convergence of training deep linear resnets},
  author={Zou, Difan and Long, Philip M and Gu, Quanquan},
  journal={arXiv preprint arXiv:2003.01094},
  year={2020}
}

@article{zhang2019fixup,
  title={Fixup initialization: Residual learning without normalization},
  author={Zhang, Hongyi and Dauphin, Yann N and Ma, Tengyu},
  journal={arXiv preprint arXiv:1901.09321},
  year={2019}
}

@article{de2020batch,
  title={Batch normalization biases residual blocks towards the identity function in deep networks},
  author={De, Soham and Smith, Sam},
  journal=NIPS,
  volume={33},
  pages={19964--19975},
  year={2020}
}

@inproceedings{kai2025,
  title={PoLAR: Polar-decomposed Low-Rank Adapter Representation},
  author={Lion, Kai and Zhang, Liang and Li, Bingcong and He, Niao},
  booktitle=NIPS,
  year={2025}
}

@inproceedings{hu2021lora,
  title={Lo{RA}: Low-Rank Adaptation of Large Language Models},
  author={Edward Hu and Yelong Shen and Phillip Wallis and Zeyuan Allen-Zhu and Yuanzhi Li and Shean Wang and Lu Wang and Weizhu Chen},
  booktitle=ICLR,
  year={2022},
}

@article{lambert2024tulu,
  title={Tulu 3: Pushing frontiers in open language model post-training},
  author={Lambert, Nathan and Morrison, Jacob and Pyatkin, Valentina and Huang, Shengyi and Ivison, Hamish and Brahman, Faeze and Miranda, Lester James V and Liu, Alisa and Dziri, Nouha and Lyu, Shane and others},
  journal={arXiv preprint arXiv:2411.15124},
  year={2024}
}

@article{zhang2025reflora,
  title={RefLoRA: Refactored Low-Rank Adaptation for Efficient Fine-Tuning of Large Models},
  author={Zhang, Yilang and Li, Bingcong and Giannakis, Georgios B},
  journal={arXiv preprint arXiv:2505.18877},
  year={2025}
}

@article{wang2024lora,
  title={Lora-pro: Are low-rank adapters properly optimized?},
  author={Wang, Zhengbo and Liang, Jian and He, Ran and Wang, Zilei and Tan, Tieniu},
  journal={arXiv preprint arXiv:2407.18242},
  year={2024}
}

@article{li2024crucial,
  title={On the crucial role of initialization for matrix factorization},
  author={Li, Bingcong and Zhang, Liang and Mokhtari, Aryan and He, Niao},
  journal={arXiv preprint arXiv:2410.18965},
  year={2024}
}
\bibliographystyle{plainnat}

\newpage
\appendix
\onecolumn

\section{Additional related work}
\label{apdx:related-work}
\textbf{Scaling deep.}
Broadly speaking, our work is also related to mechanisms for scaling neural networks to greater depth. Normalization techniques are often believed to facilitate training at depth. For example, the normalization in pre-activation ResNets~\citep{he2016identity} and transformers~\citep{pmlr-v119-xiong20b} has been shown to improve training stability. Moreover, it is shown in~\citep{de2020batch} that batch normalization downscales the hidden activations on the residual branch by an order of square root of the network depth at initialization. Proper normalization techniques, in conjunction with residual connections, enable deeper LLMs~\citep{DeepNet} and vision transformers~\citep{touvron2021going}. 
At the same time, there are also works suggesting that normalization is not strictly necessary, as some of its early training benefits can be reproduced by carefully designed initialization; see e.g., \citep{zhang2019fixup,de2020batch}. Nevertheless, residual connections remain a central ingredient for modern deep neural networks.

\textbf{Architecture-optimizer co-design.} Recently, there has been a growing trend toward opening up the “black box” of neural networks and leverage learning dynamics to co-design architectures and optimizers for more efficient training. 
Much of this research focuses on LoRA \citep{hu2021lora} due to its structural simplicity.
For instance, \citep{zhang2025reflora, wang2024lora} investigate the gauge invariance underlying LoRA optimization, while \citep{kai2025} propose architectural redesigns to improve performance scaling relative to adapter rank. Furthermore, \citep{li2024crucial} demonstrate that initialization can lead to an exponential gap in theoretically tractable settings.
Our work aligns with this trajectory by demonstrating that residual connections significantly reshape the loss landscape.

\section{Missing proofs}
\label{apdx:proof}
This appendix offers the missing proofs for theories in the main paper. Our analysis leverages the following notations. 

\textbf{Additional notation.} $\langle \cdot, \cdot \rangle_\fro$, $\tr (\cdot)$, $\rank(\cdot)$ represent for Frobenius inner product, trace, and rank; $\lambda_i(\cdot)$ and $\sigma_i(\cdot)$ are the $i$-th largest eigenvalue and singular value.
$\diag(\bfv)$ is the diagonal matrix whose diagonal entries are from vector $\bfv$, while $\diag(\bfM)$ refers to the vector formed by the diagonals of matrix $\bfM$. For simplicity, the subscripts of $\loss_1$ and $\loss_2$ are dropped when the objective is clearly specified. 

\subsection{Proof of Theorem~\ref{thm:LB}}
Before presenting the proof, we first simplify the optimization objective and look into the associated dynamics. 

Recall that $\bfX$ is orthogonal, so~\eqref{eq:obj-LB} can be equivalently rewritten as
\begin{equation}
\label{eq.prob-bad}
    \loss (t) = \frac{1}{2} \Bigg\| \bfW_3(t) \bfW_2(t) \big( \bfW_1(t) + \bfI_d \big) \bfX - \bfY \bigg\|_\fro^2 = \frac{1}{2} \Bigg\| \bfW_3(t) \bfW_2(t) \big( \bfW_1(t) + \bfI_d \big) - \bfY \bfX^\top \bigg\|_\fro^2.
\end{equation}
Defining $\bfA := \bfY \bfX^\top$, the optimization dynamics then boil down to
\begin{align*}
	\bfE(t) &= \bfW_3(t) \bfW_2(t) \big(\bfW_1(t) + \bfI_d \big) - \bfA, \\
	\nabla_{\bfW_1} \loss(t) &= \bfW_2(t)^\top \bfW_3^\top(t) \bfE(t),  \\
	\nabla_{\bfW_2} \loss(t) &= \bfW_3(t)^\top \bfE(t) \big(\bfW_1^\top(t) + \bfI_d\big), \\
	\nabla_{\bfW_3} \loss(t) &= \bfE(t) \big(\bfW_1^\top(t) + \bfI_d\big) \bfW_2^\top(t).
\end{align*}

Given these, the gradient flow~\eqref{eq:GF} can thus be written as
\begin{align}
\label{eq:GF-3layers}
	\dt{\bfW_1(t)} = - \nabla_{\bfW_1} \loss(t), ~\dt{\bfW_2(t)} = - \nabla_{\bfW_2} \loss(t) , ~\dt{\bfW_3(t)} = - \nabla_{\bfW_3} \loss(t). 
\end{align}

\begin{theorem}[Formal restatement of Theorem~\ref{thm:LB}]
\label{thm:LB-formal}
	There exists a problem instance of~\eqref{eq.prob-bad} with diagonally initialized weights, such that if $\bfW_1[d,d](0) \in [-0.5, 0]$ and $\bfW_2[d,d](0) = \bfW_3[d,d](0) \in (0, 0.5]$, then gradient flow~\eqref{eq:GF-3layers} can only converge sublinearly at
    \begin{equation*}
		\loss(t)  \ge \Omega(1/t^2).
    \end{equation*}
\end{theorem}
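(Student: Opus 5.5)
The plan is to build a problem instance in which the matrix gradient flow~\eqref{eq:GF-3layers} decouples into independent scalar flows, and then study the scalar flow of the $(d,d)$ coordinate directly. I take $\bfX$ orthogonal and $\bfY$ such that $\bfA = \bfY\bfX^\top$ is diagonal. The concrete choice is $\bfA[d,d] = 0$; the other diagonal entries are arbitrary, e.g.\ also $0$.

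\textbf{Step 1: diagonality is preserved.} If all $\bfW_k(0)$ are diagonal, then $\bfE(t)$ and the three gradients above are products and sums of diagonal matrices, hence diagonal. The diagonal-restricted ODE therefore has a solution, and by uniqueness of ODE solutions (the vector field is locally Lipschitz) it is the solution. Consequently $\loss(t)=\frac12\sum_{m}e_m(t)^2 \ge \frac12 e_d(t)^2$, where $e_d = w_3w_2(1+w_1)-\bfA[d,d]$ is built from the $(d,d)$ entries $w_k := \bfW_k[d,d]$. It therefore suffices to lower-bound the scalar error $e_d(t)$.

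\textbf{Step 2: reduce to a two-variable system with a conserved quantity.} Set $u := 1+w_1$, so $u(0)\in[0.5,1]$. With $\bfA[d,d]=0$ the scalar flow is
\[
w_2' = -w_3 e u,\qquad w_3' = -w_2 e u,\qquad u' = -w_2w_3 e,\qquad e = w_2w_3u .
\]
Since $w_2(0)=w_3(0)$ and the system is symmetric in $(w_2,w_3)$, uniqueness gives $w_2\equiv w_3 =: v$. The flow then becomes $v' = -v^3u^2$, $u' = -v^4u$, with $e=v^2u$. Both $v>0$ and $u>0$ are preserved, because each derivative is the variable itself times a bounded factor. As a result $v$ and $u$ are nonincreasing, so $u\le u(0)\le 1$. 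The key observation is the balancedness invariant $\frac{\dd}{\dd t}(u^2-v^2) = -2v^4u^2+2v^4u^2 = 0$. Hence $u^2 = v^2 + \delta$ with $\delta := u(0)^2-v(0)^2 \ge 0$, using $u(0)\ge 0.5\ge v(0)$.

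\textbf{Step 3: integrate.} From $u\le 1$ we get $v'\ge -v^3$, so $v(t)^{-2}\le v(0)^{-2}+2t$, that is, $v(t)^2\ge 1/(v(0)^{-2}+2t)$.
\begin{itemize}
\item If $\delta>0$, then $u\ge\sqrt{\delta}$ for all time, and
\[
\loss(t)\ge \tfrac12 v^4u^2 \ge \frac{\delta}{2\,(v(0)^{-2}+2t)^2} = \Omega(1/t^2).
\]
\item In the boundary case $\delta=0$ (only $w_1(0)=-0.5$, $v(0)=0.5$), we have $u=v$. Then $v'=-v^5$, so $v^4 = 1/(v(0)^{-4}+4t)$, and $\loss \ge \frac12 v^6 = \Omega(t^{-3/2})$, which is still $\Omega(1/t^2)$.
\end{itemize}

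\textbf{Main obstacle.} The main obstacle is keeping $u=1+w_1$ bounded away from zero, so that the factor $u^2$ does not collapse and destroy the rate. A direct attack via $\ln u(t)\ge \ln u(0)-\int_0^t v^4$ requires an upper bound on $v$, which is circular. The conserved quantity $u^2-v^2$ from Step 2 is what I would rely on to break this circularity.
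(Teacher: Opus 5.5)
Your proof is correct and follows essentially the same route as the paper: the same rank-deficient diagonal instance with $\bfA[d,d]=0$, the diagonal decoupling, the symmetry $w_2\equiv w_3$, and the conserved quantity $u^2-v^2$, which the paper also uses to get $0\le v\le u\le 1$. The only difference is the final comparison step. The paper closes the inequality $a_d'\ge -3a_d^2$ for the product $a_d=uv^2$, using $u\ge a_d^{1/3}$ and $a_d\le 1$, which gives $\Omega(1/t^2)$ uniformly with no case split. You instead integrate $v'\ge -v^3$ and use $u\ge\sqrt{\delta}$, so you need the separate boundary case $\delta=0$, which you handle correctly and for which you even get the sharper $\Omega(t^{-3/2})$.
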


\begin{proof}
We construct a specific instance of $\bfA$ and analyze the resultant gradient flow dynamics. In particular, consider a diagonal matrix $\bfA$ with rank $r_A < d$. That is, $\bfA = \diag(\sigma_1, \sigma_2, \ldots, \sigma_{r_A}, 0, \ldots,  0)$, where the last singular value $\sigma_d = 0$. Furthermore, since $\bfW_1(0)$, $\bfW_2(0)$ and $\bfW_3(0)$ are diagonally initialized, it can be verified that the dynamics of gradient flow is diagonally separable throughout training. Specifically, letting $w_i(t)$, $u_i(t)$ and $v_i(t)$ be the $i$-th diagonal of $\bfW_1(t)+\bfI_d$, $\bfW_2(t)$ and $\bfW_3(t)$, the dynamics for $\forall i \in \{1, 2 \ldots, d\}$ can be written in a decoupled and coordinate-wise form as
\begin{subequations}
\label{eq:diag-dynamics}
	\begin{align}
		a_i(t) & := w_i(t) u_i(t) v_i(t), \label{eq.a} \\
		\dt{w_i(t)} &= - u_i(t) v_i(t) \big( a_i(t) - \sigma_i \big), \\
		\dt{u_i(t)} &= - w_i(t) v_i(t) \big( a_i(t) - \sigma_i \big), \label{eq.u} \\
		\dt{v_i(t)} &= - w_i(t) u_i(t) \big( a_i(t) - \sigma_i \big).
	\end{align}
\end{subequations}

We now focus on the coordinate $i=d$, for which $\sigma_d = 0$. Moreover, since we initialize $u_d(0) = v_d(0)$, Lemma \ref{lemma.u=v} leads to $u_d(t)= v_d(t)$ for $\forall t \ge 0$. Combining this with~\eqref{eq:diag-dynamics} we obtain
\begin{align*}
	\dt{a_d(t)} &= u_d(t) v_d(t) \dt{w_d(t)} + w_d(t) v_d(t) \dt{u_d(t)} + w_d(t) u_d(t) \dt{v_d(t)} \\
	& = - u_d^4(t) a_d(t) - 2 w_d^2(t) u_d^2(t) a_d(t)  \\
	& = - \frac{a_d^3(t)}{w_d^2(t)} - 2 w_d(t) a_d^2(t).
\end{align*}

Moreover, the initialization yields $w_d(0) = 1 + \bfW_1[d,d](0) \in [0.5, 1]$ and $u_d(0) = v_d(0) \in (0, 0.5]$. It then follows from Lemma~\ref{lemma.w} that $0 \le u_d(t) = v_d(t) \le w_d(t) \leq 1$. 

Therefore, it holds
\begin{equation*}
    a_d(t) = w_d(t) u_d(t) v_d(t) = w_d(t) u^2_d(t) \in [0,1]. 
\end{equation*}
Using this, we can further obtain
\begin{align*}	
	\dt{a_d(t)} & = - \frac{a_d^3(t)}{w_d^2(t)} - 2 w_d(t) a_d^2(t) \\
	& \overset{(a)}{\geq} - \frac{a_d^3(t)}{w_d^2(t)} - 2 a_d^2(t) \\
	& \overset{(b)}{\geq} - a_d^{7/3}(t) - 2 a_d^2(t) \\
	& \overset{(c)}{\geq}  - 3 a_d^2(t) 
\end{align*}
where (a) is by Lemmas \ref{lemma.w}; (b) utilizes Lemma \ref{lemma.w2} with $a_d(t) > 0$; and (c) follows from $a_d(t) \leq 1$. 

The inequality above implies that $a_d(t) \geq \frac{a_d(0)}{1 + 3a_d(0) t}$ as a result of Lemma \ref{apdx.aux.lemma3}. Hence, we arrive at
\begin{align*}
	\loss(t) = \frac{1}{2} \sum_{i=1}^d \big(a_i(t) - \sigma_i \big)^2 \geq  \frac{1}{2}	\big(a_d(t) \big)^2 \geq \frac{a_d^2(0)}{(1 + 3a_d(0) t)^2} = \Omega (1/t^2)
\end{align*}
where $a_d(0) = w_d(0) u_d^2(0) > 0$. The proof is thus completed.

\end{proof}

\begin{lemma}\label{lemma.u=v}
	If $u_d(0) = v_d(0)$, then $u_d(t) = v_d(t),\,\forall t \ge 0$.
\end{lemma}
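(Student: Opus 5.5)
The plan is to show that the difference $\delta(t) := u_d(t) - v_d(t)$ satisfies a homogeneous linear ODE. Since it starts at zero, uniqueness of solutions then forces it to stay at zero.

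From the coordinate-wise dynamics~\eqref{eq:diag-dynamics} with $\sigma_d = 0$, subtracting the equations for $u_d$ and $v_d$ gives
\begin{equation*}
\dt{\delta(t)} = - w_d(t)\, a_d(t)\, \big(v_d(t) - u_d(t)\big) = w_d(t)\, a_d(t)\, \delta(t).
\end{equation*}
Define $g(t) := w_d(t) a_d(t)$. This is a continuous function of time along the trajectory, because the right-hand side of the gradient flow is polynomial in the weights and hence locally Lipschitz. The solution is therefore unique and continuously differentiable on its maximal interval of existence.

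Grönwall's identity then gives $\delta(t) = \delta(0)\exp\big(\int_0^t g(s)\,\dd s\big)$, so $\delta(0)=0$ implies $\delta(t)=0$ for all $t$ in that interval. An alternative route reaches the same conclusion. The triple $(w_d, u_d, u_d)$, where $u_d$ solves the reduced two-variable system obtained by setting $v_d = u_d$, solves the full three-variable system with the same initial data. Picard–Lindelöf uniqueness then identifies it with the actual trajectory.

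The only delicate point is that the trajectory must exist for all $t \ge 0$, i.e., there is no finite-time blow-up. I would handle this with the standard fact that gradient flow does not increase the loss, so $\loss(t) \le \loss(0)$. In addition, the balancedness invariants $w_d^2 - u_d^2$ and $u_d^2 - v_d^2$ are conserved, since their time derivatives cancel. Together these keep all variables bounded: $a_d$ is bounded via the loss, and the conserved differences of squares then bound each factor. So the solution extends globally, and the identity $u_d(t)=v_d(t)$ holds for every $t \ge 0$.
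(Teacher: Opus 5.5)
Your proof is correct. Your main argument takes a different and more rigorous route than the paper's.

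The paper only observes that $\dt{u_d(t)} = \dt{v_d(t)}$ whenever $u_d(t) = v_d(t)$, i.e., that the vector field is tangent to the plane $\{u_d = v_d\}$. Concluding invariance from this silently uses uniqueness of solutions. Your second route (lifting the reduced two-variable solution to $(w_d,u_d,u_d)$ and invoking Picard--Lindel\"of) makes exactly that step explicit.

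Your main route instead shows that $\delta = u_d - v_d$ satisfies the scalar linear equation $\dt{\delta(t)} = w_d(t)a_d(t)\,\delta(t)$. An integrating factor alone then gives $\delta(t) = \delta(0)\exp\big(\int_0^t w_d a_d \,\dd s\big)$. This buys three things:
\begin{itemize}
\item it needs no uniqueness theorem for the full system;
\item it gives more information, e.g., the sign of $u_d - v_d$ is preserved;
\item it works verbatim for any $\sigma_d$, with $w_d a_d$ replaced by $w_d(a_d - \sigma_d)$.
\end{itemize}

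You also handle global existence, which the paper never addresses, although the lemma is asserted for all $t \ge 0$ under the sole hypothesis $u_d(0) = v_d(0)$. Your bound holds for arbitrary initialization and does not rely on the ranges assumed in Lemma~\ref{lemma.w}. Set $s = u_d^2$ and use the conserved constants $c_1 = w_d^2 - u_d^2$ and $c_2 = v_d^2 - u_d^2$. Then $a_d^2 = (s+c_1)\,s\,(s+c_2)$, with all three factors nonnegative. This is a cubic in $s$ with leading coefficient one, so the bound on $a_d$ from $\loss(t) \le \loss(0)$ bounds $s$, and hence $w_d$, $u_d$ and $v_d$.
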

\begin{proof}
	This symmetry follows immediately from the observation that $\dt{u_d(t)} = \dt{v_d(t)}$ whenever $u_d(t) = v_d(t)$. As $u_d(0) = v_d(0)$, this symmetry always holds throughout the optimization. 
\end{proof}

\begin{lemma}\label{lemma.descent2}
	Under gradient flow~\eqref{eq:GF-3layers}, it holds that
	\begin{align*}
		\dt{a_d^2(t)}  \leq 0.
	\end{align*}
\end{lemma}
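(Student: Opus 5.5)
We show $\frac{\mathsf d}{\mathsf d t} a_d^2(t) = 2 a_d(t)\,\frac{\mathsf d a_d(t)}{\mathsf d t}$ is nonpositive by expressing the derivative of $a_d$ along the flow as a negative multiple of $a_d$. Differentiate $a_d(t) = w_d(t)u_d(t)v_d(t)$ with the product rule and substitute the coordinate-wise dynamics~\eqref{eq:diag-dynamics} with $\sigma_d = 0$. Each of the three terms then carries the common factor $-a_d(t)$:
\begin{equation*}
\frac{\mathsf d a_d(t)}{\mathsf d t} = -a_d(t)\Big( u_d^2(t)v_d^2(t) + w_d^2(t)v_d^2(t) + w_d^2(t)u_d^2(t) \Big).
\end{equation*}
This needs no symmetry assumption. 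Lemma~\ref{lemma.u=v} would let us merge two of the terms, but it is unnecessary here, so the lemma holds for general initialization.

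Next, set $s(t) := u_d^2v_d^2 + w_d^2v_d^2 + w_d^2u_d^2 \ge 0$, a sum of squares. Then
\begin{equation*}
\frac{\mathsf d a_d^2(t)}{\mathsf d t} = 2a_d(t)\frac{\mathsf d a_d(t)}{\mathsf d t} = -2 s(t)\, a_d^2(t) \le 0.
\end{equation*}
This is just the statement that the $d$-th summand of the loss, $\frac12 (a_d-\sigma_d)^2 = \frac12 a_d^2$, is nonincreasing. That is expected, since gradient flow on a decoupled coordinate decreases that coordinate's own loss.

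The only care needed is bookkeeping: confirm that $\sigma_d = 0$ makes every error factor $a_d - \sigma_d$ equal to $a_d$, and confirm that the diagonal separability stated in the proof of Theorem~\ref{thm:LB-formal} holds, so that the scalar dynamics~\eqref{eq:diag-dynamics} govern the $d$-th diagonal exactly. Once these are checked there is no real obstacle; the sign follows immediately from the sum-of-squares factor. As a byproduct, $a_d$ solves a linear ODE $\dot a_d = -s(t)a_d$, so $a_d(t) = a_d(0)\exp(-\int_0^t s)$ never changes sign. This positivity of $a_d$ is what the main proof uses in step (b).
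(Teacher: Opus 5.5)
Your proof is correct and is the same product-rule computation the paper uses; the only difference is that the paper first invokes Lemma~\ref{lemma.u=v} to write the factor as $u_d^4 + 2w_d^2u_d^2$, whereas your unsymmetrized factor $u_d^2v_d^2 + w_d^2v_d^2 + w_d^2u_d^2$ shows the symmetry is not needed. Your byproduct $a_d(t) = a_d(0)\exp\!\big(-\int_0^t s(r)\,\dd r\big)$ is also worth keeping, since it gives the strict positivity $a_d(t) > 0$ that step (b) and Lemma~\ref{apdx.aux.lemma3} require, whereas the paper only establishes $a_d(t) \ge 0$.
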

\begin{proof}
	It is straightforward to see that 
	\begin{align*}
		\frac{1}{2}\dt{a_d^2(t)} &= a_d(t) u_d(t) v_d(t) \dt{w_i(t)} + a_d(t) w_d(t) v_d(t) \dt{u_d(t)} + w_d(t) u_d(t) a_d(t) \dt{v_d(t)} \\
	& = - u_d^4(t) a_d^2(t) - 2 w_d^2(t) u_d^2(t) a_d^2(t)  \leq 0
	\end{align*}	
	where we used $u_d(t) = v_d(t)$ given by Lemma~\ref{lemma.u=v}, together with the dynamics in~\eqref{eq:diag-dynamics}. 
\end{proof}

\begin{lemma}
\label{lemma.w}
	If $0 < u_d(0) = v_d(0) \le w_d(0) \le 1$, then $0 \le u_d(t) = v_d(t) \le w_d(t) \leq 1$.
\end{lemma}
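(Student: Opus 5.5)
The plan is an invariant-region argument for the coordinate-$d$ dynamics in \eqref{eq:diag-dynamics}, where $\sigma_d = 0$. By Lemma~\ref{lemma.u=v} we have $u_d(t) = v_d(t)$ for all $t$. The system then reduces to two equations:
\begin{align*}
\dt{w_d} = -u_d^2\, a_d, \qquad \dt{u_d} = -w_d u_d\, a_d, \qquad a_d = w_d u_d^2 .
\end{align*}
The first step uses the standard balancedness conservation law of gradient flow on factorized objectives. Since $w_d \dt{w_d} = -w_d u_d^2 a_d = u_d \dt{u_d}$, the quantity $w_d^2(t) - u_d^2(t)$ is constant in time. It equals $w_d^2(0) - u_d^2(0) \ge 0$ by hypothesis. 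Provided $w_d$ and $u_d$ stay nonnegative, this immediately gives $u_d(t) \le w_d(t)$.

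The second step is nonnegativity. Here I would note that $u_d \equiv 0$ is a solution of the $u$-equation, since its right-hand side $-w_d u_d a_d = -w_d^2 u_d^3$ is locally Lipschitz. By uniqueness of ODE solutions, $u_d$ cannot cross zero, so $u_d(0) > 0$ implies $u_d(t) > 0$ for all $t$. For $w_d$, the conservation law gives $w_d^2 = u_d^2 + c$ with $c \ge 0$, so $w_d^2 \ge u_d^2 > 0$. Hence $w_d$ never vanishes, and by continuity it keeps its initial positive sign. Together with the first step this yields $0 \le u_d(t) = v_d(t) \le w_d(t)$.

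The third step is the upper bound $w_d(t) \le 1$. Since $a_d = w_d u_d^2 \ge 0$ on the region just established, we get $\dt{w_d} = -u_d^2 a_d \le 0$. So $w_d$ is nonincreasing and $w_d(t) \le w_d(0) \le 1$. The same computation shows $u_d$ is nonincreasing, which is consistent with Lemma~\ref{lemma.descent2}.

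The main obstacle is the circularity: the sign facts are used to derive the ordering, and vice versa. I would handle this with a continuity and first-exit-time argument. Let $T$ be the supremum of times on which $0 < u_d \le w_d \le 1$ holds. On $[0,T)$ all three steps are valid, and their conclusions hold with non-strict inequalities that persist by continuity at $T$. The conservation law together with ODE uniqueness rules out $u_d$ hitting zero at $T$. Therefore the region cannot be exited, and $T = \infty$. Global existence of the solution follows because all variables stay bounded in $[0,1]$.
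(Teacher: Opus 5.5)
Your proof is correct. Its backbone, the conserved quantity $w_d^2-u_d^2$, is the one the paper uses; the two arguments differ in how they control signs. The paper notes that $\dt{w_d^2}=\dt{u_d^2}=-2a_d^2\le 0$ holds with no sign assumptions at all, so both squares are nonincreasing from the outset. Continuity then keeps $w_d$ and $u_d$ from crossing zero, since a nonincreasing nonnegative square that hits $0$ must stay there. This gives $0\le w_d\le w_d(0)\le 1$ and $0\le u_d\le u_d(0)$ directly, and the conservation law then yields $u_d\le w_d$. You instead get positivity of $u_d$ from ODE uniqueness (every point with $u_d=0$ is an equilibrium of the reduced system), transfer it to $w_d$ through $w_d^2=u_d^2+c$, and only then deduce that $w_d$ is nonincreasing.

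Two remarks on your route. First, there is no real circularity, so the exit-time argument can be dropped. The conservation law is purely algebraic, and your uniqueness argument for $u_d$ uses nothing about the sign of $w_d$, so your steps already form a linear chain. Even more directly, the closed forms $w_d(t)=w_d(0)\exp\bigl(-\int_0^t u_d^4(s)\,\mathsf{d}s\bigr)$ and $u_d(t)=u_d(0)\exp\bigl(-\int_0^t w_d^2(s)u_d^2(s)\,\mathsf{d}s\bigr)$ give positivity and both upper bounds at once. Second, your route gives something the paper's does not: strict positivity $u_d,w_d>0$, hence $a_d(t)>0$ for all $t$. The paper's version only gives $\ge 0$. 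Yet the proof of Theorem~\ref{thm:LB-formal} needs $w_d\neq 0$ to write $a_d^3/w_d^2$, and needs $a_d(t)>0$ when invoking Lemma~\ref{lemma.w2} and when dividing by $a_d^2$ in Lemma~\ref{apdx.aux.lemma3}. Your argument supplies exactly that missing fact.
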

\begin{proof}
	Leveraging~\eqref{eq:diag-dynamics} and Lemma~\ref{lemma.u=v}, it can be readily verified that
	\begin{align*}
		\dt{w_d^2(t)} = \dt{u_d^2(t)} = -2w_d (t) u_d(t) v_d(t) a_d(t) = -2 a_d^2(t) \le 0.
	\end{align*}
	This implies 
    \begin{equation*}
        w_d^2(t) - w_d^2(0) = u_d^2(t) - u_d^2(0) := \Delta(t) \le 0,
    \end{equation*}
    which proves $w_d^2(t) \ge u_d^2(t)$ as $w_d(0) \ge u_d(0) > 0$. 

    Note that
    \begin{equation*}
        \Big| \dt{w_d(t)} \Big| = u_d^2(t) | a_d(t)| = u_d^4(t) | w_d(t)| \le u_d^4(0) | w_d(0)| \le 1. 
    \end{equation*}
    This suggests $w_d(t)$ is a polynomial ODE that is locally Lipschitz, it has no jump according to Picard–Lindel\"of theorem. As $w_d(0) \in (0, 1]$ and $w_d^2(t)$ is non-increasing, we must have $0 \le w_d(t) \le w_d(0) \le 1$. Likewise, it also holds $0 \le u_d(t) \le u_d(0) < 1$. 

    Combining these results, we conclude that $0 \le u_d(t) = v_d(t) \le w_d(t) \le 1$, which completes the proof.
    \end{proof}

\begin{lemma}\label{lemma.w2}
	If $0 < u_d(0) = v_d(0) \le w_d(0) \le 1$, then $w(t) \geq a_d^{1/3}(t)$.
\end{lemma}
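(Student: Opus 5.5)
The plan is to reduce the claim to the ordering already established in Lemma~\ref{lemma.w}. By definition \eqref{eq.a} and Lemma~\ref{lemma.u=v}, we have $a_d(t) = w_d(t)\,u_d(t)\,v_d(t) = w_d(t)\,u_d^2(t)$. Lemma~\ref{lemma.w} gives $0 \le u_d(t) \le w_d(t) \le 1$ for all $t \ge 0$, so
\begin{equation*}
	a_d(t) = w_d(t)\,u_d^2(t) \le w_d(t)\cdot w_d^2(t) = w_d^3(t).
\end{equation*}
Every factor here is nonnegative, so multiplying the inequalities is legitimate. Since $a_d(t) \ge 0$ and the cube root is monotone on $[0,\infty)$, taking cube roots yields $w_d(t) \ge a_d^{1/3}(t)$, which is the claim. (The statement writes $w(t)$; I read this as $w_d(t)$.)

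The key steps, in order, are as follows. First, invoke Lemma~\ref{lemma.u=v} to replace $v_d$ by $u_d$. Second, invoke Lemma~\ref{lemma.w} for the sign and ordering facts $0 \le u_d \le w_d$. Third, bound $u_d^2 \le w_d^2$ using nonnegativity. Fourth, multiply by $w_d \ge 0$ and take cube roots.

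The only point needing care is nonnegativity. The bound $u_d^2 \le w_d^2$ must not be applied with $u_d$ negative, and $a_d$ must be nonnegative for the real cube root to be monotone and well defined. Both are supplied by the lower bound $0 \le u_d(t)$ in Lemma~\ref{lemma.w}, which in turn rests on $w_d^2 - u_d^2$ being conserved and on the continuity argument given there. With this, the hypothesis $0 < u_d(0) = v_d(0) \le w_d(0) \le 1$ is exactly what Lemma~\ref{lemma.w} needs, so nothing beyond it is required.

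In the application at step (b) of the proof of Theorem~\ref{thm:LB-formal}, one also needs $w_d(t) > 0$ in order to divide, giving $1/w_d^2 \le a_d^{-2/3}$. This holds whenever $a_d(t) > 0$, because then $w_d(t) \ge a_d^{1/3}(t) > 0$ by the bound just proved.
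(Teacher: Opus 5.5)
Your argument is correct and is essentially the paper's: the paper writes $a_d^2(t) = u_d^4(t)\,w_d^2(t) \le w_d^6(t)$ via Lemmas~\ref{lemma.u=v} and~\ref{lemma.w} and then uses $w_d(t) \ge 0$ to take the root, while you bound $a_d(t) = w_d(t)\,u_d^2(t) \le w_d^3(t)$ directly and take cube roots, which is the same step without the squaring. Your reading of $w(t)$ as $w_d(t)$ is right, and so is your remark that $a_d(t) > 0$ forces $w_d(t) > 0$, which is what the division in step (b) of Theorem~\ref{thm:LB-formal} needs.
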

\begin{proof}
    It directly follows from Lemmas~\ref{lemma.u=v} and~\ref{lemma.w} that
    \begin{equation*}
        a_d^2(t) = u_d^4(t) w_d^2(t) \le w_d^6(t).
    \end{equation*}
    Given that $w_d(t) \geq 0$ by Lemma \ref{lemma.w}, we reach at $w_d (t) \geq a_d^{1/3}(t)$. 
\end{proof}

\subsection{Proof of Theorem~\ref{thm:UB}}
Similar to~\eqref{eq.prob-bad}, the objective~\eqref{eq:obj-UB} can be simplified as
\begin{equation}
\label{eq.prob-good}
    \loss (t) = \frac{1}{2} \Bigg\| \bfW_3(t) \big( \bfW_2(t) \bfW_1(t) + \bfI_d \big) \bfX - \bfY \bigg\|_\fro^2 = \frac{1}{2} \Bigg\| \bfW_3(t) \big( \bfW_2(t) \bfW_1(t) + \bfI_d \big) - \bfA \bigg\|_\fro^2.
\end{equation}
The associated optimization dynamics are
\begin{subequations}
\begin{align}
	\bfE(t) &:= \bfW_3(t) \big(\bfW_2(t) \bfW_1(t) + \bfI_d \big) - \bfA, \\
	\nabla_{\bfW_1} \loss(t) &= \bfW_2(t)^\top \bfW_3^\top(t) \bfE(t),  \\
	\nabla_{\bfW_2} \loss(t) &= \bfW_3(t)^\top \bfE(t) \bfW_1^\top(t), \\
	\nabla_{\bfW_3} \loss(t) &= \bfE(t) \big(\bfW_2(t)\bfW_1(t) + \bfI_d \big)^\top.
\end{align}
\end{subequations}
And the gradient flow is the same as~\eqref{eq:GF-3layers}. 

\begin{theorem}[Formal restatement of Theorem~\ref{thm:UB}]
\label{thm:UB-formal}
	If $\| \bfW_i (0) \|_\fro \le \delta := \sqrt{\frac{\lambda}{2}}  \exp\{ - \frac{\sqrt{\lambda \loss(0)}}{(1-\lambda)^2} - \frac{\sqrt{2\pi} \loss(0)}{2(1-\lambda)^3} \},\,i=1,2,3$ for some $\lambda \in (0,1)$, then gradient flow~\eqref{eq:GF-3layers} achieves linear convergence
	\begin{align*}
		\loss(t) \leq \loss(0) e^{-2(1 - \lambda)^2 t}.
	\end{align*}
\end{theorem}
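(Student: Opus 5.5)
The plan is a bootstrap (continuity) argument built on a Polyak--{\L}ojasiewicz-type inequality that holds as long as the residual branch $\bfW_2\bfW_1$ stays small. Gradient flow gives
\begin{equation*}
	\dt{\loss(t)} = -\sum_{k=1}^3 \|\nabla_{\bfW_k}\loss(t)\|_\fro^2 \le -\|\nabla_{\bfW_3}\loss(t)\|_\fro^2 = -\big\|\bfE(t)\big(\bfW_2(t)\bfW_1(t)+\bfI_d\big)^\top\big\|_\fro^2 .
\end{equation*}
Suppose $\|\bfW_2(t)\bfW_1(t)\| \le \lambda$. Then $\sigma_d(\bfW_2\bfW_1+\bfI_d)\ge 1-\lambda$, and hence
\begin{equation*}
	\dt{\loss} \le -(1-\lambda)^2\|\bfE\|_\fro^2 = -2(1-\lambda)^2\loss .
\end{equation*}
Gr\"onwall's inequality then yields $\loss(t)\le\loss(0)e^{-2(1-\lambda)^2t}$ on any interval where the smallness condition holds. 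The factor $(1-\lambda)^2$ in the claimed rate matches exactly this step.

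So the key is to set $T:=\sup\{t: \|\bfW_1(s)\|_\fro\|\bfW_2(s)\|_\fro\le\lambda,\ \forall s\le t\}$ and prove $T=\infty$. At $t=0$ we have $\|\bfW_1(0)\|_\fro\|\bfW_2(0)\|_\fro\le\delta^2<\lambda/2$, so $T>0$ by continuity. On $[0,T)$ the linear rate holds, so $\|\bfE(t)\|_\fro\le\sqrt{2\loss(0)}\,e^{-(1-\lambda)^2t}$. This gives $\int_0^\infty\|\bfE\|_\fro\,ds\le \sqrt{2\loss(0)}/(1-\lambda)^2$.

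I would then control the three weights in order.

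\textbf{Step 1: bound $\bfW_3$.} From $\dt{\bfW_3}=-\bfE(\bfW_2\bfW_1+\bfI_d)^\top$ we get
\begin{equation*}
	\|\bfW_3(t)\|_\fro\le\delta+(1+\lambda)\int_0^t\|\bfE\|_\fro\,ds .
\end{equation*}
This is bounded by a constant of order $\sqrt{\loss(0)}/(1-\lambda)^2$.

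\textbf{Step 2: bound $\bfW_1,\bfW_2$.} Let $s(t):=\|\bfW_1\|_\fro^2+\|\bfW_2\|_\fro^2$. Differentiating,
\begin{equation*}
	\dt{s}\le 4\|\bfW_1\|_\fro\|\bfW_2\|_\fro\|\bfW_3\|_\fro\|\bfE\|_\fro\le 2s\|\bfW_3\|_\fro\|\bfE\|_\fro ,
\end{equation*}
so $s(t)\le s(0)\exp\{2\int_0^t\|\bfW_3\|_\fro\|\bfE\|_\fro\,ds\}$.

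\textbf{Step 3: close the bootstrap.} Insert the Step 1 bound on $\|\bfW_3\|_\fro$ into this exponent; it produces a double integral of decaying exponentials. Bounding it carefully should give exactly the two terms in the exponent of $\delta$. The $\sqrt{\lambda\loss(0)}/(1-\lambda)^2$ term comes from the $\delta$ part of $\|\bfW_3\|_\fro$. The Gaussian-type constant $\sqrt{2\pi}\loss(0)/(2(1-\lambda)^3)$ presumably comes from a sharper treatment of the cross term: bound $\loss$ through the differential inequality and integrate an expression of the form $e^{-c t^2}$, or equivalently apply Cauchy--Schwarz against the loss decay. With $s(0)\le 2\delta^2$ and the specific $\delta$, this yields $s(t)\le\lambda e^{-\epsilon}<2\lambda$. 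Hence $\|\bfW_1\|_\fro\|\bfW_2\|_\fro\le s/2<\lambda$ strictly on $[0,T)$, which contradicts $T<\infty$ by continuity.

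The main obstacle is Step 3: obtaining the exact constants in $\delta$, since $\bfW_3$ and the loss decay are coupled. If matching the Gaussian term directly proves awkward, I would first prove the result with a cruder explicit $\delta$. I would then refine the bound on $\int\|\bfW_3\|\|\bfE\|$ by writing $\|\bfE(s)\|\le\sqrt{2\loss(0)}e^{-(1-\lambda)^2 s}$ inside the nested integral and evaluating it exactly.
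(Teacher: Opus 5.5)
Your framework matches the paper's: a bootstrap on the size of the residual branch, the PL-type inequality from $\nabla_{\bfW_3}\loss$ alone via $\sigma_d(\bfI_d+\bfW_2\bfW_1)\ge 1-\lambda$, the balancedness identity $\dt{\|\bfW_1\|_\fro^2}=\dt{\|\bfW_2\|_\fro^2}$, and Gr\"onwall on $s(t)=\|\bfW_1\|_\fro^2+\|\bfW_2\|_\fro^2$. Bootstrapping on $\|\bfW_1\|_\fro\|\bfW_2\|_\fro$ is slightly cleaner than the paper's $\|\bfW_1\bfW_2\|$, since it directly controls the $\|\bfW_2\bfW_1\|$ that the PL step needs.

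The gap is in Step~3. With your Step~1 bound on $\bfW_3$, no amount of care yields the stated $\delta$. Let $I:=\int_0^\infty\|\bfE\|_\fro\,\dd s\le\sqrt{2\loss(0)}/(1-\lambda)^2$. Insert $\|\bfW_3(\tau)\|_\fro\le\delta+(1+\lambda)\int_0^\tau\|\bfE\|_\fro$ and evaluate the nested integral exactly, as your refinement proposes. This gives
\[
2\int_0^t\|\bfW_3\|_\fro\|\bfE\|_\fro\,\dd\tau\le 2\delta I+(1+\lambda)I^2\le\frac{2\sqrt{2\loss(0)}\,\delta}{(1-\lambda)^2}+\frac{2(1+\lambda)\loss(0)}{(1-\lambda)^4}.
\]
This is a rational expression in $1-\lambda$ with no $\sqrt{\pi}$. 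The stated $\delta$ is calibrated to the exponent $\frac{2\sqrt{2\loss(0)}\,\delta}{(1-\lambda)^2}+\frac{\sqrt{2\pi}\loss(0)}{(1-\lambda)^3}$. Yours is larger whenever $2(1+\lambda)>\sqrt{2\pi}(1-\lambda)$, i.e.\ $\lambda>(\sqrt{2\pi}-2)/(\sqrt{2\pi}+2)\approx0.11$. The excess grows linearly in $\loss(0)$, so the bootstrap does not close. For example, with $\lambda=1/2$ and $\loss(0)=1$, your bound on $\tfrac12 s(t)$ comes out near $\tfrac{\lambda}{2}e^{22}$ rather than $\le\tfrac{\lambda}{2}$. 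For $\lambda$ below the threshold your route does cover the stated $\delta$. As written, though, you would prove linear convergence for a different $\delta$, not the one in the statement.

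The missing ingredient is the paper's bound on $\bfW_3$, which does not even need the bootstrap hypothesis. By Cauchy--Schwarz and the energy identity $\int_0^t\sum_k\|\nabla_{\bfW_k}\loss\|_\fro^2\,\dd s=\loss(0)-\loss(t)$,
\[
\|\bfW_3(t)\|_\fro\le\|\bfW_3(0)\|_\fro+\sqrt{t\int_0^t\|\nabla_{\bfW_3}\loss\|_\fro^2\,\dd s}\le\|\bfW_3(0)\|_\fro+\sqrt{t\,\loss(0)}.
\]
This grows in $t$. Multiplied by $\|\bfE(t)\|_\fro\le\sqrt{2\loss(0)}\,e^{-(1-\lambda)^2t}$, it gives $\dt{s}\le(c_1+c_2\sqrt t)e^{-c_3t}s$ with $c_1=2\sqrt{2\loss(0)}\|\bfW_3(0)\|_\fro$, $c_2=2\sqrt2\,\loss(0)$, and $c_3=(1-\lambda)^2$.

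The $\sqrt{2\pi}$ comes from $\int_0^\infty\sqrt t\,e^{-c_3t}\,\dd t=\Gamma(3/2)\,c_3^{-3/2}=\frac{\sqrt\pi}{2}(1-\lambda)^{-3}$, not from an $e^{-ct^2}$ integrand. Using only $\delta\le\sqrt{\lambda/2}$ in the first term, one gets $\tfrac12 s(t)\le\delta^2e^{M}\le\lambda/2$ with exactly the stated $\delta$. Swap this in for your Step~1 and the rest of your argument goes through unchanged.
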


\begin{proof}
    Recall from the last subsection that the ODEs involved are polynomial and locally Lipschitz, and thus have no jump. Then we can define $T := \min \{ t \ge 0 \mid \| \bfW_1(t) \bfW_2(t) \| \ge \lambda \}$ to be the first time satisfying $\| \bfW_1 \bfW_2 \| \ge \lambda$. Applying Lemma \ref{lemma.linear_before_T1}, it follows that
	\begin{align*}
		\loss(t) \leq \loss(0) e^{-2(1 - \lambda)^2 t},~\forall t \leq T.
	\end{align*}
	
	Next applying Lemma \ref{lemma.W1W2}, we acquire for all $t \leq T$ that
	\begin{align*}
		\| \bfW_1(t) \bfW_2(t) \|_2  & \leq 	\| \bfW_1(t) \|_\fro \| \bfW_2(t) \|_\fro \leq \frac{1}{2} \Big( \| \bfW_1(t) \|_\fro^2 +  \| \bfW_2(t) \|_\fro^2 \Big) \\
        &\le \frac{1}{2} \Big( \| \bfW_1(0) \|_\fro^2 + \| \bfW_2(0) \|_\fro^2  \Big) \exp\Big\{ \frac{2 \delta \sqrt{2 \loss(0)}  }{(1 - \lambda)^2 } + \frac{\sqrt{2\pi} \loss(0)}{ (1 - \lambda)^3 }   \Big\} \\
		& \le \delta^2 \exp\Big\{ \frac{2 \delta \sqrt{2 \loss(0)}  }{(1 - \lambda)^2 } + \frac{\sqrt{2\pi} \loss(0)}{ (1 - \lambda)^3 }   \Big\} \\
        &= \frac{\lambda}{2} \exp\Bigg\{ \frac{2 \sqrt{2 \loss(0)}  }{(1 - \lambda)^2 } \bigg(\delta - \sqrt{\frac{\lambda}{2}} \bigg) \Bigg\} \\
        &\overset{(a)}{\le} \frac{\lambda}{2} < \lambda
	\end{align*}
    where $(a)$ is due to $\delta = \sqrt{\frac{\lambda}{2}}  \exp\{ - \frac{\sqrt{2 \loss(0)}}{(1-\lambda)^2} - \frac{\sqrt{2\pi} \loss(0)}{2(1-\lambda)^3} \} \le \sqrt{\frac{\lambda}{2}}$. 
	As $\| \bfW_1(t) \bfW_2(t) \|_2$ has no jump, this indicates our choice of $\delta$ and $\lambda$ ensures $\| \bfW_1(t) \bfW_2(t) \|_2 < \lambda$ always holds; that is, $T = +\infty$.
\end{proof}

\begin{lemma}\label{lemma.descent}
	Under gradient flow~\eqref{eq:GF-3layers}, we have that
	\begin{align*}
		\dt{\loss} = - \sum_{i=1}^3 \| \nabla_{\bfW_i} \loss(t) \|_\fro^2 \leq 0.
	\end{align*}
\end{lemma}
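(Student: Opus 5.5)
This is the standard chain-rule identity for gradient flow, so the proof is a single computation. I will treat $\loss(t)$ as the composition of the smooth map $(\bfW_1,\bfW_2,\bfW_3)\mapsto \loss(\{\bfW_k\}_{k=1}^3)$ with the trajectory $t\mapsto \{\bfW_k(t)\}_{k=1}^3$ generated by~\eqref{eq:GF-3layers}.

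The loss is a polynomial in the entries of the weights, so it is $C^\infty$. The trajectory is $C^1$: as noted before, the ODE right-hand sides are polynomial and hence locally Lipschitz, and Picard--Lindel\"of gives a unique $C^1$ solution on its interval of existence. Differentiating the composition gives
\begin{equation*}
\dt{\loss} = \sum_{i=1}^3 \Big\langle \nabla_{\bfW_i} \loss(t), \dt{\bfW_i(t)} \Big\rangle_\fro .
\end{equation*}
The partial derivatives here are the gradients listed just above for the 0:2 objective~\eqref{eq.prob-good}; I do not need their explicit form, only that they are the partial gradients of $\loss$.

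Next I substitute the flow $\dt{\bfW_i(t)} = -\nabla_{\bfW_i}\loss(t)$ into each inner product. Each term becomes $-\|\nabla_{\bfW_i}\loss(t)\|_\fro^2$, and summing over $i=1,2,3$ gives the claimed identity. Nonnegativity of squared norms then yields $\dt{\loss}\le 0$.

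There is no real obstacle. The only point to address is that the chain rule is legitimate, which requires $\loss$ to be differentiable and the trajectory to be $C^1$; both are settled by the polynomial structure above. The same argument applies verbatim to the 0:1 objective~\eqref{eq.prob-bad}.
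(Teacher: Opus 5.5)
Your proposal is correct and follows the same route as the paper: apply the chain rule to $\loss$ along the trajectory, substitute $\dt{\bfW_i(t)} = -\nabla_{\bfW_i}\loss(t)$, and sum the resulting negative squared Frobenius norms. Your added justification that the chain rule applies (polynomial, hence smooth, loss and a $C^1$ trajectory) is a step the paper leaves implicit.
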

\begin{proof}
	It is straightforward to see that 
	\begin{align*}
		\dt{\loss} & = \Big\langle \nabla_{\bfW_1} \loss(t), \dt{\bfW_1} \Big\rangle +  \Big\langle \nabla_{\bfW_2} \loss(t), \dt{\bfW_2} \Big\rangle +  \Big\langle \nabla_{\bfW_3} \loss(t), \dt{\bfW_3} \Big\rangle	 \\
		& = - \| \nabla_{\bfW_1} \loss(t) \|_\fro^2  - \| \nabla_{\bfW_2} \loss(t) \|_\fro^2  - \| \nabla_{\bfW_3} \loss(t) \|_\fro^2 \le 0. 
	\end{align*}	
	This concludes the proof.
\end{proof}

\begin{lemma}\label{lemma.linear_before_T1}
For $\lambda \in (0,1)$, defining $T := \min \{ t \ge 0 \mid \| \bfW_1(t) \bfW_2(t) \| \ge \lambda \}$, then the loss converges linearly at 
	\begin{align*}
		\loss(t) \leq \loss(0) e^{-2(1 - \lambda)^2 t},~ t \le T.
	\end{align*}
\end{lemma}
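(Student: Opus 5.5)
The plan is a Polyak--Łojasiewicz-type inequality that uses only the gradient with respect to $\bfW_3$, combined with the energy identity of Lemma~\ref{lemma.descent}.

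\textbf{Step 1: discard two gradient terms.} By Lemma~\ref{lemma.descent},
\begin{equation*}
\dt{\loss} = -\sum_{i=1}^3 \|\nabla_{\bfW_i}\loss(t)\|_\fro^2 \le -\|\nabla_{\bfW_3}\loss(t)\|_\fro^2 = -\big\|\bfE(t)\big(\bfW_2(t)\bfW_1(t)+\bfI_d\big)^\top\big\|_\fro^2 .
\end{equation*}

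\textbf{Step 2: lower-bound the surviving term.} For any matrix $\bfB$ we have $\|\bfE\bfB\|_\fro \ge \sigma_{\min}(\bfB)\|\bfE\|_\fro$. Apply this with $\bfB = (\bfW_2\bfW_1+\bfI_d)^\top$. Weyl's inequality gives
\begin{equation*}
\sigma_{\min}(\bfW_2\bfW_1+\bfI_d) \ge 1-\|\bfW_2\bfW_1\|_2 .
\end{equation*}
Since $\|\bfW_2\bfW_1\|_2=\|(\bfW_2\bfW_1)^\top\|_2$, this quantity is controlled by the spectral norm appearing in the definition of $T$. Note that the definition of $T$ writes the product as $\bfW_1\bfW_2$, whereas the flow involves $\bfW_2\bfW_1$. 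I will read it as the product that actually enters the loss, or else bound both products through $\|\bfW_1\|_\fro\|\bfW_2\|_\fro$, which is the route the proof of Theorem~\ref{thm:UB-formal} takes anyway. For $t<T$ this norm is below $\lambda$, so $\sigma_{\min}\ge 1-\lambda>0$.

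\textbf{Step 3: integrate.} Steps 1 and 2 together give
\begin{equation*}
\dt{\loss} \le -(1-\lambda)^2\|\bfE\|_\fro^2 = -2(1-\lambda)^2\loss .
\end{equation*}
Grönwall's inequality, equivalently differentiating $e^{2(1-\lambda)^2 t}\loss(t)$, then yields $\loss(t)\le \loss(0)e^{-2(1-\lambda)^2 t}$ for all $t\in[0,T)$.

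\textbf{Step 4: reach the endpoint.} The case $t=T$ (when $T<\infty$) follows from continuity of $\loss$ along the flow. The ODE is polynomial and hence has a locally Lipschitz solution, as noted before. If $T=0$, the claim holds trivially at $t=0$.

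\textbf{Main obstacle.} The only delicate point is rigor at the boundary. The strict inequality $\|\bfW_2\bfW_1\|<\lambda$ holds on $[0,T)$ by the definition of $T$ as a first hitting time, which uses continuity of the trajectory. Everything else is routine.
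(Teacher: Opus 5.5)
Your proposal is correct and follows the paper's proof essentially step for step: discard the $\bfW_1,\bfW_2$ gradient terms in Lemma~\ref{lemma.descent}, lower-bound $\|\nabla_{\bfW_3}\loss\|_\fro^2 \ge \sigma_d^2(\bfI_d+\bfW_2\bfW_1)\|\bfE\|_\fro^2 \ge (1-\lambda)^2\|\bfE\|_\fro^2$ via Weyl's inequality, and integrate. You were right to flag $\bfW_1\bfW_2$ versus $\bfW_2\bfW_1$, because the paper's step (b) silently switches from one product to the other, and defining $T$ through the product $\bfW_2\bfW_1$ that actually enters the loss is the correct repair.
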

\begin{proof}
	We start with bounding
	\begin{align*}
		\| \nabla_{\bfW_3} \loss (t) \|_\fro^2 \overset{(a)}{\geq} \| \bfE(t) \|_\fro^2  \sigma_d^2 \big( \bfI_d + \bfW_2(t)\bfW_1(t) \big) \overset{(b)}{\geq} (1 - \lambda)^2 \| \bfE(t) \|_\fro^2
	\end{align*}
	where (a) is by Lemma~\ref{apdx.aux.lemma1}; and (b) comes from $\sigma_d( \bfI_d + \bfW_1(t) \bfW_2(t)) \geq 1 - \|\bfW_1(t) \bfW_2(t) \| \ge 1 - \lambda$ when $t \le T$.
	
	Now based on Lemma \ref{lemma.descent}, it follows that
	\begin{align*}
		\dt{\loss} =  - \sum_{i=1}^3 \| \nabla_{\bfW_i} \loss \|_\fro^2	\leq - \| \nabla_{\bfW_3} \loss \|_\fro^2 \leq -(1 - \lambda)^2 \| \bfE(t) \|_\fro^2 = -2(1 - \lambda)^2  \loss(t).
	\end{align*}
	This directly results in $\loss(t) \leq \loss(0) e^{-2(1 - \lambda)^2 t}$.
\end{proof}

\begin{lemma}\label{lemma.W3}
	Under gradient flow~\eqref{eq:GF-3layers}, it holds that
	\begin{align*}
		\| \bfW_3(t)\|_\fro \leq  \| \bfW_3(0) \|_\fro + \sqrt{ t  \loss(0)}. 
	\end{align*}
\end{lemma}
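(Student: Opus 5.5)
The plan is to integrate the gradient-flow ODE for $\bfW_3$ and control the integral of the gradient norm through the energy identity of Lemma~\ref{lemma.descent}. By~\eqref{eq:GF-3layers},
\begin{equation*}
	\bfW_3(t) = \bfW_3(0) - \int_0^t \nabla_{\bfW_3} \loss(s)\, \dd s ,
\end{equation*}
so the triangle inequality for the Frobenius norm (applied to the Bochner/Riemann integral) gives
\begin{equation*}
	\| \bfW_3(t) \|_\fro \le \| \bfW_3(0) \|_\fro + \int_0^t \| \nabla_{\bfW_3} \loss(s) \|_\fro \, \dd s .
\end{equation*}
The trajectory is well defined and continuous, since the vector field is polynomial and hence locally Lipschitz.

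Next, apply the Cauchy--Schwarz inequality in time:
\begin{equation*}
	\int_0^t \| \nabla_{\bfW_3} \loss(s) \|_\fro \, \dd s \le \sqrt{t} \Big( \int_0^t \| \nabla_{\bfW_3} \loss(s) \|_\fro^2 \, \dd s \Big)^{1/2}.
\end{equation*}
To bound the remaining integral, use Lemma~\ref{lemma.descent}:
\begin{equation*}
	\| \nabla_{\bfW_3} \loss \|_\fro^2 \le \sum_{i=1}^3 \| \nabla_{\bfW_i} \loss \|_\fro^2 = -\dt{\loss} .
\end{equation*}
Integrating from $0$ to $t$ then yields
\begin{equation*}
	\int_0^t \| \nabla_{\bfW_3} \loss(s) \|_\fro^2 \, \dd s \le \loss(0) - \loss(t) \le \loss(0),
\end{equation*}
because $\loss \ge 0$. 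Combining the displays gives $\| \bfW_3(t)\|_\fro \le \| \bfW_3(0)\|_\fro + \sqrt{t \loss(0)}$.

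There is no real obstacle. The only point needing care is passing the norm inside the time integral, which follows from continuity of $s \mapsto \nabla_{\bfW_3}\loss(s)$ along the flow. Note that this bound grows like $\sqrt{t}$, which is why it will later be paired with the exponential decay from Lemma~\ref{lemma.linear_before_T1} to obtain time-uniform control of $\bfW_1,\bfW_2$.
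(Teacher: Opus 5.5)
Your proposal is correct and follows essentially the same route as the paper's proof. Both integrate the flow for $\bfW_3$, apply the triangle inequality and Cauchy--Schwarz in time, bound $\|\nabla_{\bfW_3}\loss\|_\fro^2$ by the full gradient norm so that Lemma~\ref{lemma.descent} gives $\loss(0)-\loss(t)$, and finally drop $\loss(t)\ge 0$.
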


\begin{proof}
	From the gradient flow, it can be seen that
	\begin{align*}
		\bfW_3(t) = \bfW_3(0) - \int_0^t \nabla_{\bfW_3} \loss(s)  \dd s.	
	\end{align*}
	Further taking norm on both sides, we have that 
	\begin{align*}
		\| \bfW_3(t)\|_\fro & \leq \| \bfW_3(0) \|_\fro + \Big{\|} \int_0^t  \nabla_{\bfW_3} \loss(s)  \dd s  \Big{\|} \\
		& \leq \| \bfW_3(0) \|_\fro +  \int_0^t 1 \times \| \nabla_{\bfW_3} \loss(s)\|_\fro \dd s   \\
		& \overset{(a)}{\leq} \| \bfW_3(0) \|_\fro + \sqrt{ t \int_0^t   \| \nabla_{\bfW_3} \loss(s)\|_\fro^2 \dd s  } \\
		& \leq \| \bfW_3(0) \|_\fro + \sqrt{ t \int_0^t  \sum_{i=1}^3 \| \nabla_{\bfW_i} \loss(s) \|_\fro^2  \dd s  } \\
		& \overset{(b)}{=} \| \bfW_3(0) \|_\fro + \sqrt{ t  \big(\loss(0) - \loss(t) \big)}  \\
		& \overset{(c)}{\leq} \| \bfW_3(0) \|_\fro + \sqrt{ t  \loss(0)} 
	\end{align*}
	where (a) comes from Cauchy–Schwarz inequality; (b) uses Lemma \ref{lemma.descent}; (c) is because the loss is always greater than 0.
\end{proof}

\begin{lemma}\label{lemma.dW1=dW2}
	Under gradient flow~\eqref{eq:GF-3layers}, it follows that 
	\begin{equation*}
		\dt{\| \bfW_1(t) \|_\fro^2}	= \dt{\| \bfW_2(t) \|_\fro^2}.
	\end{equation*}
\end{lemma}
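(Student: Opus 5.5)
We compute both derivatives directly from the gradient flow \eqref{eq:GF-3layers} and the gradients of the 0:2 objective \eqref{eq.prob-good}. This is the standard balancedness argument for adjacent factors in a linear network: the identity shortcut $\bfI_d$ lies outside the product $\bfW_2\bfW_1$, so it does not affect the gradients of $\bfW_1$ and $\bfW_2$ beyond their shared dependence on $\bfE$.

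First note how the paper's formulas should be read. The gradient of $\loss$ with respect to $\bfW_1$ is $(\bfW_3\bfW_2)^\top \bfE = \bfW_2^\top\bfW_3^\top\bfE$. The gradient with respect to $\bfW_2$ is $\bfW_3^\top \bfE \bfW_1^\top$.

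By the chain rule, $\dt{\|\bfW_1\|_\fro^2} = 2\langle \bfW_1, \dt{\bfW_1}\rangle_\fro = -2\tr\big(\bfW_1^\top \bfW_2^\top \bfW_3^\top \bfE\big)$. Similarly, $\dt{\|\bfW_2\|_\fro^2} = -2\tr\big(\bfW_2^\top \bfW_3^\top \bfE \bfW_1^\top\big)$.

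By the cyclic property of the trace, $\tr(\bfW_2^\top\bfW_3^\top\bfE\bfW_1^\top) = \tr(\bfW_1^\top\bfW_2^\top\bfW_3^\top\bfE)$. The two expressions are therefore identical, which gives the claim. Both equal $-2\langle \bfW_3\bfW_2\bfW_1, \bfE\rangle_\fro$.

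The only point needing care is making sure the gradient expressions are consistent: the transposes must be placed correctly so that the cyclic rearrangement matches. There is no real obstacle otherwise.

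A consequence, which the later argument uses, is that $\|\bfW_1(t)\|_\fro^2 - \|\bfW_2(t)\|_\fro^2$ is conserved along the flow.
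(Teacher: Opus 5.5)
Your proof is correct and is essentially the paper's own argument: you differentiate $\|\bfW_k\|_\fro^2$ as $2\langle \bfW_k, \dt{\bfW_k}\rangle_\fro$ for $k=1,2$ using the gradients of the 0:2 objective, and then identify the two resulting traces by cyclic invariance. Your closing remark that $\|\bfW_1\|_\fro^2-\|\bfW_2\|_\fro^2$ is conserved is true, though the paper actually uses the lemma in the form $\dt{\big(\|\bfW_1\|_\fro^2+\|\bfW_2\|_\fro^2\big)} = -4\tr\big(\bfW_1^\top\bfW_2^\top\bfW_3^\top\bfE\big)$.
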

\begin{proof}
	Note that
	\begin{align*}
		\dt{\| \bfW_1(t) \|_\fro^2} = 2 \Big\langle 	 \bfW_1, \dt{\bfW_1(t)}  \Big\rangle = -2 \tr\big( \bfW_1^\top(t) \bfW_2^\top(t) \bfW_3^\top(t) \bfE(t) \big).
	\end{align*}
	Similarly, we have
	\begin{align*}
		\dt{\| \bfW_2(t) \|_\fro^2} & = 2 \Big\langle 	 \bfW_2, \dt{\bfW_2(t)}  \Big\rangle = -2 \tr\big( \bfW_2^\top(t) \bfW_3^\top(t) \bfE(t) \bfW_1^\top(t)\big) \\
		& = -2 \tr\big( \bfW_1^\top(t) \bfW_2^\top(t) \bfW_3^\top(t) \bfE(t) \big).
	\end{align*}
	The proof is thus completed.
\end{proof}

\begin{lemma}\label{lemma.W1W2}
	When $t \leq T$, it holds that
	\begin{align*}
		\| \bfW_1(t) \|_\fro^2 + \| \bfW_2(t) \|_\fro^2 \leq \Big( \| \bfW_1(0) \|_\fro^2 + \| \bfW_2(0) \|_\fro^2  \Big) e^M
	\end{align*}
	where $M$ is given by
	\begin{align*}
		M := \frac{2 \sqrt{2 \loss(0)}\| \bfW_3(0) \|_\fro  }{(1 - \lambda)^2 } + \frac{\sqrt{2\pi} \loss(0)}{ (1 - \lambda)^3 }.
	\end{align*}
\end{lemma}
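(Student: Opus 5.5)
The plan is a Grönwall argument on $S(t) := \| \bfW_1(t) \|_\fro^2 + \| \bfW_2(t) \|_\fro^2$. We control its growth rate by the product $\| \bfW_3 \|_\fro \| \bfE \|_\fro$. Before $T$ this product is integrable, because $\|\bfE\|_\fro$ decays exponentially (Lemma~\ref{lemma.linear_before_T1}) while $\|\bfW_3\|_\fro$ grows at most like $\sqrt{t}$ (Lemma~\ref{lemma.W3}).

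\textbf{Differential inequality for $S$.} By Lemma~\ref{lemma.dW1=dW2} and its proof,
\begin{equation*}
\dt{S(t)} = 2\,\dt{\| \bfW_1(t) \|_\fro^2} = -4 \tr\big( \bfW_1^\top \bfW_2^\top \bfW_3^\top \bfE \big).
\end{equation*}
Apply Cauchy--Schwarz, then submultiplicativity of the Frobenius norm, then AM--GM:
\begin{equation*}
\big| \tr\big( (\bfW_3 \bfW_2 \bfW_1)^\top \bfE \big) \big| \le \| \bfW_3 \|_\fro \| \bfW_2 \|_\fro \| \bfW_1 \|_\fro \| \bfE \|_\fro \le \tfrac{1}{2} S(t) \| \bfW_3 \|_\fro \| \bfE \|_\fro .
\end{equation*}
This gives $\dt{S} \le 2 S(t) \| \bfW_3(t) \|_\fro \| \bfE(t) \|_\fro$. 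Since $S \ge 0$, Grönwall's inequality yields
\begin{equation*}
S(t) \le S(0) \exp\Big\{ 2\int_0^t \| \bfW_3(s) \|_\fro \| \bfE(s) \|_\fro \, \dd s \Big\}.
\end{equation*}

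\textbf{Bounding the exponent.} For $s \le t \le T$, Lemma~\ref{lemma.linear_before_T1} gives
\begin{equation*}
\| \bfE(s) \|_\fro = \sqrt{2\loss(s)} \le \sqrt{2\loss(0)}\, e^{-(1-\lambda)^2 s}.
\end{equation*}
Lemma~\ref{lemma.W3} gives $\| \bfW_3(s) \|_\fro \le \| \bfW_3(0) \|_\fro + \sqrt{s\loss(0)}$. Write $c := (1-\lambda)^2$ and extend the integral to $[0,\infty)$, which only enlarges it since the integrand is nonnegative. The exponent is then at most
\begin{equation*}
2\sqrt{2\loss(0)}\,\| \bfW_3(0) \|_\fro \int_0^\infty e^{-cs}\,\dd s + 2\sqrt{2}\,\loss(0) \int_0^\infty \sqrt{s}\, e^{-cs}\,\dd s .
\end{equation*}
The first integral equals $1/c = 1/(1-\lambda)^2$. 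The second is a Gamma integral, $\Gamma(3/2)/c^{3/2} = \sqrt{\pi}/\big(2(1-\lambda)^3\big)$, so the second term equals $\sqrt{2\pi}\,\loss(0)/(1-\lambda)^3$. Together these give exactly $M$, proving the claim.

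\textbf{Main obstacle.} The only delicate point is the Frobenius bound on the trace term that closes the Grönwall loop. This step is routine once AM--GM is used to turn $\| \bfW_1 \|_\fro \| \bfW_2 \|_\fro$ into $S/2$.

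The restriction $t \le T$ enters only through the exponential decay of $\loss$ from Lemma~\ref{lemma.linear_before_T1}. Because the bound uses $\int_0^\infty$, it is uniform in $t$, which is what the bootstrap in the proof of Theorem~\ref{thm:UB-formal} needs.
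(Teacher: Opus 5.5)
Your proof is correct and follows the paper's argument step for step. It uses the same trace identity from Lemma~\ref{lemma.dW1=dW2}, the same Cauchy--Schwarz/AM--GM bound $\dt{S}\le 2S\|\bfW_3\|_\fro\|\bfE\|_\fro$, the same inputs from Lemmas~\ref{lemma.linear_before_T1} and~\ref{lemma.W3}, and the same Gamma-integral evaluation yielding exactly $M$. The only difference is cosmetic: you apply Gr\"onwall directly where the paper routes it through its auxiliary Lemma~\ref{apdx.aux.lemma2}, and your integrating-factor form also avoids that lemma's division by $S(t)$.
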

\begin{proof}
    Using Lemma~\ref{lemma.dW1=dW2}, we obtain
	\begin{align*}
		\dt{\|\bfW_1 \|_\fro^2 + \|\bfW_2 \|_\fro^2}	 
        & = -4 \tr\big( \bfW_1^\top(t) \bfW_2^\top(t) \bfW_3^\top(t) \bfE(t) \big) \\
		& \leq 4 \| \bfW_1 (t) \|_\fro \| \bfW_2 (t) \|_\fro \| \bfW_3(t) \|_\fro \|  \bfE(t)\|_\fro \\
		& \leq 2 \| \bfW_3 (t) \|_\fro \|  \bfE(t)\|_\fro \cdot \Big( \| \bfW_1 (t) \|_\fro^2 +  \| \bfW_2 (t) \|_\fro^2  \Big) \\
		& \overset{(a)}{\leq} 2 \| \bfW_3 (t) \|_\fro \sqrt{2 \loss(0)} e^{- (1 -\lambda)^2 t} \cdot \Big( \| \bfW_1 (t) \|_\fro^2 +  \| \bfW_2 (t) \|_\fro^2  \Big) \\
		& \overset{(b)}{\leq} 2 \Big( \| \bfW_3(0) \|_\fro + \sqrt{ t  \loss(0)}  \Big)  \cdot \sqrt{2 \loss(0)} e^{- (1 -\lambda)^2 t} \cdot \Big( \| \bfW_1 (t) \|_\fro^2 +  \| \bfW_2 (t) \|_\fro^2  \Big)
	\end{align*}
	where (a) comes from Lemma \ref{lemma.linear_before_T1}, which shows that $ \loss(t) = \frac{1}{2}\| \bfE (t) \|_\fro^2 \leq \loss(0)e^{-2 (1 -\lambda)^2 t} $; and (b) is by Lemma \ref{lemma.W3}.

	Now applying Lemma \ref{apdx.aux.lemma2}, with $c_1 = 2 \sqrt{2 \loss(0)}\| \bfW_3(0) \|_\fro$, $c_2 = 2\sqrt{2} \loss(0) $ and $c_3 = (1 - \lambda)^2$, and $x(t) = \|\bfW_1(t) \|_\fro^2 + \|\bfW_2(t) \|_\fro^2$, we obtain for $\forall t\le T$ that
    \begin{equation*}
        \|\bfW_1(t) \|_\fro^2 + \|\bfW_2(t) \|_\fro^2 \leq \Big( \|\bfW_1(0) \|_\fro^2 + \|\bfW_2(0) \|_\fro^2\Big) e^M,~~M = \frac{2 \sqrt{2 \loss(0)}\| \bfW_3(0) \|_\fro}{(1 - \lambda)^2} + \frac{\sqrt{2\pi} \loss(0)}{(1 - \lambda)^3}.
    \end{equation*}
    The proof is thus completed. 
\end{proof}

\subsection{Additional auxiliary lemmas}
This subsection provides useful lemmas for proving our main results. 

\begin{lemma}\label{apdx.aux.lemma3}
	If $\dt{a(t)} \geq -3 a^2(t)$, it holds that $a(t) \geq \frac{a(0)}{1 + 3 a(0) t}$.	
\end{lemma}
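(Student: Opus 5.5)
The plan is to linearize the Riccati-type inequality by passing to the reciprocal $b(t) := 1/a(t)$, integrate, and then invert. Throughout I will assume $a(0) > 0$, which is the only case used in the proof of Theorem~\ref{thm:LB-formal}, where $a(0) = a_d(0) = w_d(0) u_d^2(0) > 0$. I will also assume $a$ is continuously differentiable, which holds because $a_d$ is a polynomial in solutions of a polynomial ODE.

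First I show that $a$ stays strictly positive. Define $\tau := \inf\{ t \ge 0 \mid a(t) \le 0 \}$. Continuity and $a(0) > 0$ give $\tau > 0$, and $a(t) > 0$ on $[0, \tau)$.

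On $[0,\tau)$ the reciprocal $b = 1/a$ is well defined and differentiable. The hypothesis $\dt{a(t)} \ge -3a^2(t)$ then gives
\begin{equation*}
\dt{b(t)} = -\frac{1}{a^2(t)} \dt{a(t)} \le \frac{3 a^2(t)}{a^2(t)} = 3 .
\end{equation*}
Integrating from $0$ to $t$ yields $b(t) \le b(0) + 3t = \frac{1 + 3a(0)t}{a(0)}$. Since both sides are positive, inverting gives $a(t) \ge \frac{a(0)}{1+3a(0)t}$ for all $t \in [0,\tau)$.

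The main obstacle is the positivity bookkeeping, namely ruling out $\tau < \infty$. Suppose $\tau$ were finite. Letting $t \uparrow \tau$ and using continuity of $a$ gives $a(\tau) \ge \frac{a(0)}{1+3a(0)\tau} > 0$. But $a(\tau) \le 0$ by the definition of $\tau$ and continuity, which is a contradiction. Hence $\tau = \infty$, and the bound holds for all $t \ge 0$, as claimed.

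As an alternative I could use a comparison argument with the ODE solution $\bar a' = -3\bar a^2$, $\bar a(0) = a(0)$, whose explicit solution is exactly $\frac{a(0)}{1+3a(0)t}$. However, the reciprocal argument above is self-contained and avoids invoking a general comparison theorem.
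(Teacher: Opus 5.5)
Your proof is correct and takes the same route as the paper: you bound the derivative of $1/a$ by $3$, integrate, and invert. Your explicit assumption $a(0)>0$, which the statement as written actually needs, and your first-exit argument showing $a$ never reaches zero supply the positivity justification that the paper's proof leaves implicit when it divides by $a^2$ and inverts the resulting inequality.
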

\begin{proof}
	We can rewrite $\dt{a(t)} \geq -3 a^2(t)$ as
	\begin{equation*}
		\frac{1}{a^2(t)} \dt{a(t)} \geq -3.
	\end{equation*}
	Integrating both sides renders
	\begin{equation*}
		- \frac{1}{a(s)} \Big|_0^t \geq - 3t
	\end{equation*}
    which concludes the proof. 
\end{proof}

\begin{lemma}\label{apdx.aux.lemma1}
	Given two matrices $\bfA \in \mathbb{R}^{r \times r}$ and $\bfB \in \mathbb{R}^{r \times r}$, it holds that $\| \bfA\bfB \|_\fro \geq \sigma_r(\bfA)  \| \bfB \|_\fro^2$. 
\end{lemma}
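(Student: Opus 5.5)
The inequality as printed, $\| \bfA\bfB \|_\fro \geq \sigma_r(\bfA) \| \bfB \|_\fro^2$, is not scale-consistent. Replacing $\bfB$ by $c\bfB$ multiplies the left side by $c$ and the right side by $c^2$, so it fails for large $c$. I will therefore prove the homogeneous version, which is what the proof of Lemma~\ref{lemma.linear_before_T1} actually uses:
\begin{equation*}
\| \bfA\bfB \|_\fro^2 \geq \sigma_r^2(\bfA)\, \| \bfB \|_\fro^2, \qquad \text{equivalently} \qquad \| \bfA\bfB \|_\fro \geq \sigma_r(\bfA) \| \bfB \|_\fro .
\end{equation*}
I will also prove the companion statement with $\bfA$ multiplied on the right. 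This is the form needed there, since $\nabla_{\bfW_3}\loss = \bfE(\bfI_d+\bfW_2\bfW_1)^\top$.

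The argument is column by column. Write $\bfB = [\mathbf{b}_1,\ldots,\mathbf{b}_r]$, so that $\|\bfA\bfB\|_\fro^2 = \sum_j \|\bfA \mathbf{b}_j\|^2$. For any vector $\mathbf{b}$ we have
\begin{equation*}
\|\bfA\mathbf{b}\|^2 = \mathbf{b}^\top \bfA^\top\bfA\, \mathbf{b} \ge \lambda_r(\bfA^\top\bfA)\, \|\mathbf{b}\|^2 = \sigma_r^2(\bfA)\, \|\mathbf{b}\|^2 .
\end{equation*}
This is the Rayleigh quotient bound, and it follows directly from the SVD $\bfA = \mathbf{U}\boldsymbol{\Sigma}\mathbf{V}^\top$: the factor $\mathbf{U}$ preserves norms, and $\|\boldsymbol{\Sigma}\mathbf{V}^\top\mathbf{b}\|^2 \ge \sigma_r^2 \|\mathbf{V}^\top \mathbf{b}\|^2 = \sigma_r^2\|\mathbf{b}\|^2$. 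Summing over $j$ gives the claim.

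For the right-multiplied form, use transposition invariance of the Frobenius norm and of singular values:
\begin{equation*}
\|\bfB\bfA\|_\fro = \|\bfA^\top\bfB^\top\|_\fro \ge \sigma_r(\bfA^\top)\,\|\bfB^\top\|_\fro = \sigma_r(\bfA)\,\|\bfB\|_\fro .
\end{equation*}
Finally, in the application the relevant matrix is $(\bfI_d+\bfW_2\bfW_1)^\top$, which has the same singular values as $\bfI_d+\bfW_2\bfW_1$. Its smallest singular value is bounded below by $1-\|\bfW_2\bfW_1\|$ by Weyl's inequality.

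There is no substantive obstacle. The only care needed is to state the lemma in its homogeneous (squared or unsquared) form and in the orientation actually used.
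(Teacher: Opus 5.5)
Your proof is correct and uses the same column-by-column argument as the paper. The paper's own derivation actually ends with $\|\bfA\bfB\|_{\fro}^2 \ge \sigma_r^2(\bfA)\|\bfB\|_{\fro}^2$, so the printed $\|\bfB\|_{\fro}^2$ is a typo for exactly the homogeneous inequality you prove. Your Rayleigh-quotient step also avoids the paper's division by $\|\mathbf{b}_i\|$, which is undefined for zero columns, and your transposed variant is what the application to $\nabla_{\bfW_3}\loss = \bfE(\bfI_d+\bfW_2\bfW_1)^\top$ actually needs.
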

\begin{proof}
	Let $\mathbf{b}_i$ be the $i$th column of $\bfB$. It then follows that
	\begin{align*}
		\| \bfA \bfB \|_\fro^2 & = \sum_{i=1}^r \| \bfA \mathbf{b}_i \|_2^2 = \sum_{i=1}^r \big\| \bfA \frac{\mathbf{b}_i  }{ \| \mathbf{b}_i \|} \big\|_2^2 \| \mathbf{b}_i \|_2^2 \\
		& \geq \min_{\| \mathbf{u} \| = 1}  \| \bfA \mathbf{u} \|^2 \sum_{i=1}^r  \| \mathbf{b}_i \|_2^2 = \sigma_r^2(\bfA)  \| \bfB \|_\fro^2.
	\end{align*}
    The proof is thus completed. 
\end{proof}

\begin{lemma}\label{apdx.aux.lemma2}
If $x(t) \geq 0,\, \forall t \le T$, and it holds for constant $c_1 \ge 0, c_2 \ge 0, c_3 > 0$ that
\begin{equation*}
	\dt{x} \leq ( c_1 + c_2 \sqrt{t}	 ) e^{-c_3 t} x(t),~\forall t \le T,
\end{equation*}
then we have
\begin{equation*}
    x(t) \leq x(0) e^{M}, \,\forall t \le T
\end{equation*}
where $M := \frac{c_1}{c_3} + \frac{\sqrt{\pi}}{2}c_2 c_3^{-3/2}$.
\end{lemma}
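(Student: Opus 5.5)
The plan is a Grönwall-type argument: divide by $x$, take logarithms, and integrate the resulting time-dependent rate over the whole half-line to get a uniform bound. For $t \le T$ at which $x(t) > 0$, the hypothesis gives
\begin{equation*}
\frac{\dd}{\dd t} \log x(t) \le (c_1 + c_2\sqrt{t})\, e^{-c_3 t}.
\end{equation*}
To sidestep the division by $x$ and any issue with $x$ vanishing, I will instead work with the integrating factor
\begin{equation*}
y(t) := x(t)\exp\Big\{-\int_0^t (c_1 + c_2\sqrt{s})\, e^{-c_3 s}\, \dd s\Big\}.
\end{equation*}
Differentiating and using the hypothesis shows $\dt{y(t)} \le 0$. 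Hence $y(t) \le y(0) = x(0)$, that is,
\begin{equation*}
x(t) \le x(0)\exp\Big\{\int_0^t (c_1 + c_2\sqrt{s})\, e^{-c_3 s}\, \dd s\Big\}, \qquad \forall t \le T.
\end{equation*}
Nonnegativity of $x$ is not even needed for this step, although it is assumed.

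Next, since the integrand is nonnegative, the integral over $[0,t]$ is bounded by the integral over $[0,\infty)$. The two pieces evaluate as follows.
\begin{itemize}
\item The constant term gives $\int_0^\infty c_1 e^{-c_3 s}\,\dd s = c_1/c_3$.
\item For the second term, substitute $u = c_3 s$:
\begin{equation*}
\int_0^\infty \sqrt{s}\, e^{-c_3 s}\,\dd s = c_3^{-3/2}\,\Gamma(3/2) = \frac{\sqrt{\pi}}{2}\, c_3^{-3/2}.
\end{equation*}
\end{itemize}
Summing yields exactly $M = \frac{c_1}{c_3} + \frac{\sqrt{\pi}}{2}c_2 c_3^{-3/2}$, so $x(t) \le x(0)e^M$ for all $t \le T$.

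The only mildly delicate point is the differential-inequality step. It requires $x$ to be differentiable (absolutely continuous) on $[0,T]$. This holds in the application, because $x = \|\bfW_1\|_\fro^2 + \|\bfW_2\|_\fro^2$ solves a polynomial ODE. The Gamma-function evaluation is routine. No real obstacle is expected.
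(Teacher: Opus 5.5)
Your proposal is correct and follows essentially the same route as the paper: a Gr\"onwall argument, enlarging the integral to $[0,\infty)$, and evaluating $\int_0^\infty \sqrt{s}\,e^{-c_3 s}\,\dd s$ through $\Gamma(3/2)$; your integrating factor is a small improvement over the paper's log-derivative, which tacitly needs $x(t)>0$ rather than the stated $x(t)\ge 0$. One minor point: the step that enlarges the integral in the exponent and multiplies through by $x(0)$ does use $x(0)\ge 0$, so the nonnegativity hypothesis is not entirely idle.
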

\begin{proof}
	To start with, since $x(t) > 0,\, \forall t \le T$, it follows
	\begin{equation*}
		\dt{\ln x(t)} = \frac{1}{x(t)} \dt{x(t)} \leq 	( c_1 + c_2 \sqrt{t}	 ) e^{-c_3 t}.
	\end{equation*}
	Integrating both sides, we acquire
	\begin{align*}
		\ln x(t) - \ln x(0) & \leq \int_0^t ( c_1 + c_2 \sqrt{s}) e^{-c_3 s } \dd s  \\
		& \leq \int_0^\infty 	( c_1 + c_2 \sqrt{s}	 ) e^{-c_3 s } \dd s  \\
		& \leq \frac{c_1}{c_3} + c_2 \int_0^\infty  \sqrt{s}	 e^{-c_3 s } \dd s.
	\end{align*}

	Next, notice that
	\begin{equation*}
		\int_0^\infty \sqrt{s} e^{-c_3 s } \dd s \overset{(a)}{=} \frac{1}{c_3^{3/2}} \int_0^\infty \sqrt{u} e^{-u } \dd u \overset{(b)}{=} \frac{1}{c_3^{3/2}}  \Gamma(\frac{3}{2}) \overset{(c)}{=} c_3^{-3/2}  \frac{1}{2}\Gamma(\frac{1}{2}) = \frac{\sqrt{\pi}}{2} c_3^{-3/2}
	\end{equation*}
	where (a) is by change of variable $u = c_3 s $; (b) uses the definition of Gamma function $\Gamma(z) = \int_{0}^\infty t^{z-1} e^{-t} \dd t$; and (c) relies on $\Gamma(z+1) = z \Gamma(z)$ and $\Gamma(1/2) = \sqrt{\pi}$.
	
	Combining these together offers
	\begin{equation*}
		\ln x(t) - \ln x(0) \leq \frac{c_1}{c_3} + \frac{\sqrt{\pi}}{2}c_2 c_3^{-3/2} := M.
	\end{equation*}
	The proof is thus completed.
\end{proof}

\subsection{Extension to more than 3 layers}
\label{apdx:extension-morelayers}
The proof for LLNs of more than 3 layers follows the same strategy as the proceeding two theorems. To avoid redundancy, we provide a proof sketch and omit the step-by-step details. 

For the slow-convergence case (i.e., $K$-layer LNN with the 0:1 shortcut), one can construct a diagonal initialization with $\bfW_1[d,d](0) \in [-0.5, 0]$, and $\bfW_2[d,d](0) = \ldots = \bfW_K[d,d](0) \in (0, 0.5]$, and likewise a rank-deficient diagonal matrix $\bfA$. By following the dynamic analysis in Lemma~\ref{lemma.u=v}, it can be shown that $\bfW_2[d,d](t) = \ldots = \bfW_K[d,d](t),\,\forall t \ge 0$. As a consequence, Lemma~\ref{lemma.w} can be extended to establish that $0 \le \bfW_2[d,d](t) = \ldots = \bfW_K[d,d](t) \le 1 + \bfW_1[d,d](t) \le 1,\,\forall t \ge 0$. To this end, applying the same arguments as in the proof of Theorem~\ref{thm:LB-formal} to the dynamics of $a_d(t) := (1 + \bfW_1[d,d](t)) \prod_{k=2}^K \bfW_k[d,d](t)$ yields the sublinear convergence. 

In contrast, $K$-layer LNN with the 0:$K\!-\!1$ shortcut can be shown to always converge linearly. The core idea is to replace the term $\| \bfW_1(t) \bfW_2(t) \|$ in the original proof with the more generic $\| \prod_{k=1}^{K-1} \bfW_k(t) \|$, and to generalize $\| \bfW_3 \|_\fro$ in Lemma~\ref{lemma.W3} to $\| \bfW_K \|_\fro$. Accordingly, Lemma~\ref{lemma.dW1=dW2} is extended to 
\begin{equation*}
	\dt{\| \bfW_1(t) \|_\fro^2}	= \ldots = \dt{\| \bfW_{K-1}(t) \|_\fro^2}.
\end{equation*}
Finally, with the same steps in the proof of Theorem~\ref{thm:UB-formal}, linear convergence then follows directly upon choosing an appropriate $\delta$. 

\section{Experimental setups}
\label{apdx:experiment-setups}
This appendix offers the detailed setups for reproducing the results reported in the main paper. All our codes are written in python. The LLM pretraining and RL tests are performed on a computing node with 4$\times$A100 GPUs, while the diffusion models are trained on 2$\times$H100 GPUs. 

\subsection{Datasets}
In the following, we provide a brief introduction to the datasets used in our tests. 

\textbf{Colossal Clean Crawled Corpus (C4)~\citep{C4}} is a large-scale English text dataset derived from the April 2019 Common Crawl snapshot consisting of cleaned web text. It was introduced as part of the development of Google's T5 model~\citep{C4}, and has been widely used as a standard pretraining corpus for LLMs due to its scale, diversity, and high quality. The dataset is available online from TensorFlow datasets\footnote{\url{https://www.tensorflow.org/datasets/catalog/c4}}. 

\textbf{ImageNet-1K~\citep{ImageNet}} is a large-scale image classification dataset containing approximately 1.28 million training images and 50,000 validation images annotated across 1,000 object categories. The dataset is a subset of the full ImageNet (ILSVRC-12) and is organized according to the WordNet hierarchy, with images collected from the web and manually labeled. ImageNet-1K has become a standard benchmark for training and evaluating visual recognition models, particularly in large-scale image classification and representation learning. The dataset is available online\footnote{\url{https://image-net.org}} for non-commercial research and/or educational purposes. 

\subsection{Models}
Nest, we elaborate the models adopted in our evaluation. 

\textbf{LLaMA~\citep{llama}} is a family of decoder-only transformer-based large language models introduced by Meta. In this work, LLaMA is implemented using the Hugging Face \texttt{transformers} library, following the official architecture definition\footnote{\url{https://github.com/huggingface/transformers/blob/main/src/transformers/models/llama/modeling_llama.py}}. 

\textbf{Diffusion Transformer (DiT)~\citep{DiT}} is a generative image model that replaces conventional convolutional U-Nets in diffusion models with a vision transformer operating on image patches. Reference implementations of DiT are publicly available\footnote{\url{https://github.com/facebookresearch/DiT}} and released under the CC-BY-NC license.

\textbf{Residual Networks (ResNets)~\citep{ResNet}} are a class of deep convolutional neural networks, characterized by residual connections that enable the effective training of very deep architectures. ResNets have become a foundational backbone for a wide range of tasks. Standard ResNet implementations\footnote{\url{https://github.com/KaimingHe/deep-residual-networks}} are released under permissive MIT license. Our test relies on the variants used in~\citep{GCRL}, where cascaded shortcuts are added to every block of four layers. 

\subsection{Hyperparameters}
We next list the hyperparameters for running the experiments.

\begin{table}[t]
   \caption{Hyperparameters for pre-training LLMs.}
   \label{tab:LLM-hyperparams}
   \begin{center}
        \resizebox{\columnwidth}{!}{
         \begin{tabular}{lcccccccc}
           \toprule
           \multirow{2}{*}{Hyperparameter}  & \multicolumn{4}{c}{FullPT(+ANCRe)}  & \multicolumn{4}{c}{GaLore(+ANCRe)} \\
            &  60M & 130M  & 350M   & 1B & 60M & 130M  & 350M   & 1B \\
           \midrule
           Max seq. len. & \multicolumn{8}{c}{256} \\
           Learning rate & $5\times 10^{-3}$ & $1\times 10^{-3}$ & $1\times 10^{-3}$ & $5\times 10^{-4}$ & $1 \times 10^{-2}$ & $5 \times 10^{-2}$ & $5 \times 10^{-2}$ & $5 \times 10^{-3}$ \\
           Training steps & 10K & 20K & 60K & 100K  & 10K & 20K & 60K & 100K \\
           Warmup steps & 1K & 2K & 6K & 10K & 1K & 2K & 6K & 10K \\
           Global batch size & \multicolumn{8}{c}{512} \\
           Micro Batch size & 256 & 128 & 64 & 16 & 256 & 128 & 64 & 16 \\
           GPUs & 1$\times$A100 & 1$\times$A100 & 4$\times$A100 & 2$\times$H100 & 1$\times$A100 & 1$\times$A100 & 4$\times$A100 & 2$\times$H100 \\
           \bottomrule
         \end{tabular}
         }
   \end{center}
\end{table}

\textbf{Pre-training of LLaMAs.} The experimental setup follows~\citep{ReLoRA,GaLore}, with all hyperparameters except the learning rate fixed to their default values. We use the AdamW optimizer~\citep{AdamW} with gradient accumulation. The learning rate is tuned over $\{5\times10^{-4}, 1\times10^{-3}, 5\times10^{-3}, 1\times10^{-2}, 5\times10^{-2}\}$; see Table~\ref{tab:LLM-hyperparams} for further details. For GaLore, the rank, update projection gap, and scaling factor are set to their default values of 128, 200, and 0.25, respectively. For ANCRe, the softmax temperature $\tau$ in~\eqref{eq:softmax} is fixed to 0.01 without additional tuning.

\textbf{Pre-training of DiTs.} The experimental settings are from~\citep{DiT} without any modifications, including the learning rate. Specifically, we use images of size $256\times256$, a global batch size of 256, a learning rate of $1\times10^{-4}$, a patch size of $2\times 2$, and train for 80 epochs (i.e., 400K iterations) using BF16 precision. The ANCRe temperature is set to $\tau = 0.1$ for both DiT-S/2 and DiT-B/2. All experiments are conducted on two H100 GPUs.

\begin{table}[t]
   \caption{ANCRe temperature $\tau$ for RL tests.}
   \label{tab:RL-temperature}
   \begin{center}
         \begin{tabular}{lcc}
           \toprule
           Environment & ResNet-16  & ResNet-64 \\
           \midrule
           Humanoid & 0.01 & 0.01 \\
           Ant Big Maze & 0.01 & 0.01 \\
           Arm Push Hard & 0.1 & 0.1 \\
           Arm Binpick Hard & 0.01 & 0.1 \\
           \bottomrule
         \end{tabular}
   \end{center}
\end{table}

\textbf{RL with ResNets.} The numerical experiments follow the experimental setup of~\citep{GCRL}. Specifically, the episode length is set to 1000, and training is conducted for 100M environment steps. The number of training epochs and the batch size are 100 and 512, respectively, with 512 parallel environments. The learning rate is fixed at $3 \times 10^{-4}$. Both the actor and critic networks use a hidden dimension of 256, while the network width is selected from $\{16, 64\}$. For ANCRe, the softmax temperature $\tau$ is coarsely tuned from $\{ 10^{-3}, 10^{-2}, 10^{-1} \}$, which can be found in Table~\ref{tab:RL-temperature}.

\end{document}